\documentclass{article}

\usepackage{microtype}
\usepackage{graphicx}
\usepackage{subcaption}
\usepackage{booktabs} % for professional tables
\usepackage{amssymb}
\usepackage{bm}
\usepackage{multirow}

\usepackage{hyperref}

\usepackage[accepted]{icml2026}

\usepackage{amsmath}
\usepackage{amssymb}
\usepackage{mathtools}
\usepackage{amsthm}

\usepackage[capitalize,noabbrev]{cleveref}

\theoremstyle{plain}
\newtheorem{theorem}{Theorem}[section]
\newtheorem{proposition}[theorem]{Proposition}
\newtheorem{lemma}[theorem]{Lemma}

\theoremstyle{definition}
\newtheorem{definition}[theorem]{Definition}
\newtheorem{assumption}[theorem]{Assumption}
\theoremstyle{remark}

\graphicspath{{icml2026/}}
\usepackage[textsize=tiny]{todonotes}

\icmltitlerunning{SCOUT: Cyclic Causal Discovery Under Soft Interventions with Unknown Targets}

\newcommand\id{\mathbf{id}}
\newcommand\U{\mathbf{U}}
\newcommand\I{\mathbf{I}}

\begin{document}

\twocolumn[
  \icmltitle{SCOUT: Cyclic Causal Discovery Under Soft Interventions with Unknown Targets}

  % It is OKAY to include author information, even for blind submissions: the
  % style file will automatically remove it for you unless you've provided
  % the [accepted] option to the icml2026 package.

  % List of affiliations: The first argument should be a (short) identifier you
  % will use later to specify author affiliations Academic affiliations
  % should list Department, University, City, Region, Country Industry
  % affiliations should list Company, City, Region, Country

  % You can specify symbols, otherwise they are numbered in order. Ideally, you
  % should not use this facility. Affiliations will be numbered in order of
  % appearance and this is the preferred way.
  \icmlsetsymbol{equal}{*}

  \begin{icmlauthorlist}
    \icmlauthor{Alpar Turkoglu}{sch}
    \icmlauthor{Muralikrishnna G. Sethuraman}{sch}
    \icmlauthor{Faramarz Fekri}{sch}

    %\icmlauthor{}{sch}
    %\icmlauthor{}{sch}
  \end{icmlauthorlist}

  \icmlaffiliation{sch}{School of Electrical and Computer Engineering, Georgia Institute of Technology}

  \icmlcorrespondingauthor{Alpar Turkoglu}{aturkoglu3@gatech.edu}
  \icmlcorrespondingauthor{Muralikrishnna G. Sethuraman}{muralikgs@gatech.edu}

  % You may provide any keywords that you find helpful for describing your
  % paper; these are used to populate the "keywords" metadata in the PDF but
  % will not be shown in the document
  \icmlkeywords{Machine Learning, ICML}

  \vskip 0.3in
]

% this must go after the closing bracket ] following \twocolumn[ ...

% This command actually creates the footnote in the first column listing the
% affiliations and the copyright notice. The command takes one argument, which
% is text to display at the start of the footnote. The \icmlEqualContribution
% command is standard text for equal contribution. Remove it (just {}) if you
% do not need this facility.

% Use ONE of the following lines. DO NOT remove the command.
% If you have no special notice, KEEP empty braces:
\printAffiliationsAndNotice{}  % no special notice (required even if empty)
% Or, if applicable, use the standard equal contribution text:
% \printAffiliationsAndNotice{\icmlEqualContribution}

\begin{abstract}
Learning causal relationships between variables from data is a fundamental research area with many applications across disciplines. Most of the existing causal discovery algorithms rely on the assumptions that (i) the underlying system is acyclic, (ii) the exogenous noise variables are Gaussian, and (iii) that the intervention targets for the data generating experiments are known. While these assumptions simplify the analysis, they are violated in real-life systems. Most existing methods that address these issues either assume the underlying model is linear or are constrained to operate in limited interventional settings. To that end, we propose SCOUT, a novel causal discovery framework to learn nonlinear causal cyclic relationships from soft interventional data with unknown targets. Our main approach maximizes the data log-likelihood to recover the graph structure, using two normalizing-flow architectures—contractive residual flows and neural spline flows. By conducting experiments on synthetic and real-world data, we show that SCOUT outperforms state-of-the-art methods in both causal graph and unknown target recovery across various interventional and noise settings.  
\end{abstract}

\section{Introduction}
\label{sec:introduction}

% What is causal discovery and why do we care
Identifying cause-effect relations among variables is a fundamental challenge across many scientific fields. Causal models provide a mechanistic understanding of underlying systems, enabling us to predict how they behave under previously unseen perturbations. Typically, causal interactions are encoded as a directed graph (DG), reducing the problem of recovering causal effects to identifying the structure of this graph.

% Types of CD methods & constraint-based
Causal discovery methods can be broadly categorized into three classes: (i) constraint-based, (ii) score-based, and (iii) hybrid methods.
\emph{Constraint-based} methods, such as the PC algorithm~\citep{book, JMLR:v16:triantafillou15a, HeinzeDemlPetersMeinshausen+2018}, exploit the conditional independence relations implied by the underlying causal graph and aim to recover a graph that is consistent with the observed independencies. These methods typically suffer from scalability issues, as they require testing a large number of conditional independence relations, which grows exponentially with the number of variables in the graph.

% Score-based, NOTEARS, and hybrid methods
\emph{Score-based} methods, such as GES~\citep{Meek1997, 10.5555/2503308.2503320}, instead formulate causal discovery as an optimization problem, seeking to maximize a penalized score function—such as the Bayesian Information Criterion (BIC)—over the space of candidate graphs. Since the number of possible graphs grows super-exponentially with the number of nodes, these methods generally rely on greedy or heuristic search strategies to remain computationally tractable. A major recent breakthrough was introduced by \citet{Zheng2018DAGsWN}, who proposed a smooth characterization of the acyclicity constraint, enabling optimization over continuous adjacency matrices while restricting the solution to directed acyclic graphs (DAGs). This idea has inspired numerous extensions~\citep{pmlr-v97-yu19a, 10.5555/3495724.3497230, doi:10.1137/1.9781611977172.48, pmlr-v108-zheng20a, Lee2019ScalingSL, 10.5555/3495724.3497558} that cast causal discovery as a continuous optimization problem under various modeling assumptions and constraints.
Finally, \emph{hybrid methods}~\citep{Tsamardinos.etal.ML.2006, Solus2017ConsistencyGF, NIPS2017_275d7fb2} combine elements of both constraint-based and score-based approaches, typically by incorporating conditional independence information into a scoring framework or by using independence tests to guide and prune the search over graph structures.

% What's lacking in current methods 
With a few notable exceptions~\citep{JMLR:v13:hyttinen12a, 10.5555/2074284.2074338, 10.5555/3023638.3023682, 10.1214/21-AOS2064}, most existing causal discovery methods assume that the underlying causal graph is acyclic and operate purely in the observational regime. While these assumptions substantially simplify the search space and facilitate theoretical analysis, they are often unrealistic in real-world systems, where feedback mechanisms are common~\citep{doi:10.1126/science.1105809, 8c61cee07897403a89be610778ae0ebd}. Moreover, enforcing acyclicity typically increases the computational complexity of both optimization- and search-based procedures.

Recent advances in experimental sciences, particularly in biology, have enabled the collection of large-scale interventional datasets. For instance, technological developments in biological assays building upon CRISPR/Cas9 and single-cell RNA sequencing~\citep{DIXIT20161853} now make it possible to probe a large number of interventions in gene regulatory networks. This, in turn, creates a growing need for causal discovery algorithms that can effectively leverage multiple and heterogeneous interventional contexts. However, even recent works on cyclic causal discovery fall short of full generality, as they typically rely on restrictive assumptions such as linearity~\citep{JMLR:v13:hyttinen12a, Rothenhusler2015BACKSHIFTLC} or specific noise models (e.g., Gaussianity), and often consider only surgical (hard) interventions~\citep{pmlr-v206-sethuraman23a, sethuraman2025differentiable}. These limitations substantially restrict their applicability to complex real-world systems.

To address these challenges, we propose a novel causal discovery framework, \textsc{SCOUT}, that simultaneously accommodates \emph{nonlinear mechanisms, directed cycles, non-Gaussian additive noise, and soft interventional experiments with unknown targets}, thereby substantially broadening the scope of causal discovery in realistic experimental settings.
\subsection{Related Works}

% Causal discovery for cyclic graphs
\paragraph{Cyclic Graphs. }
A range of methods have been proposed to address feedback loops in causal graphs. Early work by \citet{10.5555/2074284.2074338} extended constraint-based approaches from DAGs to graphs with directed cycles, while \citet{Lacerda2008DiscoveringCC} generalized ICA-based causal discovery to linear cyclic models under non-Gaussian noise. More recent efforts have focused on score-based formulations for learning cyclic structures \citep{pmlr-v124-huetter20a, pmlr-v124-amendola20a, 10.5555/3023638.3023682, 10.1214/17-AOS1602}, with some methods further leveraging interventional data to improve identifiability \citep{JMLR:v13:hyttinen12a, pmlr-v124-huetter20a}. Building on these ideas, \citet{pmlr-v206-sethuraman23a} proposed a differentiable, likelihood-based framework for learning nonlinear cyclic graphs that avoids explicit acyclicity constraints by directly optimizing the data likelihood. Subsequent extensions broaden this framework to account for unmeasured (latent) variables~\citep{sethuraman2025differentiable}.

% Existing works on soft-interventions
\paragraph{Soft interventions.} There are various approaches for causal discovery under soft interventions. Greedy Interventional Equivalence Search (GIES) \cite{10.5555/2503308.2503320} extends greedy equivalence search for DAGs to interventional data. It is a score-based algorithm that operates under known multiple intervention targets for each experiment. IGSP \cite{NIPS2017_275d7fb2} instead uses permutation-based causal inference, which is non-parametric, so it doesn't rely on the Gaussian assumption. The Joint causal inference framework (JCI) \cite{Mooij2016JointCI} extends the ideas of classical constraint-based algorithms for interventions by adding context variables to the graph. Differentiable Causal Discovery from Interventions (DCDI) \cite{10.5555/3495724.3497558} uses a continuous-optimization approach in the interventional setting to learn the ground-truth DAG by maximizing the likelihood across all datasets. Backshift \cite{Rothenhusler2015BACKSHIFTLC} is an algorithm for linear causal cyclic models that uses different shift interventions for structure learning. \citet{NEURIPS2019_c3d96fbd} characterize causal graph equivalence under soft interventions in the presence of latent variables.

% Existing works that can handle unknown targets
\paragraph{Unknown targets. } Several methods have been proposed in the literature to estimate the graph structure of a causal system when the intervention targets are unknown. UT-IGSP \cite{squires2020permutation} is a version of IGSP that has been extended to work with unknown targets. More recent methods developed by \citet{pmlr-v180-varici22a} and \citet{yang2024learning}, detect unknown targets by exploiting sparse changes in the precision matrices or in the noise distributions, respectively. Bayesian Causal Discovery with Unknown Interventions (BaCaDI) \cite{hgele2022bacadi} uses a variational inference approach to learn the DAG structure and the intervention targets. JCI \cite{Mooij2016JointCI}, DCDI \cite{10.5555/3495724.3497558}, and BackShift \cite{Rothenhusler2015BACKSHIFTLC} also estimate the unknown intervention targets while recovering the causal graph structure. Finally, \citet{NEURIPS2020_6cd9313e} study causal discovery under soft interventions with unknown targets while allowing for latent confounding.

\subsection{Contribution}
In this work, we address four major challenges in causal discovery: \emph{directed cycles}, \emph{nonlinearity}, \emph{non-Gaussian exogenous noise}, and \emph{soft interventions with unknown targets}. Our main contributions can be summarized as follows:
\begin{itemize}
    \item We propose SCOUT, a novel framework for causal discovery that utilizes the normalizing flows architectures of contractive residual flow and neural spline flows to learn nonlinear cyclic causal relationships under non-Gaussian noise from soft interventional data with unknown targets, while simultaneously inferring the intervention targets.
    \item We prove that exact maximization of the proposed score function identifies the interventional equivalence class of the ground-truth graph.
    \item We perform extensive experiments, benchmarking SCOUT against state-of-the-art causal discovery methods on both synthetic and real-world datasets.
\end{itemize}

\subsection{Organization}
The remainder of the paper is organized as follows: Section~\ref{sec:problem-setup} describes the problem setup. Section~\ref{sec:methodology} introduces SCOUT, our framework for nonlinear cyclic causal discovery under soft interventions with unknown targets. Section~\ref{sec:experiments} presents experimental results on synthetic and real-world datasets. Finally, we conclude the paper with Section~\ref{sec:discussion}. 

\section{Problem Setup}
\label{sec:problem-setup}

% Notational conventions

\subsection{Structural Equations for Cyclic Causal Graphs}

Let $\mathcal{G} = (V, E)$ be a cyclic causal graph, where $V$ denotes the vertex set \{1, \ldots, d\} and $E$ denotes the directed edges of the form $i \to j$. Each node $i \in V$ is associated with a random variable $X_i$, and each edge $i \to j \in E$ represents a direct causal relation from random variable $X_i$ to $X_j$. Following the framework proposed by \citet{bollen1989structural} and \citet{Pearl_2009}, we use the structural equation model (SEM) to represent our system, that is:
\begin{equation}
    X_i = f_i(\boldsymbol{X}_{pa_{\mathcal{G}}(i)}) +\epsilon_i, \quad i=1, \ldots,d,  
    \label{eq:individual_sem}
\end{equation}
where ${pa_{\mathcal{G}}(i)} := \{j \in V : j \to i \in E \space \text{ and } \space j\ne i \} $ denotes the \emph{parent set} of $X_i$ in $\mathcal{G}$. $\boldsymbol{X}_{pa_{\mathcal{G}}(i)}$ is the random vector consisting of the collection of these parents. The function $f_i$ represents the \emph{causal mechanism} encoding the functional relationships between the random variable $X_i$ and its parents. $\epsilon_i$ is the \emph{exogenous noise} variable accounting for the stochastic nature of our system and can be non-Gaussian in our model. Note that we exclude self-loops in this model to avoid dealing with extra identifiability issues~\cite{10.1214/21-AOS2064, JMLR:v13:hyttinen12a}. 

For convenience, \eqref{eq:individual_sem} can be combined over all nodes $i \in \{1, \ldots, d\}$ into a vectorized form by collecting the causal mechanism into a joint function $\boldsymbol{f}=(f_1, \ldots, f_d)$ in the following way:
\begin{equation}
    \boldsymbol{X} = \boldsymbol{f}(\boldsymbol{X}) + \boldsymbol{\epsilon}. \label{eq:sem}
\end{equation}
Note that the SEM in \eqref{eq:sem} induces a probability distribution over the exogenous noise variables $p_E(\boldsymbol{}{\epsilon})$. We assume the system is free of confounders (causal sufficiency), as a result the exogenous noise variables are independent of each other.

Due to the (potential) presence of cycles in the SEM, the observations $\boldsymbol{X}$ can be thought of as a snapshot of a dynamical process under equilibrium conditions. 
For a random draw of $\boldsymbol{\epsilon}$, $\boldsymbol{X}$ is the solution to \eqref{eq:sem}. We also assume that there is a fixed, unique solution for each draw of $\bm{\epsilon}$, which allows us to define an invertible \emph{forward map} $\boldsymbol{X} \mapsto \boldsymbol{\epsilon} = (\mathbf{id} - \boldsymbol{f})(\boldsymbol{X})$, where $\id$ represents the identity transformation. A more detailed discussion of solvability for cyclic systems under equilibrium can be found in Appendix \ref{sec:solvability}. 

Under these assumptions, the probability density function for $\boldsymbol{X}$ is well-defined and can be written as:
\begin{equation}
p_{\mathcal{G}}(\mathbf{X})
= p_{E}((\mathbf{id} - \boldsymbol{f})(\boldsymbol{X})) \, \bigl|\det(\mathbf{J}_{(\mathbf{id} - \boldsymbol{f})}(\boldsymbol{X}))\bigr|,
\label{eq:change_of_vars}
\end{equation}
where $\mathbf{J}_{(\mathbf{id} - \boldsymbol{f})}(\boldsymbol{X})$ denotes the Jacobian matrix of the function $(\mathbf{id} - \boldsymbol{f})$ evaluated at $\boldsymbol{X}$.

\subsection{Modeling Interventions}
One important feature of causal graphs is that they can be used to infer the model's behavior under interventions. In this work, we focus on imperfect interventions, also known as soft interventions. In contrast to surgical (hard) interventions, the connectivity of the intervened node with its parents is preserved under these types of interventions. Still, the causal mechanism or noise characteristics may be altered depending on the type of soft intervention.

Given a set of of intervened nodes $\mathcal{I} \subseteq V$ the SEM in \eqref{eq:individual_sem} takes the following form:
\begin{equation}
X_i =
\begin{cases}
\tilde{f}_i^{(k)}(\boldsymbol{X}_{pa_{\mathcal{G}}(i)}) +\tilde{\epsilon}_i^{(k)}, & \text{if } i \in I_k, \\[6pt]
f_i(\boldsymbol{X}_{pa_{\mathcal{G}}(i)}) +\epsilon_i, & \text{if } i \notin I_k,
\end{cases}
\label{eq:individual_interventional_sem_main}
\end{equation}
where $\tilde{f}_i^{(k)}$, $\tilde{\epsilon}_i^{(k)}$ denote the intervened causal mechanism and intervened noise variable for the k-th experiment respectively.

We consider $K$ interventional experiments where $I_k \in \mathcal{I}$ represent the interventional targets for the $k$-th experiment. Similar to the observational setting, we can combine \eqref{eq:individual_interventional_sem_main} over all the nodes $i \in V$ to obtain the following vectorized form:
% We can again use a vectorized approach similar to equation \ref{eq:sem} for equation \ref{eq:individual_interventional_sem} in the following form:
\begin{equation}
    \boldsymbol{X} = \mathbf{U}_k (\boldsymbol{f}(\boldsymbol{X}) + \boldsymbol{\epsilon}) + (\mathbf{I}_d-\mathbf{U}_k) (\boldsymbol{\tilde{f}}(\boldsymbol{X}) + \boldsymbol{\tilde{\epsilon}}),\label{eq:interventional_sem}
\end{equation}
where $\mathbf{U}_k \in \{0,1\}^{d\times d}$ is a diagonal matrix indicating which variables are observed in the $k$-th experiment, in other words, $\bigl(\mathbf{U}_k\bigr)_{ii} = 1 \text{ if } i \notin I_k \text{ and }
\bigl(\mathbf{U}_k\bigr)_{ii} = 0 \text{ if } i \in I_k.$ $\mathbf{I}_d$ is the $d \times d$ identity matrix. In this work, we treat $\mathbf{U}_k$ as an unknown parameter to be learned during training, given the distinct experiment indices $k$. 

For each interventional experiment, the forward map $\boldsymbol{X} \mapsto \mathbf{U}_k \boldsymbol{\epsilon}+ (\mathbf{I}_d-\mathbf{U}_k) \boldsymbol{\tilde{\epsilon}}$ is given by
\begin{equation}\label{eq:forward-map-intervention}
\Big(\mathbf{id} - \mathbf{U}_k\boldsymbol{f}- (\mathbf{I}_d-\mathbf{U}_k)\boldsymbol{\tilde{f}}\Big)(\boldsymbol{X}).
\end{equation}
We now make the following assumption regarding the stability of the interventional experiments.

\begin{assumption}[Interventional solvability]
For each intervention $I_k$ considered in this work, the forward map given by~\eqref{eq:forward-map-intervention} is invertible.
\label{assumption: Interventional stability}
\end{assumption}

Let $p_{\widetilde{E}}$ denote the probability density of $\tilde{\bm{\epsilon}}$. The probability distribution of $\bm{X}$ under the $k$-th experiment can be written as:
% \begin{equation}
% p_{I_k,\mathcal{G}}(\mathbf{X})
% = p_{E}((\mathbf{id}-\boldsymbol{f}^{(I_k)})(\boldsymbol{X})) \, \bigl|\det(\mathbf{J}_{(\mathbf{id}-\boldsymbol{f}^{(I_k)})}(\boldsymbol{X}))\bigr|,\label{eq:interventional_change_of_vars}
% \end{equation}
\begin{multline}
    p_{I_k,\mathcal{G}}(\mathbf{X}) = p_E\Big(\big[(\id - \mathbf{U}_k\bm{f})(\bm{X})\big]_{\mathcal{U}_k}\Big)\\ \qquad\qquad \times p_{\widetilde{E}}\Big(\big[(\id - \tilde{\mathbf{U}}_k\tilde{\bm{f}})(\bm{X})\big]_{I_k}\Big)\\ \times \bigl|\det(\mathbf{J}_{(\mathbf{id}-\boldsymbol{f}^{(I_k)})}(\boldsymbol{X}))\bigr|,\label{eq:interventional_change_of_vars}
\end{multline}
where $\boldsymbol{f}^{(I_k)} \triangleq ( \mathbf{U}_k\boldsymbol{f}+ (\mathbf{I}_d-\mathbf{U}_k)\boldsymbol{\tilde{f}})$, $\mathcal{U}_k \triangleq V \setminus I_k$, is the combined intervened causal mechanism and $\tilde{\mathbf{U}}_k = \mathbf{I} - \mathbf{U}_k$. 

We deal with three different types of soft interventions:
\begin{itemize}
    \item \textbf{Shift Interventions}: The intervened noise variable is obtained by shifting the mean of the observed noise variable by a finite number, i.e., $\tilde{\epsilon}_i = \tilde\mu_i + \epsilon_i$. 
    
    \item \textbf{Scale Interventions}: The intervened noise variable is obtained by scaling the variance of the observed noise variable by a finite number, i.e., $\tilde\epsilon_i = \tilde\sigma_i^2\epsilon_i$. 
    
    \item \textbf{Noisy Function Interventions}: The intervened causal mechanism is obtained by changing the structural parameters of the observed causal mechanism, provided that Assumption \ref{assumption: Interventional stability} is satisfied. . 
\end{itemize}

% combined interventional causal mechanism

Given data obtained from $K$ interventional experiments, \emph{our goal in this work is to learn the structure of the cyclic causal graph as well as the interventional targets by maximizing the log-likelihood of the data.}

\section{SCOUT: Cyclic Causal Discovery Under Soft Interventions with Unknown Targets}
\label{sec:methodology}

\subsection{Using Normalizing Flows for Causal Learning under Non-Gaussian Noise}

\emph{Normalizing flows} are a class of generative models that are capable of transforming a simple distribution (standard Gaussian) to something more complex through a series of bijective transformations~\citep{papamakarios2021normalizing}. Within our framework, normalizing flows are employed twice: first we use \emph{contractive residual flows} \cite{pmlr-v97-behrmann19a} to obtain the noise component $\boldsymbol{\epsilon}$ in the SEM of \eqref{eq:sem}, then we use piecewise rational quadratic CDF transformation \cite{NEURIPS2019_7ac71d43} to transform this noise component $\boldsymbol{\epsilon}$ into a standard normal gaussian random vector $\boldsymbol{z}$ (allowing us to model more complex families of exogenous noise distributions).  

\subsubsection{Modeling the causal function}
As mentioned in the Section~\ref{sec:problem-setup}, we assume that the forward mapping $\boldsymbol{X} \mapsto \boldsymbol{\epsilon} = (\mathbf{id} - \boldsymbol{f})(\boldsymbol{X})$ is invertible. According to Banach's fixed point, we can satisfy this condition by restricting the function $\boldsymbol{f}$ to be \emph{contractive} (see Appendix \ref{sec:solvability} for details).. 
A function \( \boldsymbol{g} : \mathbb{R}^d \to \mathbb{R}^d \) is said to be contractive if there exists a constant \( L < 1 \) such that: \[
\| \boldsymbol{g}(\boldsymbol{x}) - \boldsymbol{g}(\boldsymbol{y}) \| \leq L \, \| \boldsymbol{x} - \boldsymbol{y} \|
\quad \text{for all } x, y \in \mathbb{R}^d.
\] We use neural networks to parametrize the function $\boldsymbol{f}$, and the contractivity assumption can be conserved with spectral normalization of the network weights during each iteration. The adjancencies of the causal graph $\mathcal{G}$ can be introduced explicitly as a binary matrix $\boldsymbol{M}^{\mathcal{G}} \in \{0,1\}^{d \times d}$, with $1$ representing the presence an edge. As a result, the causal mechanism can be shown as:
\begin{equation}
[\boldsymbol{f_\theta}(\boldsymbol{x})]_i = \bigl[\mathrm{NN}_{\boldsymbol{\theta}}\bigl(\boldsymbol{M}^{\mathcal{G}}_{*,i} \odot \boldsymbol{x}\bigr)\bigr]_i ,
\label{eq: causal mechanism}
\end{equation}
where \( \mathrm{NN}_{\boldsymbol{\theta}} \) denotes a fully connected neural network parameterized by \( \boldsymbol{\theta} \),
\( \odot \) denotes the Hadamard product, and \( \boldsymbol{M}^{\mathcal{G}}_{*,i} \) is the \( i \)-th column of \( \boldsymbol{M}^{\mathcal{G}} \). The entries of $\boldsymbol{M}^{\mathcal{G}}$ are sampled from the Gumbel-softmax distribution $\boldsymbol{M}_\phi$ \cite{jang2017categorical} and the parameters $\phi$ are updated during training using straight-through gradient estimation. $ \lambda_\mathcal{G} \, \mathbb{E}_{\boldsymbol{M}^\mathcal{G} \sim \boldsymbol{M}_\phi}\!\left[ \, \lVert M \rVert_1 \, \right]$ will be used as the regularizer in the loss function to favor a sparse adjacency matrix. 

% This idea can be extended to the noisy function interventions if we restrict $\boldsymbol{\tilde{f}}$ to be contractive. The details of how to pick $\boldsymbol{\tilde{f}}$ and a detailed proof of how the overall causal function remains contractive can be found in the Appendix.

% We also need to talk about how our method also models $\tilde{f}$. That is, we use another set of MLPs to model $\tilde{f}$ while both share the same graph adjacency. 

% While modeling $\tilde{f}$, care needs to be taken to ensure that the overall causal mechanism for an intervnetion $I_k$, i.e., $f^{(I_k)}$, remains contractractive. 

% If $\boldsymbol{\tilde{f}}$ is chosen under certain restrictions to preserve the contractivity of the overall causal mechanism of the system $\boldsymbol{f}^{(I_k)}$ for every experiment (see the Appendix for the restrictions and the proof), this idea can be extended to the noisy function interventions. 
In the case of soft interventions, we model $\boldsymbol{\tilde{f}}$ similar to $\boldsymbol{f}$ with another set of neural network parameters \( \boldsymbol{\tilde{\theta}} \) while preserving the same adjacency matrix $\boldsymbol{M}^{\mathcal{G}}$ that is:
\begin{equation}
[\boldsymbol{\tilde{f}}_{\tilde{\theta}}(\boldsymbol{x})]_i = \bigl[\mathrm{NN}_{\boldsymbol{\tilde{\theta}}}\bigl(\boldsymbol{M}^{\mathcal{G}}_{*,i} \odot \boldsymbol{x}\bigr)\bigr]_i .
\label{eq: noisy function causal mechanism}
\end{equation}

While training the model, we rescale the weights of neural network layers of $\boldsymbol{f}$ and $\boldsymbol{\tilde{f}}$ to ensure they remain contractive.

\subsubsection{Transforming the Non-Gaussian Noise}
Under the assumption that the noise vector $\boldsymbol{\epsilon}$ may be non-Gaussian, we require a tractable and efficient way to compute the noise distribution $p_{E}$ from \eqref{eq:change_of_vars}. This can again be achieved using normalizing flows and assuming an invertible forward map $\boldsymbol{\epsilon}\mapsto \boldsymbol{z} = \boldsymbol{g}(\boldsymbol{\epsilon})$. Under this assumption, the probability density function for $\boldsymbol{\epsilon}$ is well-defined and can be written as:
\begin{equation}
p_{E}(\boldsymbol{\epsilon})
= p_{Z}(\boldsymbol{g}(\boldsymbol{\epsilon})) \, \bigl|\det(\mathbf{J}_{\boldsymbol{g}}(\boldsymbol{\epsilon}))\bigr|,
\label{eq:change_of_vars_for_noise}
\end{equation}
where $p_Z(\bm{z}) = \mathcal{N}(\bm{z} \mid \bm{0}, \bm{I})$ denotes the probability density function of standard normal distribution and  $\mathbf{J}_{\boldsymbol{g}}(\boldsymbol{\epsilon})$ denotes the Jacobian matrix of the function $\boldsymbol{g}$ evaluated at $\boldsymbol{\epsilon}$. 

Since, in Section~\ref{sec:problem-setup}, we assumed our causal system is free of confounders, the noise samples will be independent. To preserve the independence assumption, we pick $z_i = g_i(\epsilon_i)$. This also simplifies the Jacobian $\mathbf{J}_{\boldsymbol{g}}(\boldsymbol{\epsilon})$ to a simple diagonal matrix. We restrict $\mathbf{g}$ to be a piecewise rational quadratic CDF because of its expressivity in a wide family of distributions while being invertible and yielding a tractable Jacobian determinant \cite{NEURIPS2019_7ac71d43}. The map $\bm{g}: \bm{\epsilon} \mapsto \bm{z}$ is modeled using neural networks. 

For the noise vector under interventions, $\tilde{\bm{\epsilon}}$, we define another forward function $\boldsymbol{\tilde{g}}$ mapping it to standard Gaussian random variable. The overall transformation under intervention $I_k$ is given by:
$\boldsymbol{z^{(I_k)}} = \U_k \boldsymbol{g}(\boldsymbol{\epsilon})+(\I-\U_k)\boldsymbol{\tilde{g}}(\boldsymbol{\tilde{\epsilon}})$.

\subsection{Finding Unknown Intervention Targets}

In order to identify the unknown interventional targets for each setting $I_k \in \mathcal{I}$, we define an interventional target matrix
% We use a similar approach to find the adjacency matrix of the graph $\mathcal{G}$ for determining intervention targets $I_k \in \mathcal{I}$ of each experiment $k$. We create the $\mathbf{U}_k$ of equation \ref{eq:interventional_sem} from an intervention target mask 
$\bm{T}^\mathcal{I} \in \{0,1\}^{K \times d}$ where each row denotes the specific experiment and each column depicts the nodes in the graph. $(\bm{T})_{kj}^\mathcal{I} = 0$ indicates that $X_j$ is intervened on in the $k$-th interventional experiment. The entries of $\bm{T}^\mathcal{I}$ are sampled 
using Gumbel-softmax distribution $\boldsymbol{T}_\psi$. Similar to graph adjacency learning, the parameters $\psi$ are updated during training using straight-through gradient estimation. We also introduce another regularizer favoring sparse intervention targets calculated as  $ \lambda_\mathcal{I} \, \mathbb{E}_{\boldsymbol{T}^\mathcal{I} \sim \boldsymbol{T}_\psi}\!\left[ \, \lVert \bm{T}^\mathcal{I} \rVert_1 \, \right]$. 

\subsection{Computing the log-determinant of the Jacobian}

The computation of the log-determinant of the Jacobian term  log$\bigl|\det(\mathbf{J}_{(\mathbf{id}-\boldsymbol{f}^{(I_k)})}(\boldsymbol{X}))\bigr|$ is a significant challenge. To address this issue, we use the unbiased estimator introduced by \citet{pmlr-v97-behrmann19a}, which is based on the following power series expansion:
\begin{equation}
\begin{aligned}
\log\bigl|\det \mathbf{J}_{(\mathbf{id}-\boldsymbol{f}^{(I_k)})}(\boldsymbol{X})\bigr|
&= \log\bigl|\det\!\bigl(\mathbf{I}-\mathbf{J}_{(\boldsymbol{f}^{(I_k)})}(\boldsymbol{X})\bigr)\bigr| \\
&= -\sum_{m=1}^{\infty} \frac{1}{m}\,\operatorname{Tr}\!\left\{\mathbf{J}_{(\boldsymbol{f}^{(I_k)})}^{\,m}(\boldsymbol{X})\right\},\label{eq: power_series}
\end{aligned}
\end{equation}
where $\mathbf{I} \in \mathbb{R}^{d\times d}$ denotes the identity matrix. The contractivity of $\boldsymbol{f}^{(I_k)}$ guarantees the convergence of the above series. The trace term can be further simplified using the Hutchinson trace estimator:
\begin{equation}
\begin{aligned}
\operatorname{Tr}\!\left\{\mathbf{J}_{(\boldsymbol{f}^{(I_k)})}^{\,m}(\boldsymbol{X})\right\}
&= \mathbb{E}_{\mathbf{w}}\!\left[\, \mathbf{w}^{\top} \mathbf{J}_{(\boldsymbol{f}^{(I_k)})}^{\,m}(\boldsymbol{X})\, \mathbf{w} \right], \label{eq:trace_est}
\end{aligned}
\end{equation}
where $\mathbf{w} \sim \mathcal{N}(\mathbf{0}, \mathbf{I}).$ In practice, the above power series can be evaluated by truncating it to a finite number of terms. However, this truncation causes the estimator to be biased; to make it unbiased, we follow the method proposed by \citet{NEURIPS2019_5d0d5594}. We truncate the series to a random cut-off $n \sim p(N)$, where $p$ is a probability distribution over natural numbers $\mathbb{N}$. In this work, we pick $p$ to be the Poisson distribution $n \sim \text{Poi}(N)$, where we treat $N$ as a hyperparameter. Finally, each term in the series is reweighted to obtain the following estimator for the log-determinant of the Jacobian: 
\begin{equation}
    \begin{aligned}
\log\bigl|\det \mathbf{J}_{(\mathbf{id}-\boldsymbol{f}^{(I_k)})}(\boldsymbol{X})\bigr|
&= -\,\mathbb{E}_{n,\mathbf{w}}\!\left[
    \sum_{m=1}^{n}
    \frac{\mathbf{w}^{\top} \mathbf{J}_{(\boldsymbol{f}^{(I_k)})}^{\,m}\,\mathbf{w}}
         {\,m \cdot \mathbb{P}(N \ge k)\,}
  \right].
\end{aligned}\label{eq: final_estimator}
\end{equation}

\subsection{The Score Function}\label{sec:theorem}
Our primary objective in this work is to determine the parameters of the SEM, specifically the causal graph structure, causal mechanism, and intervention targets for each experiment. To that end, as in previous works, we use regularized log-likelihoods as the score function to be maximized. Given a candidate graph $\mathcal{G}$ and a set of interventional targets $\mathcal{I}$, the score function can be written as:
\begin{equation}
\begin{aligned}
    \mathcal{S}(\mathcal{G},\mathcal{I}) = 
\sup_{\boldsymbol{\theta}}\; \sum_{k=1}^{K} 
\mathbb{E}_{\boldsymbol{X} \sim p^{(k)}} 
\big[ \log p_{I_k,\mathcal{G}}(\boldsymbol{X}) \big] \\\;-\; \lambda_\mathcal{G}|\mathcal{G}|\;-\; \lambda_\mathcal{I}|\mathcal{I}| , 
\end{aligned}\label{eq: score function}
\end{equation}

where $p^{(k)}$ is the data generating distribution for the $k$-th experiment, $\log p_{I_k,\mathcal{G}}(\boldsymbol{X})$ is given by \eqref{eq:interventional_change_of_vars}, $\lambda_\mathcal{G}$ and  $\lambda_\mathcal{I}$ terms are the regularizers discussed in sections 3.1.1 and 3.2 respectively, and $\boldsymbol{\theta}$ is the causal system parameters.

We will now present the main theoretical result of this paper. This theorem will establish that under certain assumptions, exact maximizing the score function given in \eqref{eq: score function} with respect to  ${\mathcal{G}}$ and ${\mathcal{I}}$ will recover the $\mathcal{I}^*$-Markov equivalence class of  $\mathcal{G}^*$ and the ground truth interventional family $\mathcal{I}^*$. Due to space constraints, a detailed proof of this theorem will be provided in the Appendix \ref{Appendix:Theory}.
\begin{theorem}\label{thm:unknown-targets}
Let $\mathcal{G}^*$ be the ground truth graph and let  $\mathcal{I}^*$ be the ground truth intervention family.
$(\hat{\mathcal{G}}, \hat{\mathcal{I}}) \in \arg\max_{\mathcal{G} , \mathcal{I}} \mathcal{S}(\mathcal{G},\mathcal{I})$.
Under the Assumptions \ref{assumption: Interventional stability}, \ref{assume:sufficient-capacity}, \ref{assume:faithfulness}, \ref{assume:positivity} and \ref{assume:finite-entropy}, and for suitable
$\lambda_\mathcal{G},\lambda_\mathcal{I}> 0$,
$\hat{\mathcal{G}}$ is $\mathcal{I}^*$-Markov equivalent to $\mathcal{G}^*$ and $\hat{\mathcal{I}} = \mathcal{I}^*$.
\end{theorem}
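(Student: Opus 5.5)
We follow the strategy of the DCDI identifiability result for unknown targets and adapt it to the cyclic setting, where the interventional density is given by \eqref{eq:interventional_change_of_vars}. First, rewrite the score. Since $\sum_k \mathbb{E}_{p^{(k)}}[\log p^{(k)}]$ does not depend on $(\mathcal{G},\mathcal{I})$, maximizing $\mathcal{S}$ is the same as minimizing
\[
\inf_{\boldsymbol{\theta}} \sum_{k=1}^K \mathrm{KL}\big(p^{(k)} \,\|\, p_{I_k,\mathcal{G}}\big) + \lambda_\mathcal{G}|\mathcal{G}| + \lambda_\mathcal{I}|\mathcal{I}| .
\]
Assumption \ref{assume:finite-entropy} makes every term finite and justifies the subtraction. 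By sufficient capacity (Assumption \ref{assume:sufficient-capacity}), together with the contractive parametrization and Assumption \ref{assumption: Interventional stability}, the pair $(\mathcal{G}^*,\mathcal{I}^*)$ attains KL equal to zero. Its score is therefore $-\lambda_\mathcal{G}|\mathcal{G}^*|-\lambda_\mathcal{I}|\mathcal{I}^*|$, which serves as the benchmark.

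Next, split the competitors $(\mathcal{G},\mathcal{I})$ into two classes.

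\emph{Case (a): the pair can represent all $p^{(k)}$ exactly,} i.e.\ the infimum of the KL sum is zero. Then the interventional distributions $\{p^{(k)}\}$ satisfy the ($\sigma$-/cyclic) $\mathcal{I}$-Markov property with respect to $\mathcal{G}$. Concretely, for each $k$:
\begin{itemize}
\item the factorization \eqref{eq:interventional_change_of_vars} implies the corresponding conditional independencies;
\item it implies the invariance of the non-intervened mechanisms $X_j \mid \boldsymbol{X}_{pa(j)}$ across experiments $k$ with $j\notin I_k$.
\end{itemize}
Faithfulness (Assumption \ref{assume:faithfulness}) of $\{p^{(k)}\}$ to $(\mathcal{G}^*,\mathcal{I}^*)$ then gives an inclusion. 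Every independence and invariance entailed by $(\mathcal{G},\mathcal{I})$ also holds for $(\mathcal{G}^*,\mathcal{I}^*)$. In graphical terms, $\mathcal{G}$ is an $\mathcal{I}$-Markov supergraph of an element of the $\mathcal{I}^*$ class, and $\mathcal{I}\supseteq\mathcal{I}^*$ entrywise, because a truly intervened node whose mechanism changed cannot be modeled with a shared $f_j$. Hence $|\mathcal{G}|+|\mathcal{I}|\ge|\mathcal{G}^*|+|\mathcal{I}^*|$. Equality forces $\mathcal{I}=\mathcal{I}^*$ and $\mathcal{G}$ to be $\mathcal{I}^*$-Markov equivalent to $\mathcal{G}^*$. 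I expect this sparsity argument to be the main obstacle in the cyclic case. Minimal $\mathcal{I}$-Markov graphs need not all have the same edge count, so the comparison has to be made through the $\mathcal{I}^*$-equivalence characterization, using the augmented graph with intervention nodes as in JCI, rather than via a simple edge count. If that fails, I would define "suitable $\lambda$" so that ties among minimal elements are resolved within the equivalence class.

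\emph{Case (b): the pair cannot represent the data,} i.e.\ the infimum KL is positive. We need it bounded below by a constant $\eta>0$. There are finitely many $(\mathcal{G},\mathcal{I})$, so it suffices to show that each such infimum is strictly positive. This is where positivity (Assumption \ref{assume:positivity}) enters. If the infimum were $0$, one could take a sequence of models whose KL tends to $0$; by Pinsker, their densities converge in total variation to $p^{(k)}$. The set of densities satisfying the Markov/invariance constraints of $(\mathcal{G},\mathcal{I})$ is closed under this limit when densities are positive, since conditional independence and invariance of conditionals pass to limits. That would put $p^{(k)}$ in the model, contradicting the case assumption. Finally, choose $\lambda_\mathcal{G},\lambda_\mathcal{I}$ small enough that $(\lambda_\mathcal{G}+\lambda_\mathcal{I})\,(d^2+Kd) < \eta$. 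Then every case-(b) pair scores strictly below $(\mathcal{G}^*,\mathcal{I}^*)$, and case (a) completes the proof.
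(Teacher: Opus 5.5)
\textbf{There is a genuine gap, and it is the step you flagged yourself.} Case (a) rests entirely on the claim that any pair $(\mathcal{G},\mathcal{I})$ whose model reaches the data is a ``supergraph'' of a member of the $\mathcal{I}^{*}$-class. From that you conclude $|\mathcal{G}|+|\mathcal{I}|\ge|\mathcal{G}^{*}|+|\mathcal{I}^{*}|$, with equality only inside the class. For DAGs this is Chickering's theorem: every independence map of $\mathcal{G}^{*}$ is obtained from $\mathcal{G}^{*}$ by covered edge reversals and edge additions. Nothing like it is available for $\sigma$-separation, and faithfulness plus the Markov property cannot produce it. Two nodes in a common strongly connected component are never $\sigma$-separated. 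So a directed $4$-cycle on $A,B,C,D$ entails no separations at all. It therefore imposes a subset of the separations of the DAG with edges $A\to C$, $B\to C$, $A\to D$, $B\to D$, $C\to D$, yet has $4<5$ edges. Whether such a sparser, non-equivalent graph has zero infimal KL can only be settled using the parametric form of $\mathcal{P}_{\mathcal{I}}(\mathcal{G})$, which your argument does not exploit.

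Your fallback cannot help. The penalty $\lambda_{\mathcal{G}}|\mathcal{G}|+\lambda_{\mathcal{I}}|\mathcal{I}|$ depends only on the two counts. A non-equivalent exact representer whose counts do not exceed those of $(\mathcal{G}^{*},\mathcal{I}^{*})$ therefore ties with or beats the truth for every $\lambda_{\mathcal{G}},\lambda_{\mathcal{I}}>0$. Taking $\lambda$ small only disposes of pairs with strictly positive KL.

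\textbf{Two further steps are wrong as written.} First, the invariance of $X_j\mid\boldsymbol{X}_{pa(j)}$ for $j\notin I_k$ fails in cyclic SCMs. Feedback makes $\boldsymbol{X}_{pa(j)}$ depend on $\epsilon_j$, and the density \eqref{eq:interventional_change_of_vars} does not factor into such conditionals. Already in a linear $2$-cycle, rescaling the noise of $X_1$ changes the regression of $X_2$ on $X_1$. Correspondingly, in the augmented graph a context node $C_k$ is $\sigma$-connected to every node in the strongly connected component of its target, given any set not containing the endpoints. Hence $C_k\to X_1$ and $C_k\to X_2$ in a $2$-cycle impose identical $\sigma$-separation constraints, and your derivation of $\mathcal{I}\supseteq\mathcal{I}^{*}$ has no support.

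Second, if case (b) means ``infimum KL $>0$'', its positivity is automatic. The closedness argument you give there is really what case (a) needs, to pass from infimum KL $=0$ to the Markov property. The paper obtains this from Lemma \ref{lemma:strictly-pos} together with $\mathcal{P}_{\mathcal{I}}(\mathcal{G})\subseteq\mathcal{M}_{\mathcal{I}}(\mathcal{G})$.

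\textbf{Comparison with the paper.} The paper has no analogue of your case (a). In its second case (fewer separations in $\mathcal{G}^{\mathcal{I}}$), it supposes $p^{*}_{\mathcal{I}}\in\mathcal{M}_{\mathcal{I}}(\mathcal{G})$ and applies Assumption \ref{assume:faithfulness} to the candidate $\mathcal{G}^{\mathcal{I}}$ to contradict Proposition \ref{prop:Markov Property}. Every non-equivalent pair then gets $\eta(\mathcal{G}^{\mathcal{I}})>0$, and $\lambda'$ only has to beat sparser graphs. That step uses faithfulness relative to the candidate graph rather than the ground truth. Under the usual reading it fails for strict supergraphs of $\mathcal{G}^{*}$, whose model contains $p^{*}_{\mathcal{I}}$. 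So the difficulty you isolated is not actually discharged there either.
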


\begin{proof}[Proof (Sketch)]
Using the characterization of general directed Markov equivalence class from \citet{10.1214/21-AOS2064}, augmenting it to include interventions, we show that any graph $\mathcal{G}$ outside of this equivalence class or any intervention family $\mathcal{I} \neq  \mathcal{I}^* $ will yield a strictly lower score than the $\mathcal{S}(\mathcal{G}^*,\mathcal{I}^*).$ This can be demonstrated by showing that the augmented graph built from a graph outside this equivalence class and an incorrect intervention family either misses certain existing independencies or imposes extra independencies that do not exist in the data. Furthermore, coefficients $\lambda_\mathcal{G},\lambda_\mathcal{I}$ should be chosen small enough to avoid too much sparse solutions.
\end{proof}

% The score function defined above is under the infinite data limit (population setting). We need to define the score function under finite data, and also talk about the overal objective function, i.e., maximization of the score function over the space of graphs and interventional masks. 
The score function in \eqref{eq: score function} is defined under the infinite data limit. To make it computable under finite data samples, we redefine it in the following way: 
\begin{equation}
\label{eq:finite score-function}
\begin{split}
\hat{\mathcal{S}}(\boldsymbol{M}_\phi,\boldsymbol{T}_\psi)
= \sup_{\theta}\,
\mathbb{E}_{\substack{\mathbf{M}^\mathcal{G}\sim \boldsymbol{M}_\phi\\[1pt]
                       \mathbf{T}^\mathcal{I}\sim \boldsymbol{T}_\psi}}
\Biggl[
  \sum_{k=1}^{K}\sum_{i=1}^{N_k}
  \log p_{I_k,\mathcal{G}}(\boldsymbol{x}^{(i,k)})
\\[-2pt]
  - \lambda_{\mathcal G}\lVert \boldsymbol{M}\rVert_{1}
  - \lambda_{\mathcal I}\lVert \boldsymbol{T}\rVert_{1}
\Biggr].
\end{split}
\end{equation}

where we take a summation over finite samples $\boldsymbol{x}^{(i,k)}$ of each experiment instead of taking the expectation over the data distribution. We optimize the score function \eqref{eq:finite score-function} with respect to the neural network parameters $\theta$, the graph structure parameters $\boldsymbol{M}_\phi$ and intervention target parameters $\boldsymbol{T}_\psi$.

\section{Experiments}
\label{sec:experiments}
The code for SCOUT is available at the repository: https://github.com/alparturkoglu/scout-master

We evaluated SCOUT on both synthetic and real-world datasets. We also compared its performance against existing state-of-the-art causal discovery algorithms, NODAGS-Flow \cite{pmlr-v206-sethuraman23a}, LLC \cite{JMLR:v13:hyttinen12a}, and BACKSHIFT \cite{Rothenhusler2015BACKSHIFTLC}. NODAGS-Flow can learn nonlinear cyclic causal graphs with interventions; however, it assumes that the interventions are surgical with the interventional targets being known. Additionally, NODAGS-Flow assumes the exogenous noise to be Gaussian. LLC does not assume a normal noise distribution; however, it is limited to Linear SEM and requires interventional targets to operate. BACKSHIFT operates under unknown targets and deals with shift interventions; however, it cannot handle other types of soft interventions. Furthermore, it is also designed to work with Linear SEM. We also provide a comparison between SCOUT and other baselines which can handle nonlinearity, soft-interventions, and unknown targets, but specifically for DAGs in Appendix \ref{Appendix:DCDI comparison}, 

\subsection{Synthetic data}
We generated cyclic graphs using the Erdős-Rényi (ER) random graph model with $d=10$ nodes and outgoing edge density of $2$. Our training data consists of $10$ experiments, one for each single-node intervention, and each experiment contains $1,000$ samples. For all the experimental results presented here, we used non-linear SEMs constrained to be contractive. We used nine different settings in our experiments, varying the exogenous noise variable between Gaussian, Exponential, and Gumbel distributions, and varying the soft intervention type between shift, scale, and noisy function. When a node is not intervened, the corresponding exogenous noise distribution parameters are set as follows: for the Gaussian noise setting, the noise mean is set to $0$, and variance is set to $0.25$. For the exponential noise setting the rate is set to $2$. Finally, for the Gumbel noise setting the location and scale of the distribution is set to 0 and 0.5 respectively. We set the shift and scale parameters to $2$ for the respective interventions, and for the noisy-function interventions, we negate the causal mechanism.  
% negate the causal mechanism when intervened

The performance on synthetic data is evaluated with respect to both graph structure recovery and unknown intervention target recovery. 
% by measuring how well it can recover the edges (along with their directions) in the ground-truth graph and how well it can identify interventional targets for each experiment. 
We compare the learned adjacency matrix with the binary ground truth adjacency matrix to evaluate the graph recovery, and we compare the learned intervention targets to the binary ground truth interventional target matrix. We use the Area Under Precision-Recall Curve (AUPRC) as the error metric. AUPRC computes the area under the precision-recall curve evaluated at various threshold values (the higher the better). The results of the synthetic experiments are presented in Figure \ref{fig:auprc_nnl} and Table \ref{tab:target_identification}.

The box plot in Figure \ref{fig:auprc_nnl} shows the median and interquartile range of the AUPRC metric for all the models over ten independent trials. Each plot in Figure \ref{fig:auprc_nnl} shows the performance of our framework compared to baselines with respect to the interventional setting given at the left of the row, i.e., the vertical labels, and the exogenous noise presented at the top of the column. As seen from Figure~\ref{fig:auprc_nnl}, SCOUT acheives near perfect graph recovery in all settings except Noisy Function + Gaussian noise (where it attains comparable performance to that of the baseline methods).

% As seen from Figure~\ref{fig:auprc_nnl}, SCOUT achieves comparable performance to that of the baselines on noisy function interventions with Gaussian noise. In every other setting, SCOUT outperforms all state-of-the-art baseline methods with respect to both 

% Our framework can recover the ground-truth graph and outperforms every other baseline in every setting except the Noisy Function/ Gaussian setting. For scale interventions, BACKSHIFT can match SCOUT's performance for every noise type, and LLC can attain comparable performance under Gaussian and Exponential noise.

\begin{figure}[!t]
  \centering
  \includegraphics[width=1\columnwidth]{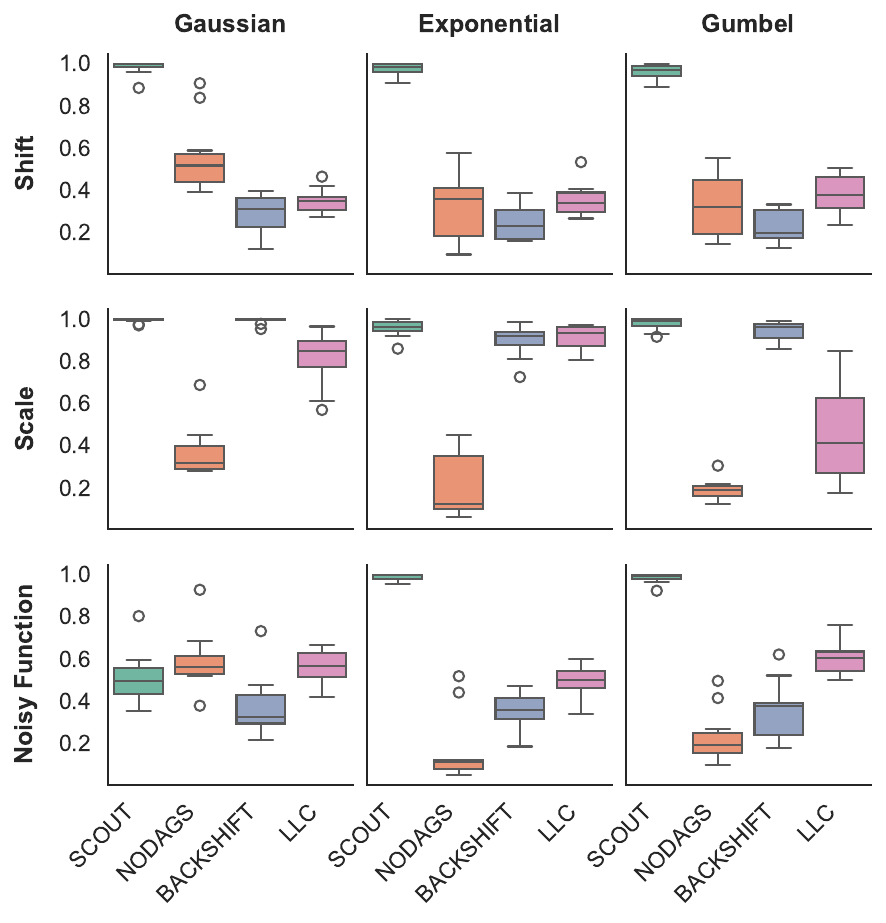}
  \caption{Graph recovery performance comparison between SCOUT and baselines under non-linear SEM and various interventional and exogenous noise settings, evaluated using AUPRC (the higher the better). The box plots show the median and interquartile ranges across ten independent trials. In all cases, the number of nodes is fixed at $d=10$.}
  \label{fig:auprc_nnl}
\end{figure}

\begin{table*}[!t]
\caption{Interventional target recovery performance comparison between SCOUT and BACKSHIFT under non-linear SEM and various interventional and exogenous noise settings, evaluated using AUPRC (higher is better). Results are reported as mean $\pm$ standard deviation over 10 independent trials. In all cases, the number of nodes is fixed at $d=10$.}
\label{tab:target_identification}
\centering
\small
\setlength{\tabcolsep}{6pt}
\begin{tabular}{lcc|cc|cc}
\hline
\cline{2-7}
& \multicolumn{2}{c|}{Gaussian}
& \multicolumn{2}{c|}{Exponential}
& \multicolumn{2}{c}{Gumbel} \\
\cline{2-7}
Intervention Type
& SCOUT & BACKSHIFT
& SCOUT & BACKSHIFT
& SCOUT & BACKSHIFT \\
\hline
Shift
& $1.000 \pm 0.000$ & $0.962 \pm 0.038$
& $1.000 \pm 0.000$ & $0.868 \pm 0.116$
& $1.000 \pm 0.000$ & $0.859 \pm 0.144$ \\

Scale
& $1.000 \pm 0.000$ & $1.000 \pm 0.000$
& $0.975 \pm 0.079$ & $1.000 \pm 0.000$
& $1.000 \pm 0.000$ & $1.000 \pm 0.000$ \\

Noisy Function
& $0.264 \pm 0.133$ & $0.259 \pm 0.107$
& $0.590 \pm 0.123$ & $0.313 \pm 0.214$
& $0.607 \pm 0.152$ & $0.423 \pm 0.269$ \\
\hline
\end{tabular}
\end{table*}

Table \ref{tab:target_identification} presents the intervention target recovery performance of SCOUT compared to BACKSHIFT, providing mean AUPRC values along with standard deviations in the same settings as Figure \ref{fig:auprc_nnl}. The other baselines do not identify unknown targets; therefore, they are not included in this table. For scale interventions, both models can identify targets perfectly, matching the results in the Figure \ref{fig:auprc_nnl}. For shift interventions, SCOUT can recover the targets perfectly, whereas BACKSHIFT achieves comparable performance; however, it is insufficient to recover the ground truth graph, as shown again in Figure \ref{fig:auprc_nnl}. While both SCOUT and BACKSHIFT fail to fully recover the interventional targets for noisy function interventions, SCOUT still outperforms BACKSHIFT in this setting. This is expected since the noisy function interventions can not induce a significant distribution change compared to shift/scale interventions (see the Appendix \ref{Appendix: KL comparison} for numerical results), and in practice, identifying intervention targets typically requires interventions that induce sufficiently strong distributional changes for finite data \cite{Gamella2020ActiveIC}.

\subsubsection{Scaling with nodes}
We compare the performance of SCOUT to the baselines as the number of nodes varies from 10 to 70. We look at non-linear SEM under unknown shift and scale interventions with Gaussian and Gumbel noise. As shown in Figure \ref{fig:node_scaling} and \ref{fig:node_scaling_scale}, SCOUT's structure recovery performance remains relatively high, whereas other baselines yield lower AUPRCs as the graph size increases. Table \ref{tab:node_scaling_nnl_shift_interv_auprc} suggests that while BACKSHIFT begins not to detect intervention targets correctly, SCOUT still achieves a perfect performance in terms of interventional target recovery for large-scale graphs. Overall, these results show that SCOUT is highly scalable as graph sizes increase.

\begin{figure}[!h]
  \centering
  \includegraphics[width=1\columnwidth]{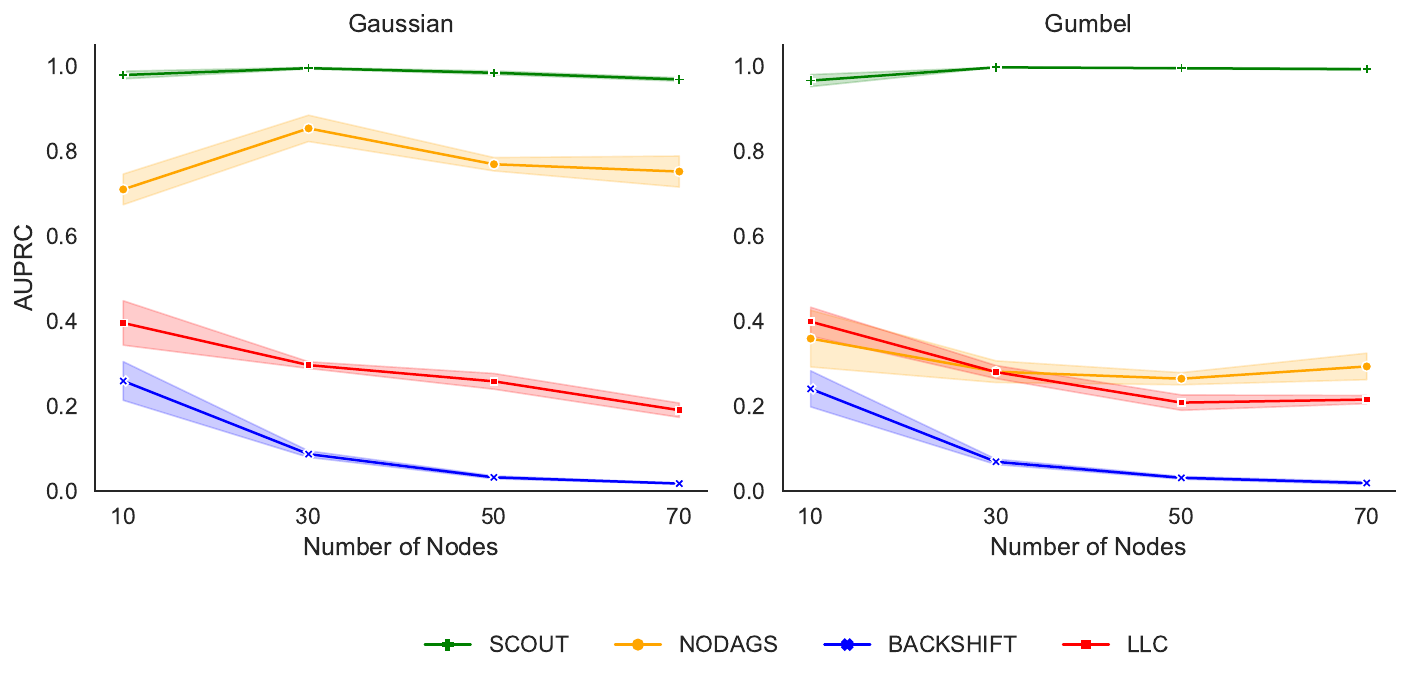}
  \caption{Graph recovery performance comparison between SCOUT and baselines under non-linear SEM and shift interventions. The number of nodes is varied from $d=$ 10 to 70}
  \label{fig:node_scaling}
\end{figure}

\begin{table}[!h]
\caption{Interventional target recovery performance comparison between SCOUT and BACKSHIFT under non-linear SEM and shift interventions. The table presents the average AUPRC value along with its standard deviation evaluated using 5 independent trials. The number of nodes is varied from $d=$ 10 to 70.}
\label{tab:node_scaling_nnl_shift_interv_auprc}
\centering
\setlength{\tabcolsep}{1pt}
\small
\begin{tabular}{c cc cc}
\toprule
& \multicolumn{2}{c}{Gaussian} & \multicolumn{2}{c}{Gumbel} \\
\cmidrule(lr){2-3}\cmidrule(lr){4-5}
$d$ & SCOUT & BACKSHIFT & SCOUT & BACKSHIFT \\
\midrule
10 & $1.00\!\pm\!0.00$ & $0.90\!\pm\!0.14$ & $1.00\!\pm\!0.00$ & $0.87\!\pm\!0.13$ \\
30 & $1.00\!\pm\!0.00$ & $0.51\!\pm\!0.15$ & $1.00\!\pm\!0.00$ & $0.51\!\pm\!0.14$ \\
50 & $1.00\!\pm\!0.00$ & $0.08\!\pm\!0.02$ & $1.00\!\pm\!0.00$ & $0.05\!\pm\!0.01$ \\
70 & $1.00\!\pm\!0.00$ & $0.03\!\pm\!0.00$ & $1.00\!\pm\!0.00$ & $0.02\!\pm\!0.00$ \\
\bottomrule
\end{tabular}
\end{table}

\begin{figure}[!h]
  \centering
  \includegraphics[width=1\columnwidth]{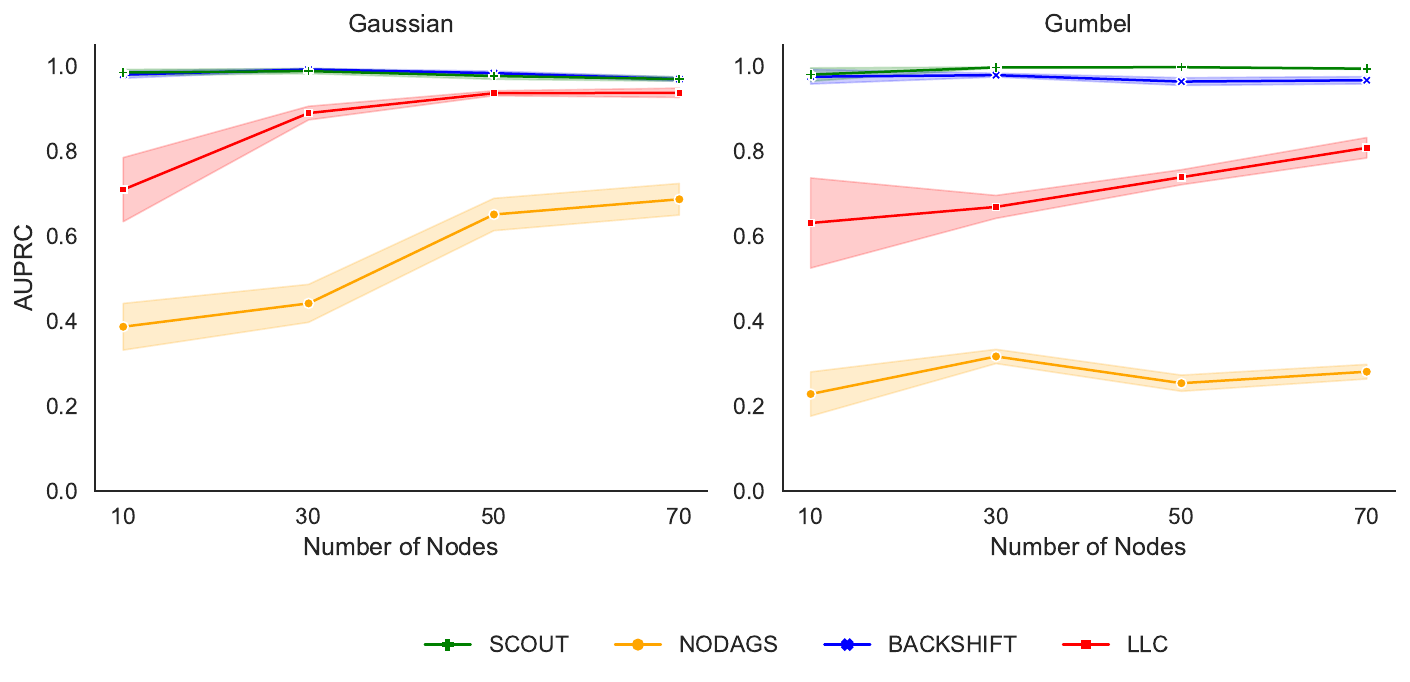}
  \caption{Graph recovery performance comparison between SCOUT and baselines under non-linear SEM and scale interventions. The number of nodes is varied from $d=$ 10 to 70}
  \label{fig:node_scaling_scale}
\end{figure}

\subsubsection{Effect of Neural Spline Flows}
To assess the contribution of each component in our model, we perform an ablation study comparing SCOUT, SCOUT-noNSF, and NODAGS \citep{pmlr-v206-sethuraman23a}. In SCOUT-noNSF, the Neural Spline Flow (NSF) layer is replaced with a simple Gaussian likelihood. NODAGS is designed for Gaussian noise and hard interventions with known targets.

While SCOUT-noNSF retains the ability to model soft interventions and includes an interventional target matrix for handling unknown targets, it lacks the flexibility to transform non-Gaussian noise distributions. In contrast, SCOUT incorporates the NSF layer, enabling it to map arbitrary noise distributions to a standard Gaussian space, thereby improving identifiability and learning.

We conduct this experiment under the same setting as Figure~\ref{fig:auprc_nnl}. As shown in Figure~\ref{fig:ablation} and Table~\ref{tab:ablation}, removing the NSF layer leads to a significant degradation in performance: intervention targets become unidentifiable, and graph recovery fails in the unknown-target setting. 

However, as shown in Figure~\ref{fig:ablation_known}, when the intervention targets are known and the noise is Gaussian, the contractive residual structure alone is sufficient for accurate graph recovery.

\begin{figure}[!h]
  \centering
  \includegraphics[width=1\columnwidth]{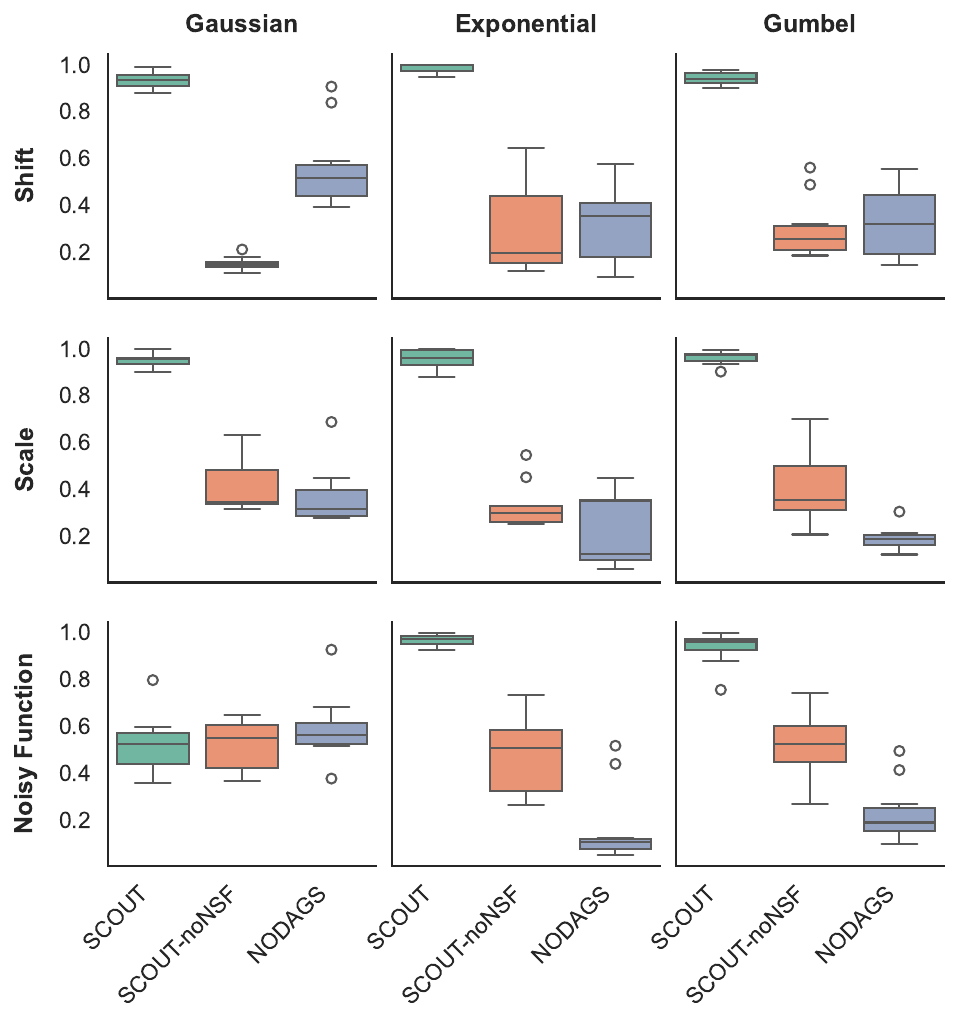}
  \caption{Graph recovery performance comparison between SCOUT, SCOUT-noNSF, and NODAGS under nonlinear SEM and various interventional and exogenous noise settings, evaluated using AUPRC. In all cases, the number of nodes is fixed at $d=10$.}
  \label{fig:ablation}
\end{figure}

\begin{table*}[!t]
\caption{Interventional target recovery performance comparison between SCOUT and SCOUT without NSF under non-linear SEM and various interventional and exogenous noise settings, evaluated using AUPRC. Results are reported as mean $\pm$ standard deviation over 10 independent trials. In all cases, the number of nodes is fixed at $d=10$.}
\label{tab:ablation}
\centering
\setlength{\tabcolsep}{6pt}
\renewcommand{\arraystretch}{1.15}
\resizebox{0.95\textwidth}{!}{
\begin{tabular}{lcc|cc|cc}
\hline
\cline{2-7}
& \multicolumn{2}{c|}{Gaussian}
& \multicolumn{2}{c|}{Exponential}
& \multicolumn{2}{c}{Gumbel} \\
\cline{2-7}
Intervention Type
& SCOUT & SCOUT-noNSF
& SCOUT & SCOUT-noNSF
& SCOUT & SCOUT-noNSF \\
\hline

Shift
& $1.000\pm0.000$ & $0.144\pm0.049$
& $1.000\pm0.000$ & $0.471\pm0.145$
& $1.000\pm0.000$ & $0.237\pm0.078$ \\

Scale
& $1.000\pm0.000$ & $0.227\pm0.077$
& $0.975\pm0.079$ & $0.232\pm0.128$
& $1.000\pm0.000$ & $0.171\pm0.070$ \\

Noise
& $0.264\pm0.133$ & $0.393\pm0.183$
& $0.590\pm0.123$ & $0.443\pm0.211$
& $0.607\pm0.152$ & $0.463\pm0.244$ \\
\hline
\end{tabular}}
\end{table*}

\begin{figure}[!h]
  \centering
  \includegraphics[width=1\columnwidth]{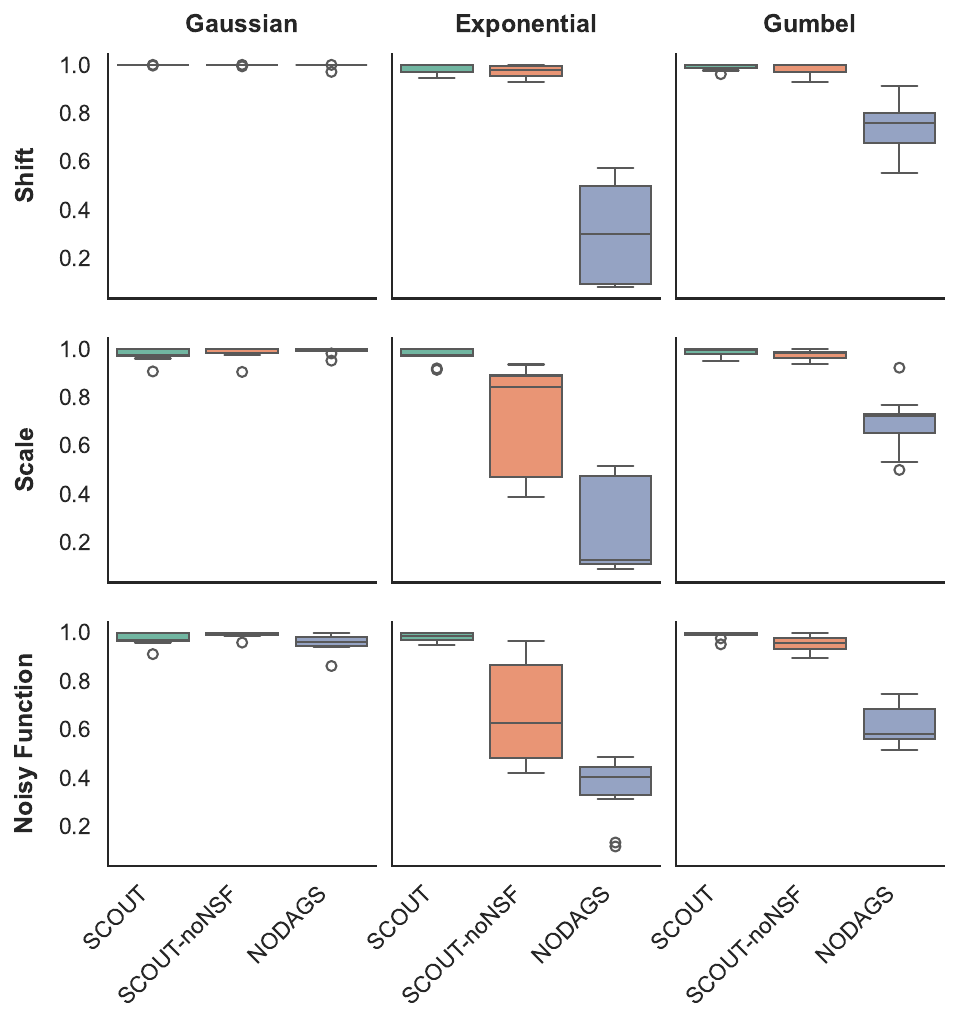}
  \caption{Graph recovery performance comparison between SCOUT, SCOUT-noNSF, and NODAGS for known intervention targets under nonlinear SEM and various interventional and exogenous noise settings, evaluated using AUPRC. In all cases, the number of nodes is fixed at $d=10$.}
  \label{fig:ablation_known}
\end{figure}

Additional experiments, including performance evaluations on non-contractive SEMs (DAGs) in Appendix~\ref{Appendix:DCDI comparison}, linear SEMs in Appendix~\ref{sec:linear}, hard interventions in Appendix~\ref{sec:hard}, and known intervention targets in Appendix~\ref{sec:known}, are provided in Appendix~\ref{Appendix:Additional Experiments}. Further ablation studies assessing the robustness of the model are presented in Appendix~\ref{sec:ablation}.

\subsection{Real World Data}
\subsubsection{Gene Regulatory Networks (Perturb-CITE-seq)}
We evaluate SCOUT's performance on learning the causal graph structure of gene regulatory networks from real-world gene expression data with genetic interventions. We focus on the PerturbCITE-seq dataset \cite{44e58fcbe5ee4f998863a372408c3c2f} that contains gene expressions taken from 218,331 melanoma cells split over three different cell conditions: (i) control, (ii) co-culture, and (iii) IFN-$\gamma$.

Due to computational limitations, we limit our analysis to 61 genes out of approximately 20,000 in the genome, and we treat each cell condition as a separate dataset, following the setup of \citep{pmlr-v206-sethuraman23a}. We train SCOUT, along with the baselines, on these three datasets by supplying single-gene interventions for the 61 genes as unknown targets. The adjacency matrix recovered by SCOUT for the cell condition co-culture is given in Figure \ref{fig:perturb-cite-seq-adj}.

Since the dataset does not include a ground-truth causal graph, we evaluate the performance of SCOUT and baselines based on predictive performance under unseen interventions. We perform a 90-10 split of the dataset, taking 90\% of the data as training set and the remaining 10\% as the testing set. As a performance metric, we use negative log-likelihood (NLL) on the test portion of the data after training the model (lower the better). Since BACKSHIFT does not have a method to compute this metric, it is not included in the results. The results can be seen in Figure \ref{fig:perturb-cite-seq-nll}. SCOUT outperforms all baselines across all cell conditions, demonstrating that accounting for non-Gaussian exogenous noise enables the model to learn the target distribution better and improve its predictive power. Additionally, we present a performance comparison of SCOUT with other baselines under known targets, as well as another comparison with respect to the mean absolute error (MAE) metric (including BACKSHIFT) in the Appendix \ref{Appendix: Additional Experiments on Perturb-CITE-seq}. 
\begin{figure}[!h]
  \centering
  \includegraphics[width=0.9\columnwidth]{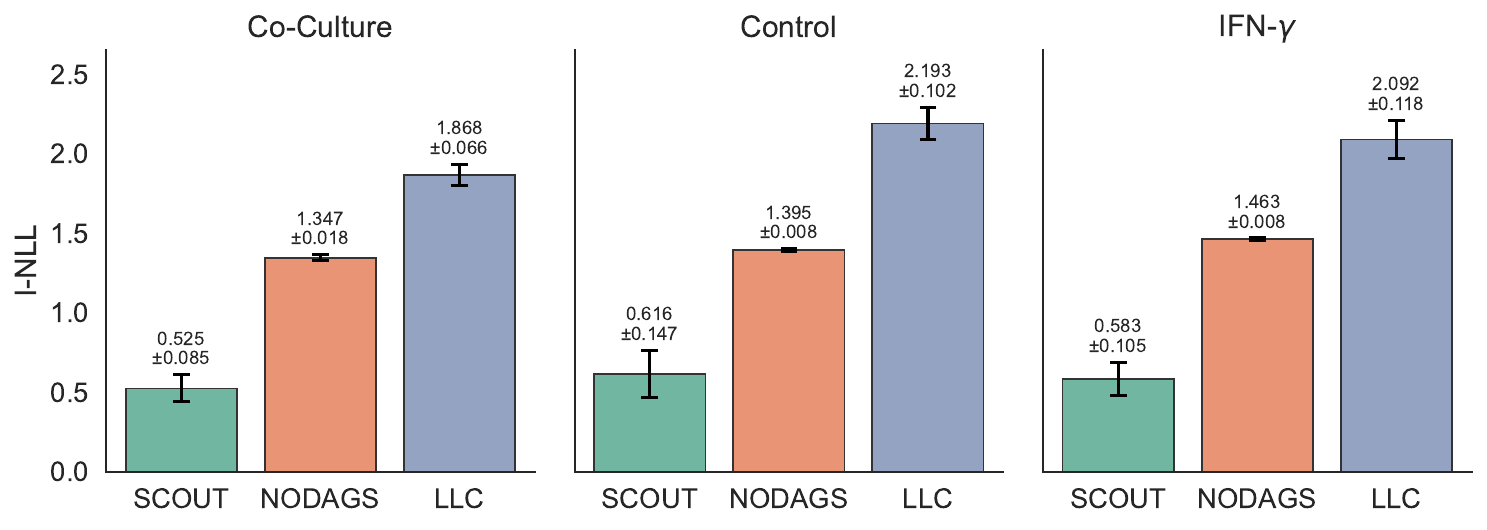}
  \caption{The performance comparison results on the Perturb-CITE-seq \cite{44e58fcbe5ee4f998863a372408c3c2f} gene perturbation dataset. The bar graph shows interventional negative log-likelihood (I-NLL) and its standard deviation across 5 trials.}
  \label{fig:perturb-cite-seq-nll}
\end{figure}
% \begin{figure}[!t]
%   \centering
%   \includegraphics[width=1\columnwidth]{icml2026/pcs-adj-mat.pdf}
%   \caption{The adjacency matrix learnt by SCOUT for co-culture cell condition of Perturb-CITE-seq dataset \cite{44e58fcbe5ee4f998863a372408c3c2f}.}
%   \label{fig:perturb-cite-seq-adj}
% \end{figure}

\subsubsection{Close-to-Real-World Data (SERGIO)}
\label{sec:SERGIO}
Although experiments on the real-world Perturb-CITE-seq dataset show that SCOUT outperforms the baselines in terms of NLL, this dataset does not provide a ground-truth causal graph. To evaluate the graph recovery and unknown intervention target recovery performance of SCOUT in a close-to-real-world setting, we use SERGIO \citep{SERGIO}, a simulator for single-cell gene expression data generated from a gene regulatory network (GRN). SERGIO produces realistic stochastic expression data using nonlinear regulatory dynamics, including Hill-type effects, and can simulate perturbations such as gene knockouts across different environments and cell types.

We set the number of genes (nodes) to $d=10$. The underlying random cyclic graph is sampled with edge probability $p=0.25$ and contains 2 master regulators, i.e., nodes with no incoming edges. The SERGIO simulation hyperparameters are set to
\texttt{noise\_params}=1.0, \texttt{decay}=0.8, \texttt{hill}=2, and \texttt{sampling\_state}=4.
For each environment, we simulate 1000 cells, with $N_{\text{types}}=5$ cell types and $N_{\text{cells/type}}=200$ cells per type. We again use a single-node intervention design, namely one single-gene knockout environment per node, resulting in $K=10$ intervention environments.

The results in Figure~\ref{fig:sergio} show that SCOUT outperforms the existing baselines in graph identification, although it does not achieve perfect recovery because the overall causal mechanism generated by SERGIO need not be contractive. In addition, SCOUT achieves near-perfect intervention target recovery, with an AUPRC of $0.978 \pm 0.025$, compared to $0.432 \pm 0.032$ for BACKSHIFT.

\begin{figure}[!h]
  \centering
  \includegraphics[width=0.5\columnwidth]{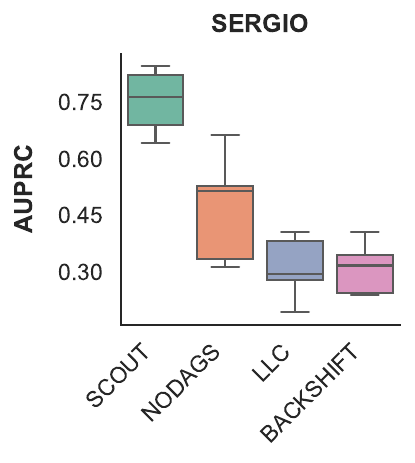}
  \caption{The performance comparison results on SERGIO evaluated using AUPRC.}
  \label{fig:sergio}
\end{figure}

\section{Discussion and Conclusion}
\label{sec:discussion}

We proposed SCOUT, a novel framework for causal discovery that simultaneously infers directed cyclic causal structure and unknown intervention targets from soft interventional data under non-Gaussian noise. It models causal relationships as neural networks and recovers the ground-truth graph along with intervention targets via likelihood score maximization. We provided consistency proof for the recovery of the Markov equivalence class of the ground truth graph under unknown intervention targets. We conducted experiments on both synthetic and real-world data to demonstrate that SCOUT outperforms state-of-the-art methods in causal graph recovery and identification of target nodes that are intervened on, across various interventional and noise settings. We showed that our model is highly scalable with increasing graph size and maintains its robustness with increasing number of intervention targets. Evaluations on the Perturb-CITE-seq dataset show that our model also achieves superior predictive accuracy in real-world scenarios. Possible research directions to extend this work include incorporating more realistic measurement noise models, allowing the system to handle confounders, or scaling up to handle larger graph models. 

\section*{Acknowledgments}
This material is based on work supported National Science Foundation (NSF) under grant number 2502298.

\section*{Impact Statement}

% Authors are \textbf{required} to include a statement of the potential broader
% impact of their work, including its ethical aspects and future societal
% consequences. This statement should be in an unnumbered section at the end of
% the paper (co-located with Acknowledgements -- the two may appear in either
% order, but both must be before References), and does not count toward the paper
% page limit. In many cases, where the ethical impacts and expected societal
% implications are those that are well established when advancing the field of
% Machine Learning, substantial discussion is not required, and a simple
% statement such as the following will suffice:
This paper presents work whose goal is to advance the field of Machine
Learning. There are many potential societal consequences of our work, none
which we feel must be specifically highlighted here.

% The above statement can be used verbatim in such cases, but we encourage
% authors to think about whether there is content which does warrant further
% discussion, as this statement will be apparent if the paper is later flagged
% for ethics review.

% In the unusual situation where you want a paper to appear in the
% references without citing it in the main text, use 

\bibliography{example_paper}
\bibliographystyle{icml2026}

%%%%%%%%%%%%%%%%%%%%%%%%%%%%%%%%%%%%%%%%%%%%%%%%%%%%%%%%%%%%%%%%%%%%%%%%%%%%%%%
%%%%%%%%%%%%%%%%%%%%%%%%%%%%%%%%%%%%%%%%%%%%%%%%%%%%%%%%%%%%%%%%%%%%%%%%%%%%%%%
% APPENDIX
%%%%%%%%%%%%%%%%%%%%%%%%%%%%%%%%%%%%%%%%%%%%%%%%%%%%%%%%%%%%%%%%%%%%%%%%%%%%%%%
%%%%%%%%%%%%%%%%%%%%%%%%%%%%%%%%%%%%%%%%%%%%%%%%%%%%%%%%%%%%%%%%%%%%%%%%%%%%%%%
\newpage
\appendix
\onecolumn

\def\sa{\mathcal{A}}
\def\sbe{\mathcal{B}}
\def\sce{\mathcal{C}}
\def\se{\mathcal{E}}
\def\si{\mathcal{I}}
\def\sm{\mathcal{M}}
\def\ses{\mathcal{S}}
\def\spe{\mathcal{P}}
\def\su{\mathcal{U}}
\def\sv{\mathcal{V}}

% Vectors
\def\vs{\bm{s}}
\def\vw{\bm{w}}
\def\vx{\bm{x}}
\def\vy{\bm{y}}
\def\vz{\bm{z}}
\def\vbeta{\bm{\beta}}
\def\vtheta{\bm{\theta}}
\def\vphi{\bm{\phi}}

% Matrices
\def\mA{\mathbf{A}}
\def\mB{\mathbf{B}}
\def\mI{\mathbf{I}}
\def\mJ{\mathbf{J}}
\def\mM{\mathbf{M}}
\def\mS{\mathbf{S}}
\def\mU{\mathbf{U}}
\def\mW{\mathbf{W}}
\def\mLambda{\bm{\Lambda}}
\def\mSigma{\bm{\Sigma}}

% Random vectors
\def\rvc{\bm{C}}
\def\rve{\bm{E}}
\def\rvn{\bm{N}}
\def\rvv{\bm{V}}
\def\rvw{\bm{W}}
\def\rvx{\bm{X}}
\def\rvy{\bm{Y}}
\def\rvz{\bm{Z}}

% Vector fields
\def\vff{\mathbf{f}}
\def\vffc{\mathbf{F}}
\def\vffc{\mathbf{F}}
\def\vfg{\mathbf{g}}
\def\vfi{\mathbf{id}}

% Graphs
\def\gG{\mathcal{G}}
\def\gD{\mathcal{D}}
\def\hG{\mathcal{H}}
% Probabilities
\def\pP{\mathbb{P}}
\def\pE{\mathbb{E}}

% Custom
\def\acy{\mathrm{acy}}
\def\an{\mathrm{an}}
\def\barr{\mathrm{barren}}
\def\ch{\mathrm{ch}}
\def\de{\mathrm{de}}
\def\dis{\mathrm{dis}}
\def\lra{\leftrightarrow}
\def\pa{\mathrm{pa}}
\def\R{\mathbb{R}}
\def\scn{\mathrm{sc}}
\def\tdo{\mathrm{do}}
\def\sigmaMAG{\sigma\text{-MAG}}
\newcommand{\xpa}[1]{\rvx_{\pa_\gG(#1)}}
\newcommand{\sz}{\mathbf{\Sigma}_Z}
\newcommand{\score}{\hat{\ses}(\mB)}
\newcommand{\mperp}{\mathop{\perp}}
\theoremstyle{definition}

% \newcommand{\theHalgorithm}{\arabic{algorithm}}
% \newtheorem{assumption}[theorem]{Assumption}
% \newtheorem{definition}[theorem]{Definition}
% \newtheorem{proposition}[theorem]{Proposition}

% \section{You \emph{can} have an appendix here.}

% You can have as much text here as you want. The main body must be at most $8$
% pages long. For the final version, one more page can be added. If you want, you
% can use an appendix like this one.

% The $\mathtt{\backslash onecolumn}$ command above can be kept in place if you
% prefer a one-column appendix, or can be removed if you prefer a two-column
% appendix.  Apart from this possible change, the style (font size, spacing,
% margins, page numbering, etc.) should be kept the same as the main body.

The appendix is organized as follows: Appendix \ref{sec:solvability} discusses the solvability of cyclic systems under equilibrium and justifies the contractiviy assumption on observed and intervened causal mechanisms. Appendix \ref{Appendix:Theory} develops the theoretical basis for the cyclic causal discovery under soft interventions with unknown
targets, including the proof of Theorem \ref{thm:unknown-targets} and a characterization of the score-maximizing equivalence class of directed graphs. Appendix \ref{Appendix:Additional Experiments} reports additional experimental results comparing SCOUT against these baselines. Appendix \ref{Appendix:exp_setup} details the experimental setup and implementation of SCOUT and the baseline methods.

\section{Solvability of Cyclic Systems Under Equilibrium}
\label{sec:solvability}

Our causal discovery framework relies on the existence of unique observational and interventional distributions, as well as Markov properties with respect to $\sigma$-/$d$-separation. These requirements are automatically satisfied for acyclic SCMs. However, when cycles are allowed, stronger conditions are needed to guarantee unique solvability. In particular, \citep{10.1214/21-AOS2064} showed that unique solvability with respect to each strongly connected component is necessary for an SCM to satisfy the Markov property with respect to $\sigma$-separation. The same work also characterizes a class of SCMs, called \emph{simple SCMs}, that satisfy these unique solvability requirements. This assumption is also standard in several constraint-based causal discovery methods \citep{pmlr-v124-m-mooij20a, JMLR:v24:22-1425}. 

This motivates our restriction to contractive causal mechanisms. Contractivity guarantees unique solvability and the Markov property with respect to $\sigma$-separation. Therefore, the SCMs considered in our work form a subset of simple SCMs.

Under soft interventions, however, an SCM need not remain simple. In other words, the intervened system may fail to admit a unique interventional distribution. In that case, the Markov property is no longer guaranteed, and causal discovery from static equilibrium data is no longer a well-posed problem.

We assume that, in the observational regime, the dynamical system evolves according to
\[
\mathbf{x}^{(t)} := f\!\left(\mathbf{x}^{(t-1)}\right) + \boldsymbol{\varepsilon}.
\]

If \(f\) is contractive, then by the Banach fixed-point theorem, for every initial value \(\mathbf{x}^{(0)}\) and every \(\boldsymbol{\varepsilon}\), the sequence \(\{\mathbf{x}^{(t)}\}_{t \ge 0}\) converges to a unique fixed point \(\mathbf{x}^\star\) satisfying
\[
\mathbf{x}^\star = f(\mathbf{x}^\star) + \boldsymbol{\varepsilon}.
\]
Hence, the observed equilibrium \(\mathbf{X}\) satisfies
\[
\boldsymbol{\varepsilon} = (\mathrm{id} - f)(\mathbf{X}).
\]

Similarly, under an intervention that changes either the mechanism or the noise characteristics, the dynamical system can be written as
\[
\mathbf{x}^{(t)}
=
\mathbf{U}_k\bigl(f(\mathbf{x}^{(t-1)})+\boldsymbol{\varepsilon}\bigr)
+
(\mathbf{I}-\mathbf{U}_k)\bigl(\tilde f(\mathbf{x}^{(t-1)})+\tilde{\boldsymbol{\varepsilon}}\bigr),
\]
where \(\mathbf{U}_k\) is a diagonal matrix with ones corresponding to non-intervened variables and zeros corresponding to intervened variables, and \(\mathbf{I}\) denotes the identity matrix. Equivalently, defining the intervened mechanism and effective noise as
\[
f^{(I_k)}(\mathbf{x}^{(t-1)}) := \mathbf{U}_k f(\mathbf{x}^{(t-1)}) + (\mathbf{I}-\mathbf{U}_k)\tilde f(\mathbf{x}^{(t-1)}),
\qquad
\boldsymbol{\eta} := \mathbf{U}_k\boldsymbol{\varepsilon} + (\mathbf{I}-\mathbf{U}_k)\tilde{\boldsymbol{\varepsilon}},
\]
we can write
\[
\mathbf{x}^{(t)} := f^{(I_k)}\!\left(\mathbf{x}^{(t-1)}\right) + \boldsymbol{\eta}.
\]

If \(f^{(I_k)}\) is contractive, then again by the Banach fixed-point theorem, for every initialization, the iterates converge to a unique fixed point \(\mathbf{x}_k^\star\) satisfying
\[
\mathbf{x}_k^\star = f^{(I_k)}(\mathbf{x}_k^\star) + \boldsymbol{\eta}.
\]
Therefore, the observed interventional equilibrium \(\mathbf{X}\) satisfies
\[
\boldsymbol{\eta} = \bigl(\mathrm{id} - f^{(I_k)}\bigr)(\mathbf{X}).
\]

Thus, if the intervened mechanism \(f^{(I_k)}\) remains contractive, the Banach fixed-point theorem guarantees that the intervention admits a unique equilibrium.

\section{Theory} \label{Appendix:Theory}
In this section, we lay out the theory behind cyclic causal discovery under soft interventions with unknown
targets. We start by summarizing the definitions and establish results required for the proof of Theorem \ref{thm:unknown-targets}, beginning with standard graph-theoretic notation.

\subsection{Preliminaries}
Consider a directed graph $\gG = (\sv, \se)$. A \emph{path} $\pi$ between nodes $i$ and $j$ is a sequence $(i_0, \varepsilon_1, i_1, \dots, \varepsilon_n, i_n )$, where $\{i_0, \dots, i_n\} \subseteq \sv$ and $\{\varepsilon_1, \dots, \varepsilon_n\} \subseteq \se$, with $i_0 = i$ and $i_n = j$. A path is \emph{directed} if each edge $\varepsilon_k$ follows the form $i_{k-1} \to i_k$ for all $k \in [n]$. A cycle through node $i$ consists of a directed path from $i$ to some node $j$ and an additional edge $j \to i$. For any node \(i \in \sv\), the \emph{ancestor set} is defined as \(\an_\gG(i) := \{j \in \sv \mid \text{a directed path from } j \text{ to } i \text{ exists in } \gG\}\), while the \emph{descendant set} is given by \(\de_\gG(i) := \{j \in \sv \mid \text{a directed path from } i \text{ to } j \text{ exists in } \gG\}\). The \emph{strongly connected component} of $i$, denoted \(\scn_\gG(i)\), is the intersection of its ancestors and descendants: \(\scn_\gG(i) = \an_\gG(i) \cap \de_\gG(i)\). 

\begin{definition}[Collider]
    For a directed graph $\gG = (\sv, \se)$, a node $i_k \in \sv$ in a path $\pi = (i_0, \varepsilon_1, i_1, \varepsilon_2, \ldots, i_{n-1}, \varepsilon_n, i_n)$ is called a \emph{collider} if $k \neq 0,n$ (non-endpoint) and the two edges $\varepsilon_{k}, \varepsilon_{k+1}$ have their heads pointed at $i$, i.e., the subpath $(i_{k-1}, \varepsilon_k, i_k, \varepsilon_{k+1}, i_{k+1})$ is of the form $i_{k-1} \to i_k \gets i_{k+1}$. The node $i_k$ is called a \emph{non-collider} if $i_k$ is not a collider.
\end{definition}
 
 \begin{definition}[$d$-separation]
    Let $\gG = (\sv, \se)$ be a directed graph and let $C \subseteq \sv$ be a subset of nodes. A path $\pi = (i_0, \varepsilon_1, i_1, \varepsilon_2, \ldots, i_{n-1}, \varepsilon_n, i_n)$ is said to be $d$-\emph{blocked} given $\sce$ if
    \begin{enumerate}
        \item $\pi$ contains a collider $i_k \notin \an_\gG(C)$
        \item $\pi$ contains a non-collider $i_k \in C$.
    \end{enumerate}
    The path $\pi$ is said to be $d$-\emph{open} given $C$ if it is not $d$-blocked. Two subsets of nodes $A, B \subseteq \sv$ is said to be $d$-separated given $C$ if all paths between $a$ and $b$, where $a\in A$ and $b \in B$, is $d$-blocked given $C$, and is denoted by $$A \mperp_{\gG}^d B \mid C.$$
\end{definition}

If the underlying graph is acyclic, $d$-separation implies conditional independence. That is, for subsets of nodes $A, B, C \subseteq \sv$,
$$A \mperp_{\gG}^d B \mid C \implies \rvx_A \mperp_{p_\gG} \rvx_B \mid \rvx_C,$$
where $\mperp_{p_\gG}$ denotes conditional independence, and $p_\gG$ denotes the observational distribution. This is known as the \emph{directed global Markov property} of $\gG$ \citep{forré2017markovpropertiesgraphicalmodels}. However, in general, cyclic graphs do not obey the directed global Markov property as shown by \citep{10.1214/21-AOS2064, spirtes2013directedcyclicgraphicalrepresentations}.

\citep{forré2017markovpropertiesgraphicalmodels} proposed  $\sigma$-separation, a generalization of $d$-separation that extends the directed global Markov property to graphs with cycles.

\begin{definition}[$\sigma$-separation]
    Let $\gG = (\sv, \se)$ be a directed graph and let $C \subseteq \sv$ be a subset of nodes. A path $\pi = (i_0, \varepsilon_1, i_1, \varepsilon_2, \ldots, i_{n-1}, \varepsilon_n, i_n)$ is said to be $\sigma$-\emph{blocked} given $\sce$ if
    \vspace{-0.2cm}
    \begin{enumerate}
    \setlength\itemsep{0em}
        \item the first node of $\pi$, $i_0 \in C$ or its last node $i_n \in C$, or
        \item $\pi$ contains a collider $i_k \notin \an_\gG(C)$
        \item $\pi$ contains a non-collider $i_k \in C$ that points towards a neighbor that is not in the same strongly connected component as $i_k$ in $\gG$, i.e, such that $i_{k-1} \gets i_k$ in $\pi$ and $i_{k-1} \notin \scn_\gG(i_k)$, or $i_k \to i_{k+1}$ in $\pi$ and $i_{k+1} \notin \scn_\gG(i_k)$.
    \end{enumerate}
    The path $\pi$ is said to be $\sigma$-\emph{open} given $C$ if it is not $\sigma$-blocked. Two subsets of nodes $A, B \subseteq \sv$ is said to be $\sigma$-separated given $C$ if all paths between $a$ and $b$, where $a\in A$ and $b \in B$, is $\sigma$-blocked given $C$, and is denoted by $$A \mperp_{\gG}^\sigma B \mid C.$$
\end{definition}
Note that $\sigma$-separation reduces to $d$-separation for acyclic graphs, that is, when $\scn_\gG(i) = \{i\}$ for all $i \in \sv$. 

With $\sigma$-separation in place, we can now state the generalized directed global Markov property.
 
\begin{definition}[General directed global Markov property \citep{forré2017markovpropertiesgraphicalmodels}]
    Let $\gG = (\sv, \se)$ be a directed graph and $p_\gG$ denote the probability density of the observations $\rvx$. The probability density $p_\gG$ satisfies the \emph{general directed global Markov property} if for $A, B, C \subseteq \sv$
    $$A \mperp_\gG^\sigma B \mid C \implies \rvx_A \mperp_{p_\gG} \rvx_B \mid \rvx_C,$$
    that is, $\rvx_A$ and $\rvx_B$ are conditionally independent given $\rvx_C$.
\end{definition}

\subsection{Joint Causal Modelling and Markov properties}

\begin{figure}[!h]
    \centering

    \scalebox{0.8}{

    \begin{tikzpicture}

    \begin{scope}[xshift=0cm]

        \node (x1) [circle, draw=black] at (0,0) {$X_1$};

        \node (x2) [circle, draw=black] at (2,0) {$X_2$};

        \node (x3) [circle, draw=black] at (0,-2) {$X_3$};

        \node (x4) [circle, draw=black] at (2,-2) {$X_4$};

        \node at (1, -3) {$\gG$};

    \end{scope}

    \begin{scope}[xshift=6.8cm]

        \node (xc1) [circle, draw=black] at (0,0) {$X_1$};

        \node (xc2) [circle, draw=black] at (2,0) {$X_2$};

        \node (xc3) [circle, draw=black] at (0,-2) {$X_3$};

        \node (xc4) [circle, draw=black] at (2,-2) {$X_4$};

        \node (c1) [circle, draw=black] at (-1, -1) {$C_1$};

        \node (c2) [circle, draw=black] at (3, -1) {$C_2$};

        \node at (1, -3) {$\gG^\si$};

    \end{scope}

    \draw [thick, ->] (x1) -- (x3);

    \draw [thick, ->] (x3) to[out=-30, in=-150] (x4);

    \draw [thick, ->] (x4) to[out=150, in=30] (x3);

    \draw [thick, ->] (x2) -- (x4);

    % \draw [thick, ->] (xa2) -- (xa4);

    % \draw [thick, ->] (xa3) -- (xa4);

    % \draw [thick, ->] (xb1) -- (xb3);

    % \draw [thick, ->] (xb4) -- (xb3);

    \draw [thick, ->] (xc1) -- (xc3);

    \draw [thick, ->] (xc3) to[out=-30, in=-150] (xc4);

    \draw [thick, ->] (xc4) to[out=150, in=30] (xc3);

    \draw [thick, ->] (xc2) -- (xc4);

    \draw [thick, red, ->] (c1) -- (xc3);

    \draw [thick, red, ->] (c2) -- (xc4);

    \end{tikzpicture}}

    \caption{Illustration of the augmented graph $\gG^\si$ corresponding to the set of interventional targets $\si = \{\emptyset, \{X_3\}, \{X_4\}\}$.}

    \label{fig:augmented-graph}

\end{figure}

To integrate multiple interventional settings in a single causal graph, we adopt the idea of \emph{joint causal model} proposed by \citep{Mooij2016JointCI} by introducing a new set of context variables $\rvc^\si = (\rvc_1, \ldots, \rvc_K)$ each representing another interventional setting. (The scenario where $\rvc_k = \emptyset$ for all $k = 1, \dots, K$ corresponds to the observational setting.) Given a family of interventional targets $\si = \{I_k\}_{k=1}^K$ , we built an \emph{augmented graph} $\gG^\si$ from the parentless context variables $\rvc^\si$ along with the existing system variables $\rvx$ by making sure $\ch_\gG(\rvc_k) = I_k$. An example augmented graph with the intervention sets $\si = \{\emptyset, \{X_3\}, \{X_4\}\}$ can be found in Figure \ref{fig:augmented-graph}. The new system represented by the augmented graph is called \emph{meta system}, and the structural equations governing the meta system can be written in the following way: 

\begin{equation}
{f}_i^\prime(\boldsymbol{X}_{pa_{\mathcal{G}}(i)},\rvc^\si_{\pa_\gG^\si(i)},\epsilon_i,\tilde{\epsilon}_i)  =
\begin{cases}
\tilde{f}_i^{(k)}(\boldsymbol{X}_{pa_{\mathcal{G}}(i)}) +\tilde{\epsilon}_i^{(k)}, & \text{if } \exists \, k \in [K] \text{ s.t. } \rvc_K \neq \emptyset \text{ and } X_i \in I_k
, \\[6pt]
f_i(\boldsymbol{X}_{pa_{\mathcal{G}}(i)}) +\epsilon_i, & \text{otherwise}.
\end{cases}
\label{eq:individual_interventional_sem}
\end{equation}
Recall from \eqref{eq:interventional_change_of_vars} that the probability distribution for the interventional setting $I_k$ can be written as:
$$
        p_{I_k,\mathcal{G}}(\rvx) = p_E\Big(\big[(\id - \mathbf{U}_k\bm{f})(\bm{X})\big]_{\mathcal{U}_k}\Big) \times p_{\widetilde{E}}\Big(\big[(\id - \tilde{\mathbf{U}}_k\tilde{\bm{f}})(\bm{X})\big]_{I_k}\Big)\\ \times \bigl|\det(\mathbf{J}_{(\mathbf{id}-\boldsymbol{f}^{(I_k)})}(\boldsymbol{X}))\bigr|.
$$

This can also be written in terms of the context variables in the following way:
$$
p_{\gG^\si}(\rvx \mid \rvc_k = \bm{\xi}_{I_k}, \rvc_{-k} = \emptyset) = p_{I_k,\mathcal{G}}(\mathbf{X}),
$$
where $\bm{\xi}_{I_k}$ is an indicator function. Moreover, the joint distribution can be expressed as:
\begin{equation}
p_{\gG^\si}(\rvc^\si, \rvx) = p_{\gG^\si}(\rvc^\si)p_{\gG^\si}(\rvx \mid \rvc^\si).
\label{eq:prob-meta-system}
\end{equation}

\begin{definition}
    Let $\gG = (\sv, \se)$ be a directed graph, and $\si = \{I_k\}_{k=0}^K$ with $I_0 = \emptyset$ be a family of interventional targets. Let $\sm_\si(\gG)$ denote the set of positive densities $p_{\gG^\si} : \R^{d} \to \R$ such that $p_{\gG^\si}$ is given by \eqref{eq:prob-meta-system} for all $\boldsymbol{f}^{(I_k)}:\R^{d} \to \R^d$, with ${f}^{(I_k)}_i(\rvx) =f^{(I_k)}_i(\rvx_{\pa_\gG}(i))$, such that $\boldsymbol{f}^{(I_k)}$ is unique and invertible.
\end{definition}

\begin{proposition}\label{prop:Markov Property}
    For a directed graph $\gG = (\sv, \se)$ and a family of interventional targets $\si = \{I_k\}_{k=0}^K$ such that $I_0 = \emptyset$, let $p \in \sm_\si(\gG)$, then $p$ satisfies the general directed global Markov property relative to $\gG^\si$.
\end{proposition}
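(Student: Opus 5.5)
The plan is to recognize the meta system from \eqref{eq:individual_interventional_sem} as a single structural causal model on the augmented graph $\gG^\si$, and then to invoke the result of \citet{10.1214/21-AOS2064} (building on \citet{forré2017markovpropertiesgraphicalmodels}). That result says a \emph{simple} SCM, i.e.\ one uniquely solvable with respect to every subset of its strongly connected components, satisfies the general directed global Markov property relative to its graph. So the proof reduces to three steps. First, show that the meta system is a well-defined SCM whose graph is exactly $\gG^\si$. Second, show that it is simple. Third, show that the joint density $p_{\gG^\si}(\rvc^\si,\rvx)$ in \eqref{eq:prob-meta-system} is the density induced by that SCM.

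\textbf{Step 1 (meta SCM).} Treat the context variables $\rvc^\si$ as exogenous, parentless endogenous nodes with some positive distribution $p_{\gG^\si}(\rvc^\si)$; they take values in $\{\emptyset,\bm{\xi}_{I_1},\dots,\bm{\xi}_{I_K}\}$. The exogenous noises are the independent pairs $(\epsilon_i,\tilde\epsilon_i)$, which are independent of $\rvc^\si$ by causal sufficiency.

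By construction, $f'_i$ depends on $\rvx_{\pa_\gG(i)}$ and on those $C_k$ with $i\in I_k$, i.e.\ on $\rvc^\si_{\pa_{\gG^\si}(i)}$. Hence the parent sets of the meta SCM coincide with those of $\gG^\si$.

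\textbf{Step 2 (simplicity).} Since the $C_k$ are parentless, every strongly connected component of $\gG^\si$ either is a singleton $\{C_k\}$ or is contained in $\sv$ and equals a strongly connected component of $\gG$. For a fixed value $\rvc^\si=c$, the structural equations on $\rvx$ reduce to $\rvx=\boldsymbol{f}^{(I_k)}(\rvx)+\boldsymbol{\eta}$, where $\boldsymbol\eta$ is the effective noise from Appendix~\ref{sec:solvability}.

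Unique solvability with respect to an arbitrary subset $S\subseteq\sv$, with $\rvx_{\sv\setminus S}$ held fixed, is then obtained from contractivity. The map $\rvx_S\mapsto[\boldsymbol f^{(I_k)}(\rvx_S,\rvx_{\sv\setminus S})]_S+\boldsymbol\eta_S$ is the restriction of a contraction to a coordinate slice composed with a coordinate projection. It is therefore still $L$-Lipschitz with $L<1$, and Banach's fixed point theorem gives a unique, measurable solution.

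This is where Assumption~\ref{assumption: Interventional stability} and the contractivity of $\boldsymbol f^{(I_k)}$ for each $k$ enter. The uniqueness and invertibility built into the definition of $\sm_\si(\gG)$ play the same role for general members of the model class. Adding the parentless $\rvc^\si$ does not affect solvability, because they are solved trivially as $C_k$ equal to their own value.

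\textbf{Step 3 (density matching and conclusion).} The conditional $p_{\gG^\si}(\rvx\mid \rvc^\si=c)$ equals the push-forward of the noise through the inverse of the forward map \eqref{eq:forward-map-intervention}. By the change of variables \eqref{eq:interventional_change_of_vars}, this is $p_{I_k,\gG}$. The joint density in \eqref{eq:prob-meta-system} is therefore the observational distribution of a simple SCM with graph $\gG^\si$, and the cited theorem yields that $A\mperp^\sigma_{\gG^\si}B\mid C$ implies conditional independence, which is the claim.

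The main obstacle is Step 2. Simplicity requires unique solvability for \emph{every} subset of components, not just for the whole system as Assumption~\ref{assumption: Interventional stability} literally states. The contraction argument handles this for the contractive parametrization. For a general $p\in\sm_\si(\gG)$, where only global invertibility is assumed, I would either read the definition of $\sm_\si(\gG)$ as including contractivity, or fall back on the weaker result of \citet{10.1214/21-AOS2064} that unique solvability with respect to each strongly connected component (plus ancestral closure) suffices for the $\sigma$-Markov property. I would state explicitly that this componentwise solvability is being assumed.
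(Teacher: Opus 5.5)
Your skeleton matches the paper's proof: a meta-SCM on $\mathcal{G}^{\mathcal{I}}$, parentless contexts that add no cycles, solvability, then the $\sigma$-Markov theorem of \citet{10.1214/21-AOS2064}. Your contraction-on-slices argument supplies the solvability step that the paper only asserts.

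There is, however, a genuine gap in Steps~1 and~3, which the paper's short proof also passes over: the distribution of the context variables. The Markov theorem you invoke is for SCMs with mutually independent exogenous variables. An SCM whose graph is exactly $\mathcal{G}^{\mathcal{I}}$ (parentless $C_k$, no edges among them) therefore forces $p(\bm{C}^{\mathcal{I}})=\prod_k p(C_k)$. You leave $p(\bm{C}^{\mathcal{I}})$ arbitrary and, as you describe it, supported on configurations with at most one active context. Realizing that requires a common exogenous source for the contexts, i.e.\ bidirected edges $C_j\leftrightarrow C_k$ that $\mathcal{G}^{\mathcal{I}}$ does not have. In that case the conclusion is false. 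Every path between the two source nodes $C_j, C_k$ contains a collider and $\mathrm{an}(\emptyset)=\emptyset$, so $C_j$ and $C_k$ are $\sigma$-separated given $\emptyset$. Yet $P(C_j\neq\emptyset,\,C_k\neq\emptyset)=0<P(C_j\neq\emptyset)\,P(C_k\neq\emptyset)$ whenever $K\ge 2$.

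If you instead take the contexts independent, configurations with several active contexts get positive probability. Step~2's reduction to $\bm{X}=\boldsymbol f^{(I_k)}(\bm{X})+\boldsymbol\eta$ for a single $k$ then no longer covers them. You would need a mechanism for the union of targets and its unique solvability, which neither Assumption~\ref{assumption: Interventional stability} nor contractivity of each $\boldsymbol f^{(I_k)}$ separately provides. There are two repairs:
\begin{enumerate}
\item assume a product context distribution and extend solvability to every union of targets that can occur; or
\item adopt JCI's generic-context convention (a shared exogenous parent for all $C_k$, with the mechanism defined arbitrarily, e.g.\ observationally, on the null multi-active configurations), and prove the Markov property relative to $\mathcal{G}^{\mathcal{I}}$ augmented with bidirected edges among the contexts.
\end{enumerate}

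On the obstacle you flag, no extra assumption is needed. For continuous $\boldsymbol f^{(I_k)}$, bijectivity of $\boldsymbol F:=\id-\boldsymbol f^{(I_k)}$ on $\mathbb{R}^d$ already gives unique solvability with respect to every strongly connected component. That is what the paper asserts and what the $\sigma$-Markov result of \citet{10.1214/21-AOS2064} requires; full simplicity is not needed. The argument runs as follows.
\begin{itemize}
\item Take a source component $S$ of $\mathcal{G}$ and $T=V\setminus S$, so $\boldsymbol F_S$ depends on $x_S$ only.
\item Global injectivity makes each $x_T\mapsto\boldsymbol F_T(x_S,x_T)$ a continuous injection of $\mathbb{R}^{|T|}$, hence open by invariance of domain.
\item Surjectivity and injectivity of $\boldsymbol F$ then write $\mathbb{R}^{|T|}$, for each $e_S$, as a disjoint union of these nonempty open images over the fibre $\boldsymbol F_S^{-1}(e_S)$.
\item Connectedness forces every fibre to be a single point. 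Hence $\boldsymbol F_S$ is bijective and each $\boldsymbol F_T(x_S,\cdot)$ is bijective.
\item Recurse over the components of $T$ in topological order, treating $x_S$ as a parameter.
\end{itemize}
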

 
\begin{proof}
    For a directed graph $\gG$, suppose the intervened mechanisms $\boldsymbol{f}^{(I_k)}$ are uniquely specified and invertible. Then the corresponding structural equations admit a unique solution on each strongly connected component of $\gG$. Moreover, introducing context variables in the augmented graph does not create additional cycles, so the resulting meta-system constitutes a simple SCM. Consequently, by Theorem A.21 in \citep{10.1214/21-AOS2064}, the induced distribution $p_{\gG^\si}$ is well-defined (unique) and satisfies the general directed global Markov property.
\end{proof}

We now introduce the interventional Markov equivalence class for directed graphs, defined in terms of the set of distributions they induce.

\begin{definition}[$\si$-Markov Equivalence Class]
    Two directed graphs $\gG_1$ and $\gG_2$ are $\si$-Markov equivalent if and only if $\sm_\si(\gG_1) = \sm_\si(\gG_2)$, denoted as $\gG_1 \equiv_\si \gG_2$. The set of all directed graphs that are $\si$-Markov equivalent to $\gG_1$ is the $\si$-Markov equivalence class of $\gG_1$, denoted as $\si$-MEC$(\gG_1)$.
\end{definition}

\subsection{Proof of Theorem~\ref{thm:unknown-targets}} \label{app:proof}

In this section, we prove the main theorem of this paper. We recall the score function introduced in Section \ref{sec:theorem}:
$$
\mathcal{S}(\mathcal{G},\mathcal{I}) = 
\sup_{\boldsymbol{\theta}}\; \sum_{k=1}^{K} 
\mathbb{E}_{\boldsymbol{X} \sim p^{(k)}} 
\big[ \log p_{I_k,\mathcal{G}}(\boldsymbol{X}) \big] \\\;-\; \lambda_\mathcal{G}|\mathcal{G}|\;-\; \lambda_\mathcal{I}|\mathcal{I}| , 
$$
where $p^{(k)}$ is the ground truth distribution for the $k$-th experiment, and $\boldsymbol{\theta}$ is the parameters of the causal system. We can rewrite the score function for the metasystem introduced above as:

\[
\mathcal{S}(\gG^\si) =\sup_{\theta} \mathop{\pE}_{(\rvx, \rvc) \sim p_\si^\ast} \log p_{\gG^\si}(\rvx, \rvc \mid \vtheta) - \lambda'|\gG^\si|,
\]

where $p_\si^\ast$ is the joint ground truth distribution for all of the variables in the augmented graph and the $p_{\gG^\si}(\rvx, \rvc \mid \vtheta)$ is given by \eqref{eq:prob-meta-system} for a specific choice of $\vtheta$. We define $\spe_\si(\gG)$ as the collection of all distributions $p_{\gG^\si}(\rvx, \rvc \mid \vtheta)$ that can be represented by the model specified in \eqref{eq:interventional_sem}, \eqref{eq: causal mechanism}, and \eqref{eq: noisy function causal mechanism}. That is,
\begin{equation}
    \spe_\si(\gG) := \{p \mid \exists\,\vtheta \text{ s.t } p = p_{\gG^\si}(\cdot \mid \vtheta)\}. \label{eq:col_of_dists}
\end{equation}

Theorem \ref{thm:unknown-targets} relies on four assumptions. First of which is to ensure that the model is able to express the ground truth distribution.

\begin{assumption}[Sufficient Capacity] The joint ground truth distribution $p_\si^\ast$ is such that $p_\si^\ast \in \spe_{\si}^\ast(\gG^\ast)$, where $\gG^\ast$ is the ground truth graph and ${\si}^\ast$ is the ground truth intervention family.
\label{assume:sufficient-capacity}
\end{assumption}
The second assumption is the generalization of the faithfulness to the interventional setting.
\begin{assumption}[$\si$-$\sigma$-faithfulness]
Let $\rvv = (\rvx, \rvc^\si)$, for any subset of nodes $A, B, C \subseteq \sv \cup \rvc^\si$, and $I_k \in \si$
$$A \mathop{\not\perp}_{\gG^\si}^\sigma B \mid C \implies \rvv_A \mathop{\not\perp}_{p_{\gG^\si}} \rvv_B \mid \rvv_C. $$
\label{assume:faithfulness}
\end{assumption}
The above assumption entails that any conditional independence observed in the data must correspond to a $\sigma$-separation in the associated interventional ground-truth graph. Third assumption is to ensure the model distribution is strictly positive.

\begin{assumption}[Strict positivity]
The joint model distribution $p_{\gG^\si}(\cdot \mid \vtheta)$ is strictly positive for all parameters $\vtheta$, directed graph $\gG$ and interventional family $\si$. 
\label{assume:positivity}
\end{assumption}
From Assumption \ref{assume:positivity} and \eqref{eq:col_of_dists} we can see that $\spe_\si(\gG) \subseteq \sm_\si(\gG)$. 

\begin{assumption}[Finite differential entropy]
    For $\si = \{I_k\}_{k=0}^K$,
    $$ | \pE_{p_\si^\ast}\log p_\si^\ast(\rvx, \rvc) | < \infty.$$
    \label{assume:finite-entropy}
\end{assumption}

The final assumption is to ensure that both $\mathcal{S}(\gG^\si)$ and $\mathcal{S}(\gG^{\ast\si^\ast})$ don't go to infinity, as illustrated by the following lemma from \citep{10.5555/3495724.3497558}.

\begin{lemma}[Finiteness of the score function \citep{10.5555/3495724.3497558}]
    Under assumptions \ref{assume:sufficient-capacity} and \ref{assume:finite-entropy}, $\mathcal{S}(\gG^\si) < \infty$.
\end{lemma}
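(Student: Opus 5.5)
The plan is to bound the expected log-likelihood term of $\mathcal{S}(\gG^\si)$, uniformly over $\vtheta$, by the negative differential entropy of the ground truth $p_\si^\ast$; Assumption~\ref{assume:finite-entropy} says this quantity is finite. The penalty $-\lambda'|\gG^\si|$ is non-positive because $\lambda'\ge 0$ and $|\gG^\si|\ge 0$, so it can only lower the score. It therefore suffices to show
\[
\sup_{\vtheta}\,\pE_{p_\si^\ast}\log p_{\gG^\si}(\rvx,\rvc\mid\vtheta)\;\le\;\pE_{p_\si^\ast}\log p_\si^\ast(\rvx,\rvc)\;<\;\infty .
\]

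The key step is a Gibbs-inequality argument, carried out so that no $\infty-\infty$ expression appears. Fix $\vtheta$ and write $p_\vtheta := p_{\gG^\si}(\cdot\mid\vtheta)$. By Assumption~\ref{assume:positivity}, $p_\vtheta>0$ everywhere, so the ratio $r := p_\vtheta/p_\si^\ast$ is well defined and positive $p_\si^\ast$-almost surely.

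First I control the positive part of $\log r$. Using $\log x\le x-1$ gives $(\log r)^+\le r$, and $\pE_{p_\si^\ast}[r]=\int_{\{p_\si^\ast>0\}}p_\vtheta\le 1$. Hence $\pE_{p_\si^\ast}\log r$ is well defined in $[-\infty,\infty)$. By Jensen's inequality (concavity of $\log$), $\pE_{p_\si^\ast}\log r\le\log\pE_{p_\si^\ast}[r]\le 0$. In other words, $-\mathrm{KL}(p_\si^\ast\,\|\,p_\vtheta)\le 0$.

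Next, $\log p_\si^\ast$ is integrable by Assumption~\ref{assume:finite-entropy}. So I may add it to $\log r$ inside the expectation, which gives
\[
\pE_{p_\si^\ast}\log p_\vtheta=\pE_{p_\si^\ast}\log p_\si^\ast+\pE_{p_\si^\ast}\log r\le \pE_{p_\si^\ast}\log p_\si^\ast .
\]
The right-hand side does not depend on $\vtheta$. Taking the supremum over $\vtheta$ and subtracting the penalty therefore yields $\mathcal{S}(\gG^\si)\le \pE_{p_\si^\ast}\log p_\si^\ast<\infty$, for every graph $\gG$ and every family $\si$.

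For the true pair $(\gG^\ast,\si^\ast)$ I also record the matching lower bound, which is what makes the score comparisons in Theorem~\ref{thm:unknown-targets} meaningful. By Assumption~\ref{assume:sufficient-capacity} there is a $\vtheta^\ast$ with $p_{\vtheta^\ast}=p_\si^\ast$. This gives $\mathcal{S}(\gG^{\ast\si^\ast})\ge\pE_{p_\si^\ast}\log p_\si^\ast-\lambda'|\gG^{\ast\si^\ast}|>-\infty$. Combined with the upper bound, $\mathcal{S}(\gG^{\ast\si^\ast})$ is a finite real number.

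The only delicate point is integrability. A priori, $\pE_{p_\si^\ast}\log p_\vtheta$ could be $+\infty$ or undefined, so the naive identity ``log-likelihood equals negative entropy minus KL'' cannot be used directly. The bound $(\log r)^+\le r$ is exactly what rules this out, and it is the step I would check most carefully; in particular, the context variables $\rvc$ are discrete, so all densities and integrals are taken with respect to the appropriate product (counting $\times$ Lebesgue) reference measure.
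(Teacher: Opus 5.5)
Your proposal is correct and follows essentially the same route as the argument the paper imports from DCDI. Gibbs' inequality ($D_{KL}\ge 0$) bounds $\mathbb{E}_{p_{\mathcal{I}}^\ast}\log p_\theta$ by $\mathbb{E}_{p_{\mathcal{I}}^\ast}\log p_{\mathcal{I}}^\ast$, which is finite by Assumption~\ref{assume:finite-entropy}; sufficient capacity then gives the matching lower bound for the true pair, and your $(\log r)^+\le r$ step simply makes the integrability bookkeeping explicit. One small remark: the appeal to Assumption~\ref{assume:positivity} is unnecessary, since on $\{p_\theta=0\}$ you have $\log r=-\infty$ and $(\log r)^+\le r$ still holds, so the upper bound uses only the hypotheses listed in the lemma.
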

Using the results of \citep{10.5555/3495724.3497558}, we can write the score difference between $\gG^{\ast\si^\ast}$ and $\gG^{\si}$ as a KL-divergence minimization term plus the difference between their regularization penalties.

\begin{lemma}[Rewritting the score function \citep{10.5555/3495724.3497558}]\label{lemma:scores}
    Under assumptions \ref{assume:sufficient-capacity} and \ref{assume:finite-entropy}, we have
    $$\ses(\gG^{\ast\si^\ast}) - \ses(\gG^{\si}) = \inf_\theta D_{KL}(p_\si^\ast\| p_{\gG^\si}(\cdot \mid \vtheta)) + \lambda' (|\gG^{\si}| - |\gG^{\ast\si^\ast}|).$$
\end{lemma}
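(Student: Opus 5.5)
The statement immediately above is Lemma~\ref{lemma:scores}: the score difference equals an infimal KL term plus the penalty difference. I would prove it directly from the definition of $\ses(\gG^\si)$, following the DCDI argument.

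First, rewrite each score as an expectation under the common ground-truth joint $p_\si^\ast$. For any $\vtheta$, add and subtract $\pE_{p_\si^\ast}\log p_\si^\ast(\rvx,\rvc)$, which is finite by Assumption~\ref{assume:finite-entropy}. This gives
\[
\pE_{p_\si^\ast}\log p_{\gG^\si}(\rvx,\rvc\mid\vtheta) = \pE_{p_\si^\ast}\log p_\si^\ast(\rvx,\rvc) - D_{KL}\bigl(p_\si^\ast\,\|\,p_{\gG^\si}(\cdot\mid\vtheta)\bigr).
\]
The split is legitimate because the entropy term is finite. If the cross term is $-\infty$, the KL is $+\infty$ and the identity still holds in the extended reals. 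Taking $\sup_\vtheta$ turns the KL term into $-\inf_\vtheta D_{KL}$, so
\[
\ses(\gG^\si) = \pE_{p_\si^\ast}\log p_\si^\ast - \inf_\vtheta D_{KL}\bigl(p_\si^\ast\,\|\,p_{\gG^\si}(\cdot\mid\vtheta)\bigr) - \lambda'|\gG^\si|.
\]

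Second, apply the same identity to the ground-truth pair $(\gG^\ast,\si^\ast)$. By Assumption~\ref{assume:sufficient-capacity}, $p_\si^\ast\in\spe_{\si^\ast}(\gG^\ast)$, so some $\vtheta^\ast$ satisfies $p_{\gG^{\ast\si^\ast}}(\cdot\mid\vtheta^\ast)=p_\si^\ast$. Hence the infimal KL for the ground truth is $0$, and it is attained. Therefore
\[
\ses(\gG^{\ast\si^\ast}) = \pE_{p_\si^\ast}\log p_\si^\ast - \lambda'|\gG^{\ast\si^\ast}|,
\]
which is finite. This is consistent with the finiteness lemma.

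Third, subtract the two expressions. The common entropy term cancels, and what remains is exactly $\inf_\vtheta D_{KL}(p_\si^\ast\|p_{\gG^\si}(\cdot\mid\vtheta)) + \lambda'(|\gG^\si|-|\gG^{\ast\si^\ast}|)$.

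The main point needing care is the cancellation of the entropy term. It requires that $\pE\log p_\si^\ast$ is finite, so that no $\infty-\infty$ arises; this is exactly where Assumption~\ref{assume:finite-entropy} is used. The other subtlety is that all candidates $(\gG,\si)$ are evaluated against the same joint $p_\si^\ast$ over $(\rvx,\rvc)$. This holds because the context variables and their marginal $p(\rvc)$ are fixed by the experimental design and do not depend on the candidate $\si$. So I would make explicit that $\si$ enters only through the conditional model $p_{\gG^\si}(\rvx\mid\rvc)$.
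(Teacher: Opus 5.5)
Your proposal is correct and follows the same route as the DCDI argument \citep{10.5555/3495724.3497558}, which the paper cites instead of giving a proof. The steps match. Assumption~\ref{assume:finite-entropy} lets you split off the finite term $\mathbb{E}_{p^\ast}\log p^\ast$, so that $\sup_\theta$ of the expected log-likelihood becomes that term minus $\inf_\theta D_{KL}$. Assumption~\ref{assume:sufficient-capacity} makes the infimal KL zero for the ground-truth pair, and subtraction then gives the identity. Your two points of care, the extended-real bookkeeping and the requirement that every candidate $(\mathcal{G},\mathcal{I})$ be scored against the same data distribution over $(\bm{X},\bm{C})$ despite the paper's $p^\ast_{\mathcal{I}}$ notation, are exactly what the cancellation of the entropy term needs.
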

In order to prove Theorem \ref{thm:unknown-targets} we will take the following technical lemma from \citep{sethuraman2025differentiable}.
\begin{lemma}\label{lemma:strictly-pos}
    Let $\gG = (\sv, \se)$ be a directed graph, for a set of interventional targets $\si = \{I_k\}_{k=0}^K$, and $p^\ast \notin \sm_\si(\gG))$, then
    $$\inf_{p\in \sm_\si(\gG))} D(p^\ast\|p) > 0.$$
\end{lemma}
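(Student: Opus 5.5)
The plan is to reduce the statement to a strictly positive conditional mutual information computed under $p^\ast$. This follows the approach used by \citet{10.5555/3495724.3497558} for the acyclic case, with $d$-separation replaced by $\sigma$-separation in the augmented graph $\gG^\si$. The argument has three steps.

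\textbf{Step 1: from $p^\ast\notin\sm_\si(\gG)$ to a violated independence.} I want a triple $A,B,C$ of nodes of $\gG^\si$ such that $A\mperp^\sigma_{\gG^\si}B\mid C$ holds, but $\rvv_A\not\perp_{p^\ast}\rvv_B\mid\rvv_C$. By Proposition~\ref{prop:Markov Property}, every $p\in\sm_\si(\gG)$ satisfies all $\sigma$-separation statements of $\gG^\si$. So it suffices to show the converse: a positive density that satisfies every such independence lies in $\sm_\si(\gG)$.

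I expect this converse to be the main obstacle. My first attempt would be a realization argument:
\begin{itemize}
\item Decompose $\gG$ into strongly connected components and order them topologically, so the components form a DAG.
\item Write the density as a product of conditionals of each component given its parents outside the component, together with the context variables.
\item On each component, build the mechanisms $\boldsymbol{f}^{(I_k)}$ and noises with a triangular (Knothe--Rosenblatt) transport. Within a component, each variable is allowed to depend on all other variables of that component, since the component's strong connectivity permits this.
\item Verify that the resulting map is uniquely solvable and invertible, so that it lies in the defining family of $\sm_\si(\gG)$.
\end{itemize}
If this exact-representation step fails in full generality, I would weaken the claim. The lemma is only invoked with $p^\ast$ the faithful ground-truth distribution in $\sm_{\si^\ast}(\gG^\ast)$, and for such a $p^\ast$, $p^\ast\notin\sm_\si(\gG)$ differs in its $\sigma$-separation statements. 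In that case, Assumption~\ref{assume:faithfulness} directly supplies the violated independence.

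\textbf{Step 2: lower bound via marginalization.} Fix such $A,B,C$ and any $p\in\sm_\si(\gG)$. By the data-processing inequality for KL divergence,
\begin{equation*}
D_{KL}(p^\ast\|p)\;\ge\;D_{KL}\bigl(p^\ast_{ABC}\,\|\,p_{ABC}\bigr),
\end{equation*}
where $p_{ABC}$ is the marginal of $p$ on $A\cup B\cup C$. Because $p$ is Markov, this marginal factorizes as $p_{ABC}=p_C\,p_{A\mid C}\,p_{B\mid C}$.

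\textbf{Step 3: optimize over the factorized class.} I will minimize $D_{KL}(p^\ast_{ABC}\|q)$ over all densities $q$ of the form $q_C\,q_{A\mid C}\,q_{B\mid C}$. Applying the chain rule for KL divergence, the optimum is attained at
\begin{equation*}
q=p^\ast_C\,p^\ast_{A\mid C}\,p^\ast_{B\mid C},
\end{equation*}
and the optimal value equals the conditional mutual information $\mathrm{I}_{p^\ast}(\rvv_A;\rvv_B\mid\rvv_C)$.

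This value is finite by Assumption~\ref{assume:finite-entropy} together with Assumption~\ref{assume:positivity}. It is strictly positive because the conditional independence fails under $p^\ast$. The bound does not depend on $p$, so taking the infimum over $p\in\sm_\si(\gG)$ gives
\begin{equation*}
\inf_{p\in\sm_\si(\gG)}D_{KL}(p^\ast\|p)\;\ge\;\mathrm{I}_{p^\ast}(\rvv_A;\rvv_B\mid\rvv_C)\;>\;0.
\end{equation*}
The context variables are discrete, so the mutual-information computation mixes sums and integrals. This is routine but should be stated explicitly.
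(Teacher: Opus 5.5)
The paper gives no proof of its own here; it defers to \citet{sethuraman2025differentiable}. The natural benchmark is therefore the DCDI-style argument of \citet{10.5555/3495724.3497558}, and your Steps~2--3 reproduce it correctly. Suppose some triple $(A,B,C)$ is $\sigma$-separated in $\mathcal{G}^{\mathcal{I}}$ but conditionally dependent under $p^\ast$. Then data processing and the chain rule give $\inf_{p\in\mathcal{M}_{\mathcal{I}}(\mathcal{G})}D_{KL}(p^\ast\|p)\ge \mathrm{I}_{p^\ast}(\bm{V}_A;\bm{V}_B\mid\bm{V}_C)>0$.

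The gap is Step~1, which carries all the content of the lemma, and neither your realization argument nor your fallback closes it. The converse you need is that every positive density satisfying the $\sigma$-Markov property of $\mathcal{G}^{\mathcal{I}}$ lies in $\mathcal{M}_{\mathcal{I}}(\mathcal{G})$. This is false for the class as defined here, whose members are built from \eqref{eq:interventional_change_of_vars}, i.e.\ additive-noise systems with independent noises.
\begin{itemize}
\item Take the graph $1\to 2$ with only the observational regime. The $\sigma$-Markov property imposes no constraint.
\item Yet $\mathcal{M}_{\mathcal{I}}(\mathcal{G})$ contains only densities $p_{E_1}(x_1-c)\,p_{E_2}\bigl(x_2-f_2(x_1)\bigr)$.
\item So a heteroscedastic $p^\ast$ with $X_2\mid X_1\sim\mathcal{N}(0,e^{X_1})$ lies outside the class while violating no separation, and Steps~2--3 have no triple to work with.
\end{itemize}

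Your Knothe--Rosenblatt sketch would fail even for a non-additive class. The mechanism $f^{(I_k)}_i$ may read only $\bm{x}_{\mathrm{pa}_{\mathcal{G}}(i)}$, and strong connectivity gives directed paths, not edges: in $1\to2\to3\to1$, $f_1$ sees only $x_3$. A triangular map on a component is therefore not a member of the class. Whether every conditional of a component given its external parents can be produced by a parent-respecting, uniquely solvable cyclic system is exactly the unproved claim.

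The faithfulness fallback is circular. Faithfulness turns non-separations of $\mathcal{G}^{\ast\mathcal{I}^\ast}$ into dependencies of $p^\ast$. What you need, however, is a separation of $\mathcal{G}^{\mathcal{I}}$ that is \emph{not} a separation of $\mathcal{G}^{\ast\mathcal{I}^\ast}$, and deducing its existence from $p^\ast\notin\mathcal{M}_{\mathcal{I}}(\mathcal{G})$ is again the missing converse. Consider the case where every $\sigma$-separation of $\mathcal{G}^{\mathcal{I}}$ also holds in $\mathcal{G}^{\ast\mathcal{I}^\ast}$. This is the situation left over in the second case of the proof of Theorem~\ref{thm:unknown-targets}, where the lemma is invoked. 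There $p^\ast$ satisfies every conditional independence of $\mathcal{G}^{\mathcal{I}}$, so no mutual-information lower bound is available.

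What you have actually shown is a weaker statement: if $p^\ast$ violates some $\sigma$-separation of $\mathcal{G}^{\mathcal{I}}$, the infimum is at least the corresponding conditional mutual information. To obtain the lemma as stated you have two options.
\begin{itemize}
\item Take $\mathcal{M}_{\mathcal{I}}(\mathcal{G})$ to be the set of positive densities that are $\sigma$-Markov with respect to $\mathcal{G}^{\mathcal{I}}$. Then Step~1 holds by definition and Steps~2--3 finish the proof, mirroring DCDI, whose model class is a factorization class characterized by the $\mathcal{I}$-Markov property.
\item Prove separately that the additive-noise class stays at positive KL distance from every density outside it that satisfies all its conditional independencies. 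This is a closedness property of the model class, and no conditional-independence bound can deliver it.
\end{itemize}
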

The proof of this lemma can be found in \citep{sethuraman2025differentiable}.

We recall Theorem \ref{thm:unknown-targets} and present its proof. 

\begin{theorem}{\ref{thm:unknown-targets}}
Let $\mathcal{G}^*$ be the ground truth graph and let  $\mathcal{I}^*$ be the ground truth intervention family.
$(\hat{\mathcal{G}}, \hat{\mathcal{I}}) \in \arg\max_{\mathcal{G} , \mathcal{I}} \mathcal{S}(\mathcal{G},\mathcal{I})$.
Under the Assumptions \ref{assumption: Interventional stability}, \ref{assume:sufficient-capacity}, \ref{assume:faithfulness}, \ref{assume:positivity} and \ref{assume:finite-entropy}, and for suitable
$\lambda_\mathcal{G},\lambda_\mathcal{I}> 0$,
$\hat{\mathcal{G}}$ is $\mathcal{I}^*$-Markov equivalent to $\mathcal{G}^*$ and $\hat{\mathcal{I}} = \mathcal{I}^*$.  
\end{theorem}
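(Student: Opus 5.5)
The plan follows the DCDI-style argument: compare $\ses(\gG^{\ast\si^\ast})$ with $\ses(\gG^\si)$ for an arbitrary candidate pair $(\gG,\si)$ via Lemma~\ref{lemma:scores}. That lemma gives
\[
\ses(\gG^{\ast\si^\ast}) - \ses(\gG^{\si}) = \inf_\theta D_{KL}(p_\si^\ast\| p_{\gG^\si}(\cdot \mid \vtheta)) + \lambda' (|\gG^{\si}| - |\gG^{\ast\si^\ast}|).
\]
The penalty on the meta graph combines $\lambda_\gG|\gG|$ and $\lambda_\si|\si|$; edges from context nodes $\rvc_k$ count the targets. It then suffices to show that this difference is strictly positive whenever $\gG\not\equiv_{\si^\ast}\gG^\ast$ or $\si\neq\si^\ast$. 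Within the equivalence class with $\si=\si^\ast$, the maximizer is pinned down only up to $\si^\ast$-MEC, which is exactly what is claimed.

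\textbf{Case 1: $p^\ast_{\si}\notin\sm_\si(\gG)$.} Since $\spe_\si(\gG)\subseteq\sm_\si(\gG)$ by Assumption~\ref{assume:positivity}, the infimum over $\theta$ is bounded below by $\inf_{p\in\sm_\si(\gG)}D_{KL}(p^\ast\|p)=:\delta(\gG,\si)$. This quantity is strictly positive by Lemma~\ref{lemma:strictly-pos}. There are finitely many pairs $(\gG,\si)$ on $d$ nodes with $K$ experiments, so $\delta_{\min}=\min\delta(\gG,\si)>0$. Choosing $\lambda_\gG,\lambda_\si$ small enough that $\lambda'\,(d(d-1)+Kd)<\delta_{\min}$ makes the difference positive regardless of the sign of the penalty term.

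\textbf{Case 2: $p^\ast_\si\in\sm_\si(\gG)$.} By Proposition~\ref{prop:Markov Property}, $p^\ast$ then satisfies the general directed global Markov property with respect to $\gG^\si$: every $\sigma$-separation in $\gG^\si$ yields an independence in $p^\ast$. By $\si$-$\sigma$-faithfulness (Assumption~\ref{assume:faithfulness}) applied to the truth, every independence of $p^\ast$ is a $\sigma$-separation in $\gG^{\ast\si^\ast}$. So the $\sigma$-separations of $\gG^\si$ are a subset of those of $\gG^{\ast\si^\ast}$, i.e., $\gG^\si$ is at least as dense in the $\sigma$-connection sense. Two subcases follow.

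\emph{Subcase 2a.} If the inclusion is strict, or $\si\neq\si^\ast$, I will argue that $\gG^\si$ must contain strictly more edges than $\gG^{\ast\si^\ast}$. Then $|\gG^\si|>|\gG^{\ast\si^\ast}|$, and since the KL term is nonnegative, the difference is positive.

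\emph{Subcase 2b.} If the separations coincide and $\si=\si^\ast$, the characterization of \citet{10.1214/21-AOS2064} for $\sigma$-Markov equivalence of directed graphs, augmented with context nodes, gives $\gG\equiv_{\si^\ast}\gG^\ast$.

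To handle the targets, I will use the context nodes explicitly. If $j\in I_k^\ast\setminus I_k$, then faithfulness gives $\rvc_k\not\perp X_j\mid \rvx_{\pa(j)}$-type dependencies, while $\gG^\si$ would $\sigma$-separate them. That contradicts the Markov inclusion, so $I_k^\ast\subseteq I_k$, and any extra target costs penalty.

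The main obstacle is Subcase 2a: turning "more $\sigma$-connections" into "strictly more edges" for cyclic graphs. For DAGs this is the standard minimality argument, but with cycles, graphs in the same $\sigma$-class can have different edge counts; strongly connected components are completed in the skeleton characterization. I will first try to show that any graph $\sigma$-Markov to a distribution faithful to $\gG^{\ast\si^\ast}$ must have, for each $\sigma$-adjacency of the truth, an edge, which gives $|\gG^\si|\ge|\gG^{\ast\si^\ast}|$. Ties are then resolved by noting that the ground-truth class attains the same score, so any tied maximizer is in the $\si^\ast$-MEC. If edge counts can genuinely differ within a class, I will restate $\gG^\ast$ as a sparsest member of its class.
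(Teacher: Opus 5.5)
Your Case~1 is correct, and it coincides with the paper's argument once $p^\ast_{\mathcal{I}}\notin\mathcal{M}_{\mathcal{I}}(\mathcal{G})$ is known. That is: Lemma~\ref{lemma:scores}, then Lemma~\ref{lemma:strictly-pos} to get $\eta(\mathcal{G}^{\mathcal{I}})>0$, then a small enough $\lambda'$. Your uniform bound $\lambda'(d(d-1)+Kd)<\delta_{\min}$ over finitely many pairs is a clean version of the paper's choice over the sparser candidates.

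The paper has no Case~2. It asserts $p^\ast_{\mathcal{I}}\notin\mathcal{M}_{\mathcal{I}}(\mathcal{G})$ whenever the $\sigma$-separations differ. In the direction where $\mathcal{G}^{\mathcal{I}}$ has extra $\sigma$-connections, it gets this by applying $\sigma$-faithfulness to $p^\ast_{\mathcal{I}}$ relative to the \emph{candidate} graph $\mathcal{G}^{\mathcal{I}}$. You are right not to follow that route. It would force $p^\ast_{\mathcal{I}}\notin\mathcal{M}_{\mathcal{I}^\ast}(\mathcal{G})$ for every strict supergraph $\mathcal{G}$ of $\mathcal{G}^\ast$ with an added $\sigma$-connection, which is false because the mechanisms may simply ignore the extra parents. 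Such graphs must be beaten by the sparsity penalty, not by the KL term. But then everything rests on your Subcase~2a, and that is where the proposal has a genuine gap.

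All that Case~2 extracts from $p^\ast_{\mathcal{I}}\in\mathcal{M}_{\mathcal{I}}(\mathcal{G})$ is that $p^\ast_{\mathcal{I}}$ is $\sigma$-Markov to $\mathcal{G}^{\mathcal{I}}$. At that level of information, ``fewer $\sigma$-separations implies strictly more edges'' is false. So is your intermediate claim that each $\sigma$-adjacency of the truth must be an edge, because nodes sharing a strongly connected component are $\sigma$-connected given every set.
\begin{itemize}
\item Let $\mathcal{G}^\ast$ be the complete DAG on $\{1,2,3,4\}$ minus $1\to 4$: five edges, with the single separation $1\perp^{\sigma}_{\mathcal{G}^\ast} 4\mid\{2,3\}$.
\item Let $\mathcal{G}$ be the cycle $1\to2\to3\to4\to1$: four edges and no $\sigma$-separations.
\end{itemize}
The separations of $\mathcal{G}$ are a strict subset of those of $\mathcal{G}^\ast$, yet $|\mathcal{G}|<|\mathcal{G}^\ast|$. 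Restating $\mathcal{G}^\ast$ as a sparsest member of its class does not help, since this $\mathcal{G}$ lies in a different, coarser class. Settling ties by ``the ground-truth class attains the same score'' is circular, because excluding tied maximizers outside the class is exactly what must be shown.

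The target step fails for the same reason. An edge from $C_k$ into one node of a strongly connected component has the same $\sigma$-consequences as edges into all of its nodes. So if $I_k$ contains some $i\in\mathrm{sc}_{\mathcal{G}}(j)$ in place of $j$, the directed path from $C_k$ through $i$ to $j$ is $\sigma$-open given any set excluding the endpoints, and $j\in I_k^\ast\setminus I_k$ yields no contradiction. Even granting $I_k^\ast\subseteq I_k$, ``extra targets cost penalty'' only gives a strict decrease once the total edge count of $\mathcal{G}^{\mathcal{I}}$ is controlled.

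To close the gap you need one of two things:
\begin{itemize}
\item use the functional restrictions built into $\mathcal{M}_{\mathcal{I}}(\mathcal{G})$ (additive noise and the specific intervention model), not merely the $\sigma$-Markov property, to show $\eta(\mathcal{G}^{\mathcal{I}})>0$ for such sparser or target-shifted candidates; or
\item add a sparsest-representation assumption: no pair $(\mathcal{G},\mathcal{I})$ outside the claimed class with $|\mathcal{G}^{\mathcal{I}}|\le|\mathcal{G}^{\ast\mathcal{I}^\ast}|$ has $\eta(\mathcal{G}^{\mathcal{I}})=0$.
\end{itemize}
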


\begin{proof}
We should show that if $\gG \notin \si\text{-MEC}(\gG^\ast)$ or  ${\mathcal{I}} \neq \mathcal{I}^*$ , the score function for the augmented graph $\gG^{\si}$ will be strictly lower than the score function of $\gG^{\ast\si^\ast}$, i.e $\mathcal{S}(\gG^\si)<\mathcal{S}(\gG^{\ast\si^\ast})$.
To show that, first, we define
$$\eta(\gG^\si) := \inf_\theta D_{KL}(p_\si^\ast\|p_{\gG^\si}(\cdot \mid \vtheta)).$$
Then, from Lemma \ref{lemma:scores}, the difference between these two score functions can be written as
\begin{equation}\label{eq:score_difference}
    \ses(\gG^{\ast\si^\ast}) - \ses(\gG^\si) = \eta(\gG^\si) + \lambda'(|\gG^\si| - |\gG^{\ast\si^\ast}|)
\end{equation}

Since $\gG \notin \si\text{-MEC}(\gG^\ast)$ or  ${\mathcal{I}} \neq \mathcal{I}^*$, $\gG^{\ast\si^\ast}$ and $\gG^{\si}$ do not impose the same  $\sigma$-separation constraints; that means there must exist subsets of nodes \( A, B, C \subseteq \sv \cup \rvc^\si \) such that either:
\[
A \mperp_{\gG^\si}^\sigma B \mid C \quad \text{and} \quad A \mathop{\not\perp}_{\gG^{\ast\si^\ast}}^\sigma B \mid C,
\]
or
\[
A \mathop{\not\perp}_{\gG^\si}^\sigma B \mid C \quad \text{and} \quad A \mperp_{\gG^{\ast\si^\ast}}^\sigma B \mid C,
\] From Assumption \ref{assume:sufficient-capacity} we know $p_\si^\ast \in \sm_\si(\gG^\ast)$, then for the first case it must be true that $\rvv_A \not\perp_{p^{(k)}} \rvv_B \mid \rvv_C$ (Assumption~\ref{assume:faithfulness}). Therefore, $p_\si^\ast$ doesn't satisfy the general directed Markov property with respect to $\gG^\si$ and hence $p_\si^\ast \notin \sm_\si(\gG)$. For the second case if we take $p_\si^\ast \in \sm_\si(\gG)$ then from Assumption \ref{assume:faithfulness} we can say $\rvv_A \not\perp_{p^{(k)}} \rvv_B \mid \rvv_C$, however since $p_\si^\ast \in \sm_\si(\gG^\ast)$ and Proposition \ref{prop:Markov Property} implies that $\rvv_A \perp_{p^{(k)}} \rvv_B \mid \rvv_C$, this is a contradiction. Therefore, $p_\si^\ast \notin \sm_\si(\gG)$. Thus, by applying Lemma \ref{lemma:strictly-pos}, we can show $\eta(\gG^\si)$ would be strictly positive. This would imply the score difference in \eqref{eq:score_difference} would be always positive for the scenarios where $|\gG^\si| \geq |\gG^{\ast\si^\ast}|$. By picking $\lambda'$ such that $0 < \lambda' < \min_{\gG^\si \in \mathbb{G}^+} \frac{\eta(\gG^\si)}{|\gG^{\ast\si^\ast}| - |\gG^\si|}$, we can make sure \eqref{eq:score_difference} would remain positive for $\gG^\si \in\mathbb{G}^+ := \{\gG^\si \mid |\gG^\si| < |\gG^{\ast\si^\ast}|\}$. We can see this from
 \begin{align}
    &\lambda' < \min_{\gG^\si\in \mathbb{G}^+} \frac{\eta(\gG^\si)}{|\gG^{\ast\si^\ast}| - |\gG^\si|}\\
    \iff &\lambda < \frac{\eta(\gG^\si)}{|\gG^{\ast\si^\ast}| - |\gG^\si|} \quad \forall \gG^\si \in \mathbb{G}^+\\
    \iff &\lambda (|\gG^{\ast\si^\ast}| - |\gG^\si|) < \eta(\gG^\si) \quad \forall \gG^\si \in \mathbb{G}^+\\
    \iff & 0 < \eta(\gG^\si) + \lambda(|\gG^\si| - |\gG^{\ast\si^\ast}|) = \ses(\gG^{\ast\si^\ast}) - \ses(\gG^\si) \quad \forall \gG^\si \in \mathbb{G}^+.
\end{align}
Therefore, we have shown that for every graph that is outside of the general directed Markov equivalence class of the ground truth graph, and every interventional family different from the ground truth interventional family, would yield a strictly lower score.
\end{proof}

\subsection{Characterization of Equivalence Class}\label{app:equivalence-class}

% Let $\gG = (\sv, \se, \sbe)$ be a directed graph, and consider a family of interventional targets
% $\si = \{I_k\}_{k=0}^K$ with $I_0 = \emptyset$. 

A graphical notion of the $\si$-Markov equivalence class of a direct graph $\gG$ can be given using
$\sigma$-Maximal Ancestral Graphs ($\sigma$-MAGs)~\cite{yao2025sigmamaximalancestralgraphs}.
A graph $\gG$ is said to be \emph{maximal} if there exists no inducing path (relative to the empty set) between any
two non-adjacent nodes. An \emph{inducing path} relative to a subset $L$ is a path on which every non-endpoint node
$i \notin L$ is a collider on the path, and every collider is an ancestor of an endpoint of the path. A
\emph{Maximal Ancestral Graph} (MAG) is one that is both ancestral and maximal. A \emph{$\sigma$-MAG} for a directed graph $\gG$ is a MAG on the same node set that represents the $\sigma$-separation model of $\gG$ in the sense that
$\sigma$-separation in $\gG$ coincides with $m$-separation (defined as in
\cite{yao2025sigmamaximalancestralgraphs}) in the $\sigma$-MAG. Given the augmented graph $\gG^\si$, it is possible
to construct a $\sigma$-MAG over $\rvv = (\rvx, \rvc^\si)$ that preserves both the independence structure and
ancestral relationships encoded in $\gG^\si$; see \cite{yao2025sigmamaximalancestralgraphs} for details. We denote
$\sigmaMAG(\gG^\si)$ to mean a $\sigma$-MAG constructed from $\gG^\si$. Therefore, all independencies encoded by
$\sigma$-separation in $\gG^\si$ are also present in $\sigmaMAG(\gG^\si)$ via $m$-separation. A path $\pi=(i_0, \varepsilon_1, \ldots, i_{n-1}, \varepsilon_n, i_n)$ in $\sigmaMAG(\gG^\si)$ is called a
\emph{discriminating path} for $i_{n-1}$ if (1) $\pi$ includes at least three edges; (2) $i_{n-1}$ is a non-endpoint
node on $\pi$, and is adjacent to $i_n$ on $\pi$; and (3) $i_0$ and $i_n$ are not adjacent, and every node in between
$i_0$ and $i_{n-1}$ is a collider on $\pi$ and is a parent of $i_n$. The following theorem characterizes the equivalence of $\sigma$-MAGs.

\begin{theorem}[\citep{yao2025sigmamaximalancestralgraphs}]\label{thm:mag-equivalence}
Two $\sigma$-MAGs $\gG_1$ and $\gG_2$ are Markov equivalent if and only if:
\begin{enumerate}
    \item $\gG_1$ and $\gG_2$ have the same adjacencies;
    \item $\gG_1$ and $\gG_2$ have the same unshielded colliders; and
    \item Let $\pi$ be a discriminating path for a node $v$ in $\gG_1$, and let $\pi'$ be the corresponding path to $\pi$ in $\gG_2$ If $\pi'$ is also a discriminating path for $v$, then $v$ is a collider on $\pi$ in $\gG_1$ if and only if it is a collider on $\pi'$ in $\gG_2$.
\end{enumerate}
\end{theorem}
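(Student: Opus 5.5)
The plan is to reduce the statement to the classical characterization of Markov equivalence for ordinary maximal ancestral graphs (Spirtes and Richardson; Ali, Richardson and Spirtes). By definition, a $\sigma$-MAG is a genuine MAG: it is ancestral and maximal, and its independence model is given by $m$-separation. Markov equivalence of two $\sigma$-MAGs $\gG_1,\gG_2$ therefore means that they induce the same $m$-separation relations over $\rvv$. This is exactly the notion used in the MAG literature. So the first step is to check carefully that nothing specific to $\sigma$-separation survives once we pass to the $\sigma$-MAG. The translation from $\sigma$-separation in $\gG^\si$ to $m$-separation is already built into the construction of \citet{yao2025sigmamaximalancestralgraphs}. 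After that step, conditions 1--3 are purely graphical statements about two MAGs.

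\emph{Necessity.} I would argue each condition by contraposition, using $m$-separation directly.
\begin{enumerate}
\item If $u,w$ are adjacent in $\gG_1$ but not in $\gG_2$, then maximality of $\gG_2$ gives a separating set $C$ with $u \mperp w \mid C$ in $\gG_2$. This can be taken to be, for example, a subset of $\an(\{u,w\})$. In $\gG_1$ the single edge is always $m$-open, so the two graphs differ.
\item If $u \ast\!\!\to v \gets\!\!\ast w$ is an unshielded collider in $\gG_1$ but a non-collider in $\gG_2$, take the separating set $C$ from maximality in one graph. Then $v\notin C$ in one graph forces $v\in C$ in the other, giving a discrepancy.
\item For a discriminating path $\pi$ for $v$, the standard argument applies. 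Every separating set of the endpoints $i_0,i_n$ must contain all intermediate colliders, because they are parents of $i_n$. Hence $v$ is a collider if and only if $v$ is excluded from every such separating set. Comparing the two graphs along the corresponding path $\pi'$ then yields the claim.
\end{enumerate}

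\emph{Sufficiency.} This is the main obstacle. Assume conditions 1--3 hold and suppose, for contradiction, that some $m$-connecting path given $C$ exists in $\gG_1$ but not in $\gG_2$. I would pick a counterexample that is minimal in path length, and then in the number of colliders. Because adjacencies agree, the same node sequence is a path in $\gG_2$, so the failure must occur at some node whose collider status differs between the two graphs.

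If the triple at that node is unshielded, condition 2 gives a contradiction. If it is shielded, I would follow Ali et al.: use the shielding edge either to shortcut the path, contradicting minimality, or to build a discriminating path for the offending node, which condition 3 rules out. The delicate part is the ancestral bookkeeping, namely showing that the shortened path stays $m$-connecting given $C$. For this I would rely on the ancestral property and on the fact that in a MAG every collider on an inducing path is an ancestor of an endpoint.

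I would attempt this inductive "shortcut or discriminate" argument first. If it turns out to be too intricate, the fallback is simply to cite the MAG equivalence theorem of Ali et al.; the reduction in the first paragraph makes that citation valid.
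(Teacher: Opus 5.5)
\paragraph{Main gap: the reduction to standard MAGs.}
Your whole argument rests on the opening reduction, and that step does not go through. You take the paper's loose sentence \emph{a $\sigma$-MAG is a MAG} at face value and conclude that the statement is just the Spirtes--Richardson / Ali--Richardson--Spirtes theorem. You announce that you will check this, but you never do, and the check fails. The paper gives no proof of its own: it imports the result from Yao et al., where $\sigma$-MAGs are a new class. The paper also requires the $\sigma$-MAG to preserve the ancestral relations of the possibly cyclic $\mathcal{G}^{\mathcal{I}}$.

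Take the paper's own augmented-graph example. There $X_3\to X_4$ and $X_4\to X_3$, so $X_3$ and $X_4$ are adjacent mutual ancestors. An arrowhead at $b$ on an edge between $a$ and $b$ encodes $b\notin\mathrm{an}(a)$, so the edge between $X_3$ and $X_4$ must be undirected. Yet $X_3$ also receives arrowheads from $X_1$ and $C_1$. In an ancestral graph in the Richardson--Spirtes sense, a vertex on an undirected edge has no parents or spouses. So this object is not a standard MAG.

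Consequences for your proposal:
\begin{itemize}
\item The classical theorem does not apply verbatim, so your fallback citation is not licensed.
\item The facts you use in the necessity part are theorems about standard ancestral graphs: that non-adjacent vertices of a maximal graph are $m$-separable, and the discriminating-path lemma. They must either be re-proved for the enlarged class, or transported through an explicit translation to standard MAGs with the same $m$-separation model. Any such translation must replace the undirected edges, which can change collider status at their endpoints, so you would also need to prove that conditions 1--3 survive it.
\end{itemize}
That is exactly the content of the cited result, and your proposal does not supply it.

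\paragraph{Secondary gap: the sufficiency sketch.}
For standard MAGs your necessity sketch is the usual argument and is fine. The self-contained sufficiency sketch, however, aims at the wrong target. It is not true that a path which is $m$-connecting given $C$ in $\mathcal{G}_1$ but not in $\mathcal{G}_2$ must contain a vertex whose collider status differs. Markov equivalent graphs need not share ancestral relations, so a vertex that is a collider in both graphs can lie in $\mathrm{an}_{\mathcal{G}_1}(C)$ but not in $\mathrm{an}_{\mathcal{G}_2}(C)$.

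For example, let $\mathcal{G}_1$ be the DAG with edges $a\to v$, $b\to v$, $v\to c$, $a\to c$, $b\to c$, and let $\mathcal{G}_2$ be obtained by reversing $v\to c$. The two graphs are Markov equivalent. With $C=\{c\}$, the minimal-length path $a\to v\gets b$ is open in $\mathcal{G}_1$ and blocked in $\mathcal{G}_2$, although $v$ is a collider in both.

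So such paths exist between equivalent graphs, and their existence yields no contradiction. What you must show is that $a$ and $b$ are $m$-connected given $C$ in $\mathcal{G}_2$ by \emph{some} path; here that path is $a\to c\gets b$. Finding it requires rerouting through the ancestral relations of $\mathcal{G}_2$. This rerouting is the hard part of the classical proofs, and your \emph{shortcut or discriminate} scheme does not address it.
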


Hence, by Theorem~\ref{thm:mag-equivalence}, two directed graphs $\gG_1$ and $\gG_2$ are $\si$-Markov equivalent if
and only if their corresponding $\sigma$-MAGs, $\sigmaMAG(\gG_1^\si)$ and $\sigmaMAG(\gG_2^\si)$, satisfy the
conditions of Theorem~\ref{thm:mag-equivalence}; that is, 
(i) have the same skeleton, (ii) have the same unshielded colliders, and (iii) have the same discriminating paths with
consistent collider status.

\newpage

\section{Additional Experiments}\label{Appendix:Additional Experiments}
\subsection{Experiments on Non-contractive DAGs}\label{Appendix:DCDI comparison}
We conduct tests on non-contractive causal mechanisms where the ground truth graph is acyclic. We modify our methodology to work under non-contractive SEM's following the preconditioning approach proposed by \cite{pmlr-v206-sethuraman23a}. According to this method we introduce a learnable diagonal preconditioning matrix $\boldsymbol{\Lambda}$ to transform the causal mechanism in the following way:
\begin{equation}
    \boldsymbol{\hat{f}} = \boldsymbol{\Lambda^{-1}} \circ \boldsymbol{f} \circ \boldsymbol{\Lambda},
\end{equation}
where $\boldsymbol{f}$ remains contractive. For this comparison, we additionally include DCDI \citep{10.5555/3495724.3497558}, UT-IGSP \citep{squires2020permutation}, and BACADI \citep{hgele2022bacadi} as baseline methods. These approaches are designed for learning DAGs under unknown interventions. A direct comparison with UT-IGSP is infeasible, since, as a constraint-based approach, it does not return a candidate graph but instead a candidate I-Markov Equivalence class. To evaluate the AUPRC, we picked the maximum among the graphs in this equivalence class. Figure \ref{fig:auprc_dag} shows that SCOUT can recover the causal structure of the graph with a near-perfect performance in every setting except the Noisy Function + Gaussian noise scenario, where it obtains comparable results with baselines. As for the target recovery, Table \ref{tab:graph_recovery_all_methods} suggests SCOUT successfully identifies the intervened nodes for shift and scale interventions.

\begin{figure}[!h]
  \centering
  \includegraphics[width=0.7\columnwidth]{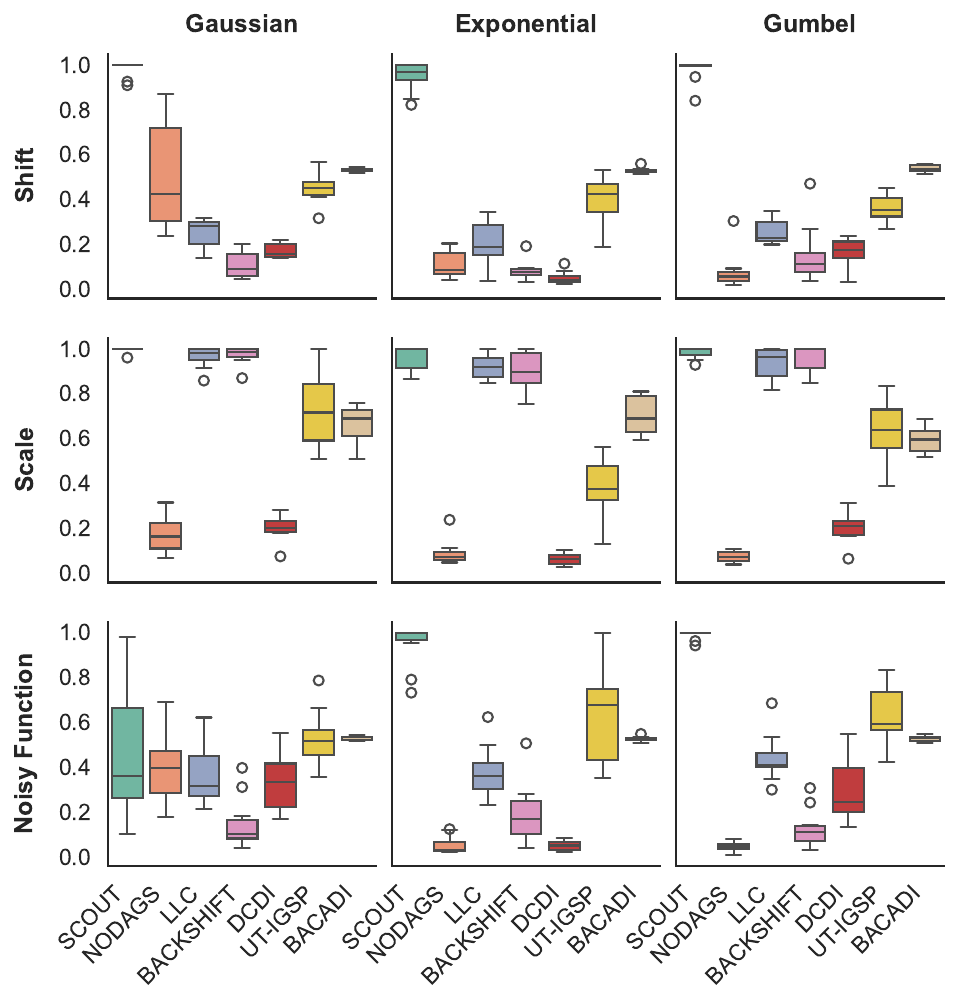}
  \caption{Graph recovery performance comparison between SCOUT and baselines under non-contractive DAG's and various interventional and exogenous noise settings, evaluated using AUPRC. In all cases, the number of nodes is fixed at $d=10$.}
  \label{fig:auprc_dag}
\end{figure}

\begin{table}[!t]
\caption{Graph recovery performance comparison between SCOUT and baselines under different intervention types and exogenous noise distributions, evaluated using AUPRC (higher is better). Results are reported as mean $\pm$ standard deviation.}
\label{tab:graph_recovery_all_methods}
\centering
\setlength{\tabcolsep}{4.5pt}
\resizebox{\columnwidth}{!}{%
\begin{tabular}{llccccc}
\toprule
Noise Type & Intervention Type & SCOUT & BACKSHIFT & DCDI & UT-IGSP & BACADI \\
\midrule
\multirow{3}{*}{Gaussian}
& Shift          & $1.000 \pm 0.000$ & $0.954 \pm 0.054$ & $0.157 \pm 0.215$ & $0.991 \pm 0.019$ & $0.887 \pm 0.032$ \\
& Scale          & $1.000 \pm 0.000$ & $1.000 \pm 0.000$ & $0.104 \pm 0.140$ & $1.000 \pm 0.000$ & $0.766 \pm 0.019$ \\
& Noisy Function & $0.137 \pm 0.063$ & $0.166 \pm 0.127$ & $0.204 \pm 0.239$ & $0.212 \pm 0.108$ & $0.550 \pm 0.000$ \\
\midrule
\multirow{3}{*}{Exponential}
& Shift          & $1.000 \pm 0.000$ & $0.979 \pm 0.030$ & $0.158 \pm 0.193$ & $0.924 \pm 0.041$ & $1.000 \pm 0.000$ \\
& Scale          & $1.000 \pm 0.000$ & $1.000 \pm 0.000$ & $0.056 \pm 0.000$ & $0.909 \pm 0.064$ & $1.000 \pm 0.000$ \\
& Noisy Function & $0.322 \pm 0.171$ & $0.145 \pm 0.102$ & $0.098 \pm 0.047$ & $0.366 \pm 0.152$ & $0.550 \pm 0.000$ \\
\midrule
\multirow{3}{*}{Gumbel}
& Shift          & $1.000 \pm 0.000$ & $0.825 \pm 0.183$ & $0.055 \pm 0.000$ & $0.983 \pm 0.030$ & $0.825 \pm 0.033$ \\
& Scale          & $1.000 \pm 0.000$ & $1.000 \pm 0.000$ & $0.057 \pm 0.001$ & $0.995 \pm 0.014$ & $0.870 \pm 0.033$ \\
& Noisy Function & $0.158 \pm 0.062$ & $0.131 \pm 0.073$ & $0.109 \pm 0.157$ & $0.292 \pm 0.105$ & $0.550 \pm 0.000$ \\
\bottomrule
\end{tabular}}
\end{table}

\newpage

\subsection{Experiments for Linear SEM}
\label{sec:linear}
We evaluate SCOUT's performance alongside baselines for linear SEM, using the same intervention and noise settings as in the nonlinear case. We use AUPRC as our evaluation metric (higher is better) again. The box plot results of Figure \ref{fig:auprc_lin} show that SCOUT can again achieve near-perfect graph recovery in all settings except Noisy Function + Gaussian noise (where it outperforms all of the baselines). From Table \ref{tab:lin_sem_interv_auprc}, it can be seen that SCOUT can also achieve near-perfect intervention target recovery, except for noisy function interventions, where it still outperforms BACKSHIFT.

\begin{figure}[!h]
  \centering
  \includegraphics[width=0.6\columnwidth]{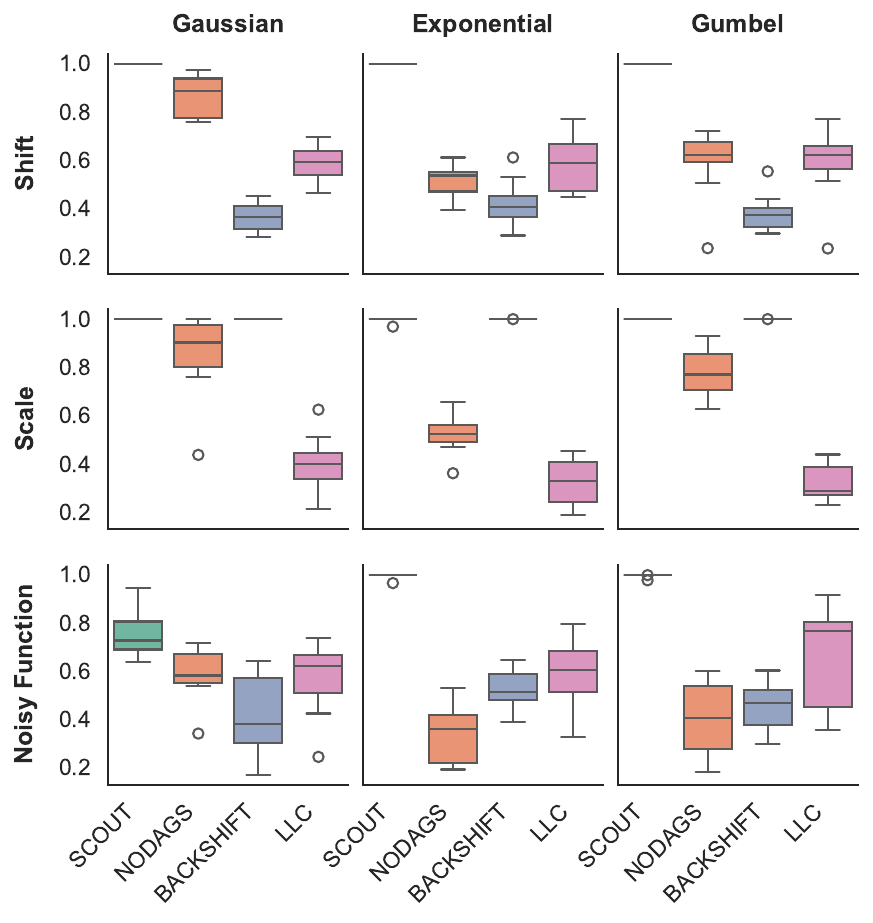}
  \caption{Graph recovery performance comparison between SCOUT and baselines under linear SEM and various interventional and exogenous noise settings, evaluated using AUPRC. In all cases, the number of nodes is fixed at $d=10$.}
  \label{fig:auprc_lin}
\end{figure}

\begin{table}[!h]
\caption{Interventional target recovery performance comparison between SCOUT and BACKSHIFT under linear SEM across different intervention modes and exogenous noise types. The table presents the average AUPRC value along with its standard deviation evaluated using 10 independent trials.}
\label{tab:lin_sem_interv_auprc}
\centering
\resizebox{\columnwidth}{!}{%
\begin{tabular}{lcc|cc|cc}
\toprule
\cmidrule(lr){2-7}
& \multicolumn{2}{c|}{Gaussian} & \multicolumn{2}{c|}{Exponential} & \multicolumn{2}{c}{Gumbel} \\
\cmidrule(lr){2-3}\cmidrule(lr){4-5}\cmidrule(lr){6-7}
Intervention Type & SCOUT & BACKSHIFT & SCOUT & BACKSHIFT & SCOUT & BACKSHIFT \\
\midrule
Shift & $1.000 \pm 0.000$ & $0.812 \pm 0.173$ & $1.000 \pm 0.000$ & $0.816 \pm 0.182$ & $1.000 \pm 0.000$ & $0.707 \pm 0.186$ \\
Scale & $1.000 \pm 0.000$ & $1.000 \pm 0.000$ & $0.956 \pm 0.138$ & $1.000 \pm 0.000$ & $1.000 \pm 0.000$ & $1.000 \pm 0.000$ \\
Noisy Function & $0.591 \pm 0.169$ & $0.404 \pm 0.304$ & $0.842 \pm 0.189$ & $0.533 \pm 0.170$ & $0.792 \pm 0.080$ & $0.445 \pm 0.251$ \\
\bottomrule
\end{tabular}}
\end{table}

\newpage
\subsection{Experiments Under Known Intervention Targets}
\label{sec:known}
We conduct a performance benchmark of graph recovery for SCOUT and baselines under known interventions for nonlinear SEM (The BACKSHIFT model does not accept known targets, so we use the unknown setting for this baseline). Figure \ref{fig:auprc_known} shows that SCOUT can recover the graph with near-perfect performance across all settings, including Noisy Function + Gaussian noise, indicating that the distributional shift in this setting allows our model to learn the graph structure when the targets are known.
\begin{figure}[!h]
  \centering
  \includegraphics[width=0.6\columnwidth]{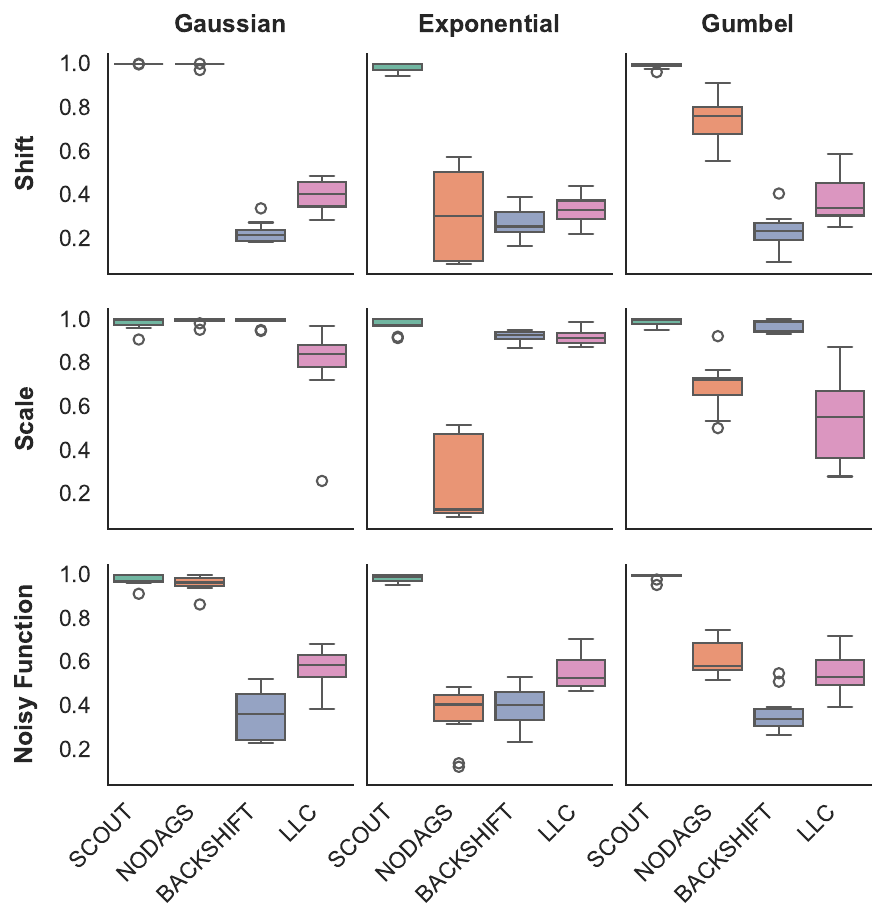}
  \caption{Graph recovery performance comparison between SCOUT and baselines for known intervention targets under nonlinear SEM and various interventional and exogenous noise settings, evaluated using AUPRC. In all cases, the number of nodes is fixed at $d=10$.}
  \label{fig:auprc_known}
\end{figure}
\subsection{Experiments for Hard (Perfect) Interventions}
\label{sec:hard}
We run experiments on SCOUT and the baselines to evaluate their structures and target recovery performance under hard (perfect) interventions, in which the incoming edges of the intervened nodes are removed. Figure \ref{fig:hard_interventions} shows that SCOUT can learn causal relationships under hard interventions. Both SCOUT and BACKSHIFT can identify intervention targets in this setting with perfect accuracy. 

\begin{figure}[!h]
  \centering
  \includegraphics[width=0.6\columnwidth]{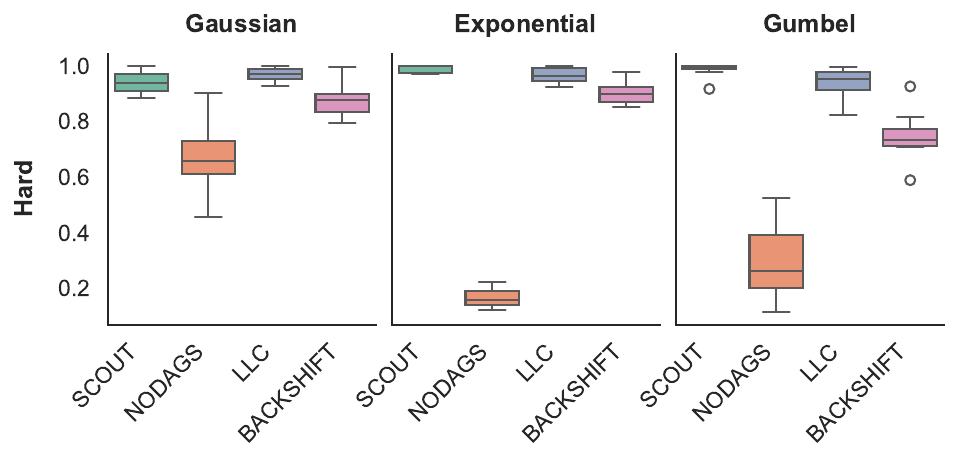}
  \caption{Graph recovery performance comparison between SCOUT and baselines under nonlinear SEM and hard interventions with various exogenous noise settings, evaluated using AUPRC. In all cases, the number of nodes is fixed at $d=10$.}
  \label{fig:hard_interventions}
\end{figure}
\subsection{Ablation Studies}
\label{sec:ablation}

\subsubsection{Impact of number of maximum interventional targets}
In this section, we evaluated SCOUT's performance alongside baselines while varying the maximum number of interventions per experiment from 1 to 5. We examine non-linear SEM under unknown-scale interventions with Gaussian and Gumbel noise. From Figure \ref{fig:intervention_scaling} and Table \ref{tab:intervention_node_scaling_nnl_scale_interv_auprc}, we observe that as the maximum number of intervention targets per experiment increases, SCOUT shows some performance degradation. Nevertheless, it consistently remains superior to the baseline methods. In contrast, BACKSHIFT deteriorates much more substantially in both graph structure recovery and intervention target recovery compared to the single-node intervention setting.
\begin{figure}[!h]
  \centering
  \includegraphics[width=0.8\columnwidth]{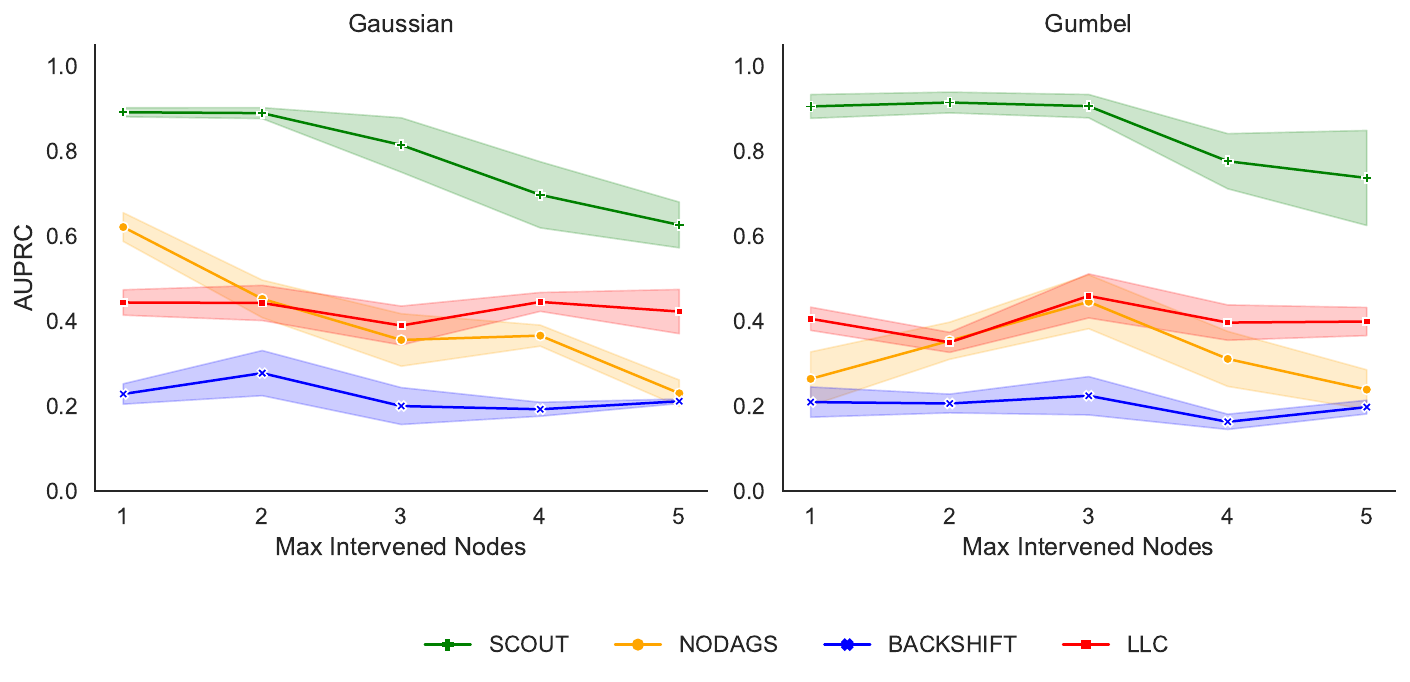}
  \caption{Graph recovery performance comparison between SCOUT and baselines under non-linear SEM and shift interventions. The number of maximum intervention targets per experiment is varied from 1 to 5.}
  \label{fig:intervention_scaling}
\end{figure}

\begin{table}[!h]
\caption{Interventional target recovery performance comparison between SCOUT and BACKSHIFT under non-linear SEM and shift interventions. The table presents the average AUPRC value along with its standard deviation evaluated using 10 independent trials. The number of nodes is varied from 1 to 5.}
\label{tab:intervention_node_scaling_nnl_scale_interv_auprc}
\centering
\setlength{\tabcolsep}{4pt}
\small
\begin{tabular}{c cc cc}
\toprule
& \multicolumn{2}{c}{Gaussian} & \multicolumn{2}{c}{Gumbel} \\
\cmidrule(lr){2-3}\cmidrule(lr){4-5}
Max Intervened Nodes & SCOUT & BACKSHIFT & SCOUT & BACKSHIFT \\
\midrule
1 & $1.000 \pm 0.000$ & $0.641 \pm 0.158$ & $1.000 \pm 0.000$ & $0.451 \pm 0.155$ \\
2 & $1.000 \pm 0.000$ & $0.372 \pm 0.165$ & $0.990 \pm 0.023$ & $0.435 \pm 0.188$ \\
3 & $0.986 \pm 0.031$ & $0.401 \pm 0.128$ & $0.999 \pm 0.001$ & $0.471 \pm 0.048$ \\
4 & $0.940 \pm 0.053$ & $0.298 \pm 0.077$ & $0.912 \pm 0.066$ & $0.300 \pm 0.077$ \\
5 & $0.849 \pm 0.104$ & $0.356 \pm 0.080$ & $0.919 \pm 0.087$ & $0.321 \pm 0.124$ \\
\bottomrule
\end{tabular}
\end{table}

\newpage

\subsubsection{Scaling with Training Samples}
In this experiment, we evaluate the sample size requirements of SCOUT. Figure \ref{fig:sample_scaling} and Table \ref{tab:sample_scaling_nnl_shift_interv_auprc} indicate that even 250 samples per experiment allows SCOUT to learn the causal graph along with unknown targets under shift interventions.
\begin{figure}[!h]
  \centering
  \includegraphics[width=0.8\columnwidth]{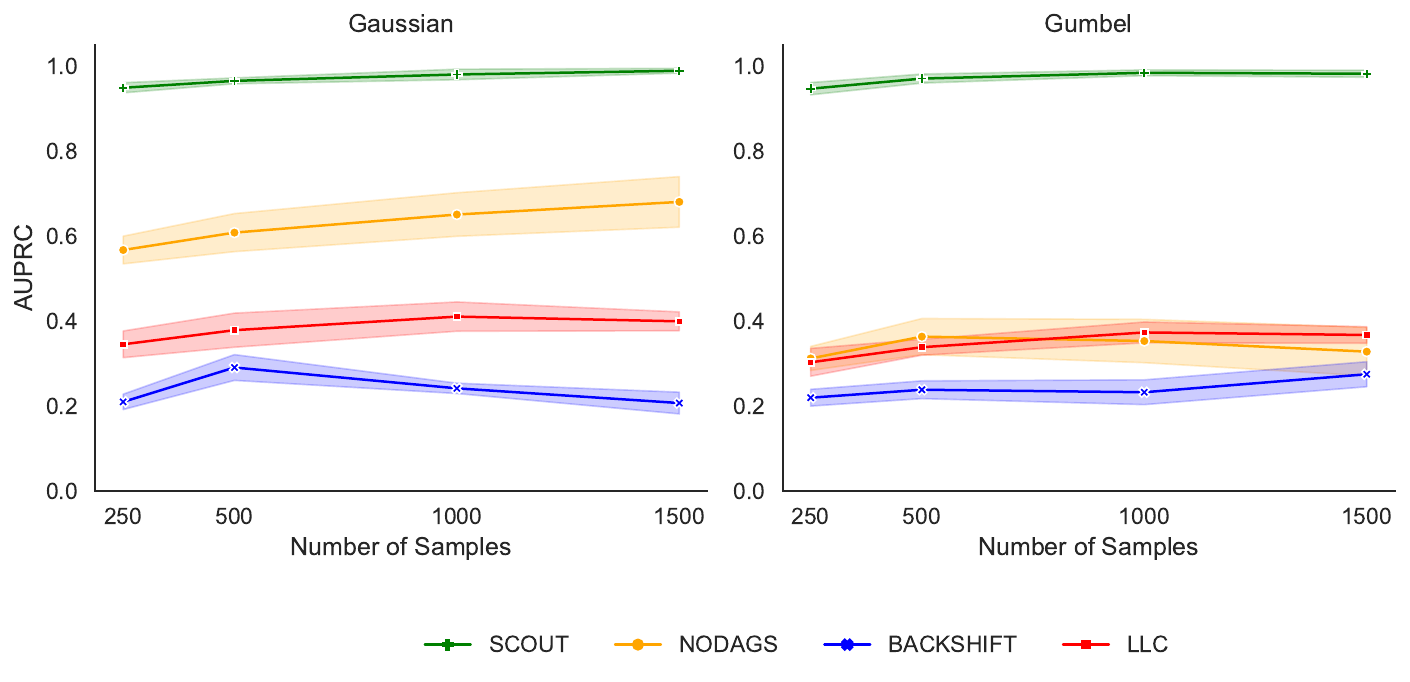}
  \caption{Graph recovery performance comparison between SCOUT and baselines under non-linear SEM and shift interventions. The number of samples per experiment is varied from 250 to 1500.}
  \label{fig:sample_scaling}
\end{figure}

\begin{table}[!h]
\caption{Interventional target recovery performance comparison between SCOUT and BACKSHIFT under non-linear SEM and shift interventions. The table presents the average AUPRC value along with its standard deviation evaluated using 10 independent trials. The number of samples per experiment is varied from $n=250$ to $n=1500$.}
\label{tab:sample_scaling_nnl_shift_interv_auprc}
\centering
\resizebox{0.6\columnwidth}{!}{%
\begin{tabular}{lcc|cc}
\toprule
\cmidrule(lr){2-5}
& \multicolumn{2}{c|}{Gaussian} & \multicolumn{2}{c}{Gumbel} \\
\cmidrule(lr){2-3}\cmidrule(lr){4-5}
Number of Samples & SCOUT & BACKSHIFT & SCOUT & BACKSHIFT \\
\midrule
250  & $1.00 \pm 0.00$ & $0.82 \pm 0.18$ & $1.00 \pm 0.00$ & $0.70 \pm 0.20$ \\
500  & $1.00 \pm 0.00$ & $0.87 \pm 0.17$ & $1.00 \pm 0.00$ & $0.79 \pm 0.16$ \\
1000 & $1.00 \pm 0.00$ & $0.93 \pm 0.08$ & $1.00 \pm 0.00$ & $0.87 \pm 0.16$ \\
1500 & $1.00 \pm 0.00$ & $0.92 \pm 0.18$ & $1.00 \pm 0.00$ & $0.88 \pm 0.13$ \\
\bottomrule
\end{tabular}}
\end{table}

\subsubsection{Scaling with Outgoing Edge Density}
We evaluate the effect of the graph sparsity on the structure and target recovery of SCOUT by varying the expected outgoing edge density from 1 to 4. The results are summarized in Figure \ref{fig:sparsity_scaling} and Table \ref{tab:sparsity_scaling_nnl_shift_interv_auprc}. The SCOUT learns the underlying graph structure and unknown interventional targets independently of graph sparsity.

\begin{figure}[!h]
  \centering
  \includegraphics[width=0.8\columnwidth]{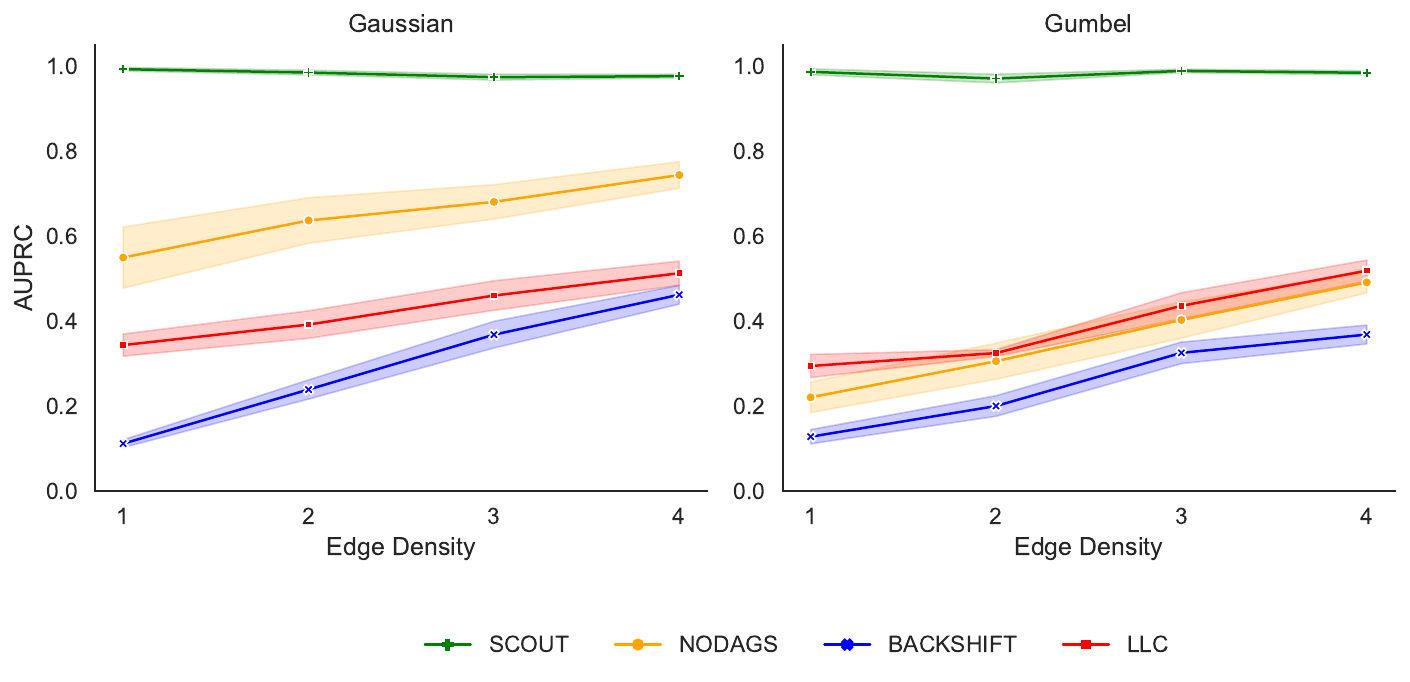}
  \caption{Graph recovery performance comparison between SCOUT and baselines under non-linear SEM and shift interventions. The number of expected outgoing edge density is varied from 1 to 4.}
  \label{fig:sparsity_scaling}
\end{figure}

\begin{table}[!h]
\caption{Interventional target recovery performance comparison between SCOUT and BACKSHIFT under non-linear SEM and shift interventions. The table presents the average AUPRC value along with its standard deviation evaluated using 10 independent trials. Edge density is varied from 1 to 4.}
\label{tab:sparsity_scaling_nnl_shift_interv_auprc}
\centering
\resizebox{0.55\columnwidth}{!}{%
\begin{tabular}{lcc|cc}
\toprule
\cmidrule(lr){2-5}
& \multicolumn{2}{c|}{Gaussian} & \multicolumn{2}{c}{Gumbel} \\
\cmidrule(lr){2-3}\cmidrule(lr){4-5}
Edge Density & SCOUT & BACKSHIFT & SCOUT & BACKSHIFT \\
\midrule
1 & $1.00 \pm 0.00$ & $0.98 \pm 0.04$ & $1.00 \pm 0.00$ & $0.80 \pm 0.20$ \\
2 & $1.00 \pm 0.00$ & $0.89 \pm 0.15$ & $1.00 \pm 0.00$ & $0.82 \pm 0.13$ \\
3 & $1.00 \pm 0.00$ & $0.94 \pm 0.10$ & $1.00 \pm 0.00$ & $0.89 \pm 0.12$ \\
4 & $1.00 \pm 0.00$ & $0.88 \pm 0.15$ & $1.00 \pm 0.00$ & $0.82 \pm 0.17$ \\
\bottomrule
\end{tabular}}
\end{table}

\newpage
\subsubsection{Impact of Shift Parameter}
We evaluate the effect of the shift parameter on the graph and target recovery in this study. We vary the shift amount from 0 (observational case) to 2. From Figure \ref{fig:shift_scaling}, it can be understood that for the Gaussian noise, with the increasing shift amount, SCOUT's performance on discovering the causal relationships increases, whereas for the Gumbel noise model, it works with near-perfect performance for all shift parameters. In terms of intervention target recovery, Table \ref{tab:shift_scaling_nnl_shift_interv_auprc} indicates that for Gaussian noise, SCOUT can learn targets even for low amounts of shift, and for Gumbel noise, its performance gets better as the shift parameter increases.
\begin{figure}[!h]
  \centering
  \includegraphics[width=0.8\columnwidth]{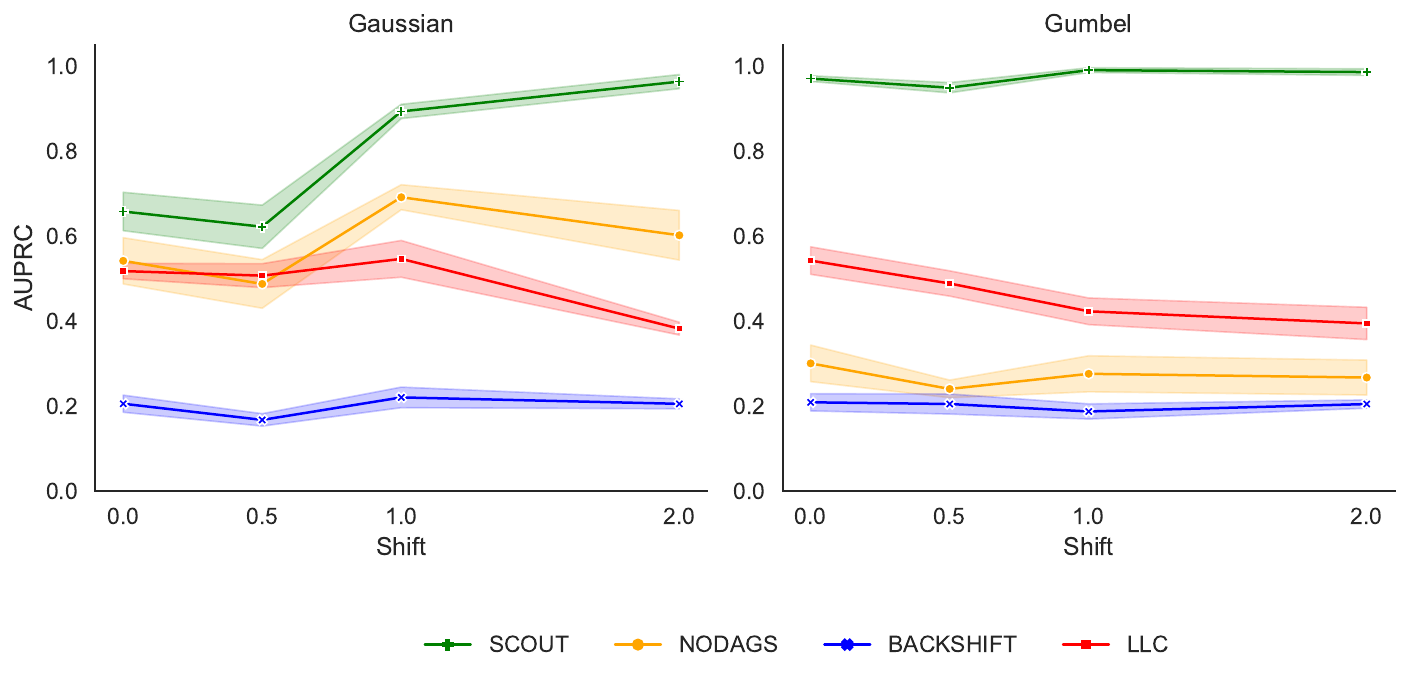}
  \caption{Graph recovery performance comparison between SCOUT and baselines under non-linear SEM and shift interventions. The shift parameter is varied from 0 (observational case) to 2.}
  \label{fig:shift_scaling}
\end{figure}

\begin{table}[!h]
\caption{Interventional target recovery performance comparison between SCOUT and BACKSHIFT under non-linear SEM and shift interventions. The table presents the average AUPRC value along with its standard deviation evaluated using 10 independent trials. The shift parameter is varied from 0.5 to 2.0.}
\label{tab:shift_scaling_nnl_shift_interv_auprc}
\centering
\resizebox{0.5\columnwidth}{!}{%
\begin{tabular}{lcc|cc}
\toprule
\cmidrule(lr){2-5}
& \multicolumn{2}{c|}{Gaussian} & \multicolumn{2}{c}{Gumbel} \\
\cmidrule(lr){2-3}\cmidrule(lr){4-5}
Shift & SCOUT & BACKSHIFT & SCOUT & BACKSHIFT \\
\midrule
0.5 & $0.99 \pm 0.02$ & $0.24 \pm 0.11$ & $0.18 \pm 0.05$ & $0.13 \pm 0.08$ \\
1.0 & $1.00 \pm 0.00$ & $0.63 \pm 0.21$ & $1.00 \pm 0.00$ & $0.40 \pm 0.17$ \\
2.0 & $1.00 \pm 0.00$ & $0.95 \pm 0.05$ & $1.00 \pm 0.00$ & $0.93 \pm 0.10$ \\
\bottomrule
\end{tabular}}
\end{table}

\newpage

\subsubsection{Impact of Scale Parameter}
In this section, we vary the scale parameter to see its effect on the performance of SCOUT and baselines. We change it from 0.25 to 2 (0.5 is the observational case). The results are given in Figure \ref{fig:scale_scaling} and Table \ref{tab:scale_scaling_nnl_scale_interv_auprc}. For Gaussian noise, the overall performance of SCOUT again increases, and for Gumbel noise, it achieves near-perfect performance regardless of the scale parameter.
\begin{figure}[!h]
  \centering
  \includegraphics[width=0.8\columnwidth]{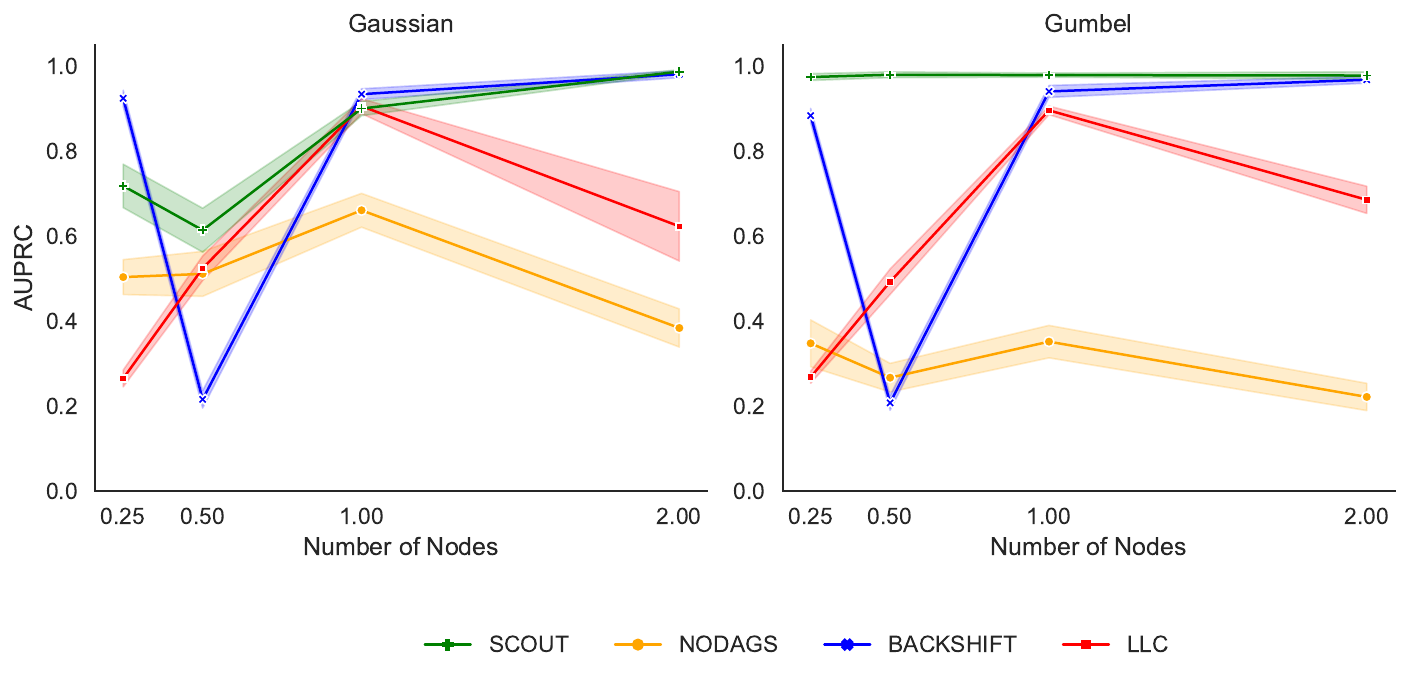}
  \caption{Graph recovery performance comparison between SCOUT and baselines under non-linear SEM and shift interventions. The scale parameter is varied from 0.25 to 2.}
  \label{fig:scale_scaling}
\end{figure}

\begin{table}[!h]
\caption{Interventional target recovery performance comparison between SCOUT and BACKSHIFT under non-linear SEM and scale interventions. The table presents the average AUPRC value along with its standard deviation evaluated using 10 independent trials. The scale factor is varied from 0.25 to 2.0.}
\label{tab:scale_scaling_nnl_scale_interv_auprc}
\centering
\resizebox{0.5\columnwidth}{!}{%
\begin{tabular}{lcc|cc}
\toprule
\cmidrule(lr){2-5}
& \multicolumn{2}{c|}{Gaussian} & \multicolumn{2}{c}{Gumbel} \\
\cmidrule(lr){2-3}\cmidrule(lr){4-5}
Scale & SCOUT & BACKSHIFT & SCOUT & BACKSHIFT \\
\midrule
0.25 & $0.05 \pm 0.00$ & $0.00 \pm 0.00$ & $0.05 \pm 0.00$ & $0.00 \pm 0.00$ \\
1.00 & $1.00 \pm 0.00$ & $1.00 \pm 0.00$ & $1.00 \pm 0.00$ & $1.00 \pm 0.00$ \\
2.00 & $1.00 \pm 0.00$ & $1.00 \pm 0.00$ & $1.00 \pm 0.00$ & $1.00 \pm 0.00$ \\
\bottomrule
\end{tabular}}
\end{table}
\newpage
\subsubsection{Impact of Cycles}
In this section, we change the number of cycles in ground truth graph to see its effect on the performance of SCOUT and baselines. The number of nodes in the graph are fixed to $d = 10$, and the number of cycles are varied from 0 to 8. The results are given in Figure \ref{fig:cycle_scaling} and Table \ref{tab:cycle_scaling_nnl_shift_interv_auprc}. The number of cycles do no effect the performance of SCOUT.
\begin{figure}[!h]
  \centering
  \includegraphics[width=0.8\columnwidth]{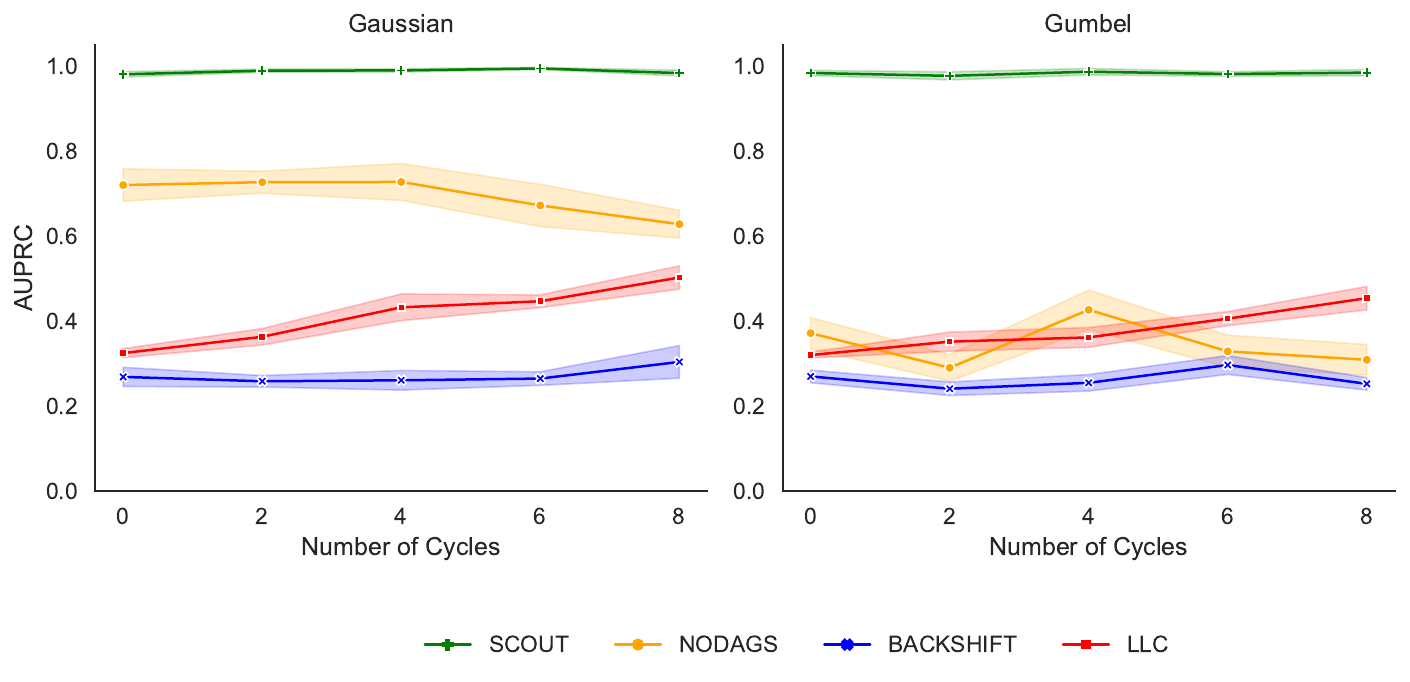}
  \caption{Graph recovery performance comparison between SCOUT and baselines under non-linear SEM and shift interventions. The number of cycles  are varied from 0 to 8}
  \label{fig:cycle_scaling}
\end{figure}

\begin{table}[!h]
\caption{Interventional target recovery performance comparison between SCOUT and BACKSHIFT under non-linear SEM and shift interventions. The table presents the average AUPRC value along with its standard deviation evaluated using 10 independent trials. The number of cycles is varied from 0 to 8.}
\label{tab:cycle_scaling_nnl_shift_interv_auprc}
\centering
\resizebox{0.5\columnwidth}{!}{%
\begin{tabular}{lcc|cc}
\toprule
\cmidrule(lr){2-5}
& \multicolumn{2}{c|}{Gaussian} & \multicolumn{2}{c}{Gumbel} \\
\cmidrule(lr){2-3}\cmidrule(lr){4-5}
Number of Cycles & SCOUT & BACKSHIFT & SCOUT & BACKSHIFT \\
\midrule
0 & $1.00 \pm 0.00$ & $0.91 \pm 0.12$ & $1.00 \pm 0.00$ & $0.86 \pm 0.12$ \\
2 & $1.00 \pm 0.00$ & $0.88 \pm 0.19$ & $1.00 \pm 0.00$ & $0.88 \pm 0.15$ \\
4 & $1.00 \pm 0.00$ & $0.95 \pm 0.05$ & $1.00 \pm 0.00$ & $0.92 \pm 0.10$ \\
6 & $1.00 \pm 0.00$ & $0.93 \pm 0.11$ & $1.00 \pm 0.00$ & $0.90 \pm 0.11$ \\
8 & $1.00 \pm 0.00$ & $0.95 \pm 0.10$ & $1.00 \pm 0.00$ & $0.94 \pm 0.08$ \\
\bottomrule
\end{tabular}}
\end{table}
\newpage
\subsection{Induced Distribution Change Comparison of Soft Interventions} \label{Appendix: KL comparison}
The results presented in Section \ref{sec:experiments} show that neither SCOUT nor the baselines can perfectly recover the graph structure and the intervention targets for noisy function interventions, as opposed to shift and scale interventions. However, for the known interventions, we see in Figure \ref{fig:auprc_known} that SCOUT can identify the causal structure with near-perfect performance. The reason is that, to recover the unknown intervention targets under the finite-data limit, we need a significant distributional change induced by that intervention \cite{Gamella2020ActiveIC}. To experimentally verify this, we compare the KL-divergence between the single-node interventional and observational distributions under these three types of soft interventions for a graph with $d=10$ nodes. For every node $i$ and experiment $k$, we estimate the marginal distributions of $X_i$ under intervention ($p_{k, i}$) and observational ($q_i$) using a shared histogram binning, and compute the divergence.
\[
D_{i,j} = \mathrm{KL}\!\left(p_{i,j}\,\|\,q_j\right).
\]
We then summarize the overall intervention impact by averaging across interventions and nodes:
\[
\bar{D}=\frac{1}{d}\sum_{i=1}^{d}\left(\frac{1}{{\text{K}}}\sum_{k=1}^{K} D_{k,i}\right),
\]
and report the mean and standard deviation of $\bar{D}$ across the 10 runs. We can see from Figure \ref{fig:kl_div} that noisy function interventions yield a significantly low KL-divergence, thus inducing a limited distribution change compared to shift/scale interventions.

\begin{figure}[!h]
  \centering
  \includegraphics[width=0.4\columnwidth]{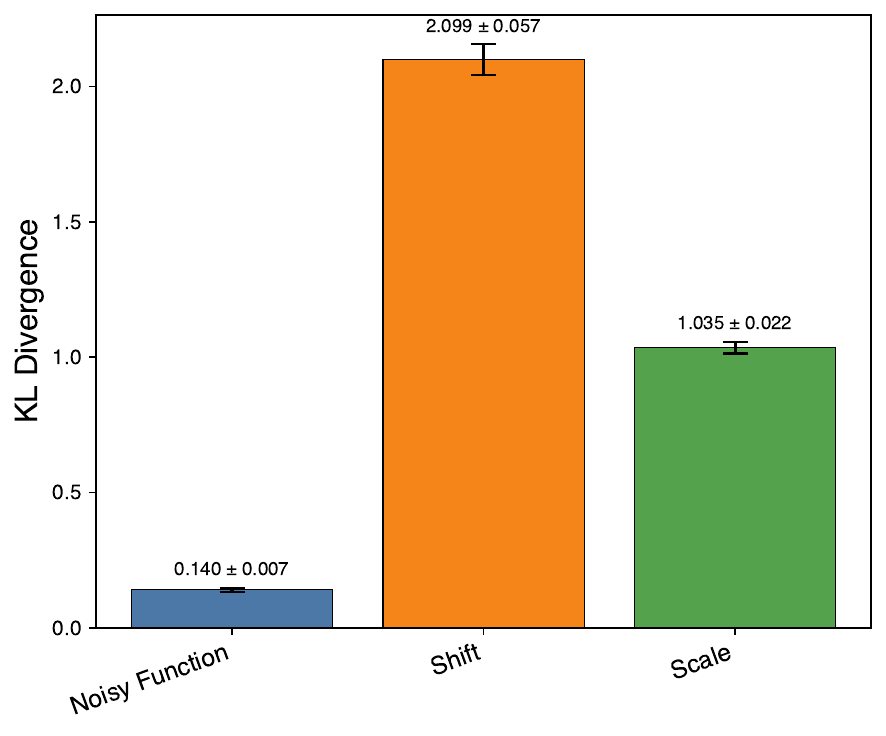}
  \caption{KL-divergence comparison between noisy function, shift, and scale interventions under all the single node interventions for a graph with $d=10$ nodes.}
  \label{fig:kl_div}
\end{figure}

\subsection{Additional Experiments on Perturb-CITE-seq Dataset} \label{Appendix: Additional Experiments on Perturb-CITE-seq}
We test how well SCOUT performs compared to other baselines when the intervention targets are known on the Perturb-CITE-seq dataset \cite{44e58fcbe5ee4f998863a372408c3c2f}. Additionally, we compared the baselines' performances (this time including BACKSHIFT) using \emph{Mean Absolute Error} (MAE) as the evaluation metric. We can compute MAE by taking the mean of $||\boldsymbol{f}^{(I_k)}(\boldsymbol{x})-\boldsymbol{x}||_1/d$ over all observations x in the held-out test set. The results for both known and unknown settings are given in Figures \ref{fig:known-perturb-cite-seq-mae} and \ref{fig:unknown-perturb-cite-seq-mae}, respectively. The results indicate that SCOUT remains competitive with state-of-the-art methods under the MAE metric.

\begin{figure}[!h]
  \centering
  \includegraphics[width=0.7\columnwidth]{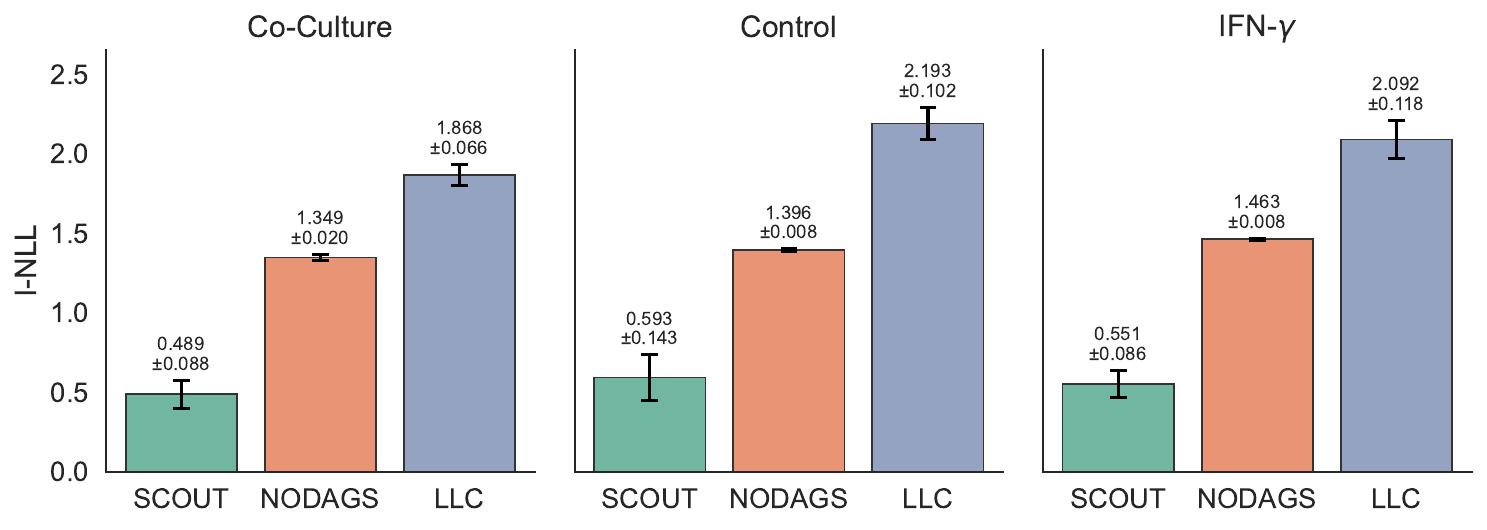}
  \caption{The performance comparison results on the Perturb-CITE-seq (Frangieh et al., 2021) gene perturbation dataset with known interventional targets. The bar graph shows mean absolute error (MAE) and its standard deviation across 5 trials.}
  \label{fig:unknown-perturb-cite-seq-nll}
\end{figure}

\begin{figure}[!h]
  \centering
  \includegraphics[width=0.7\columnwidth]{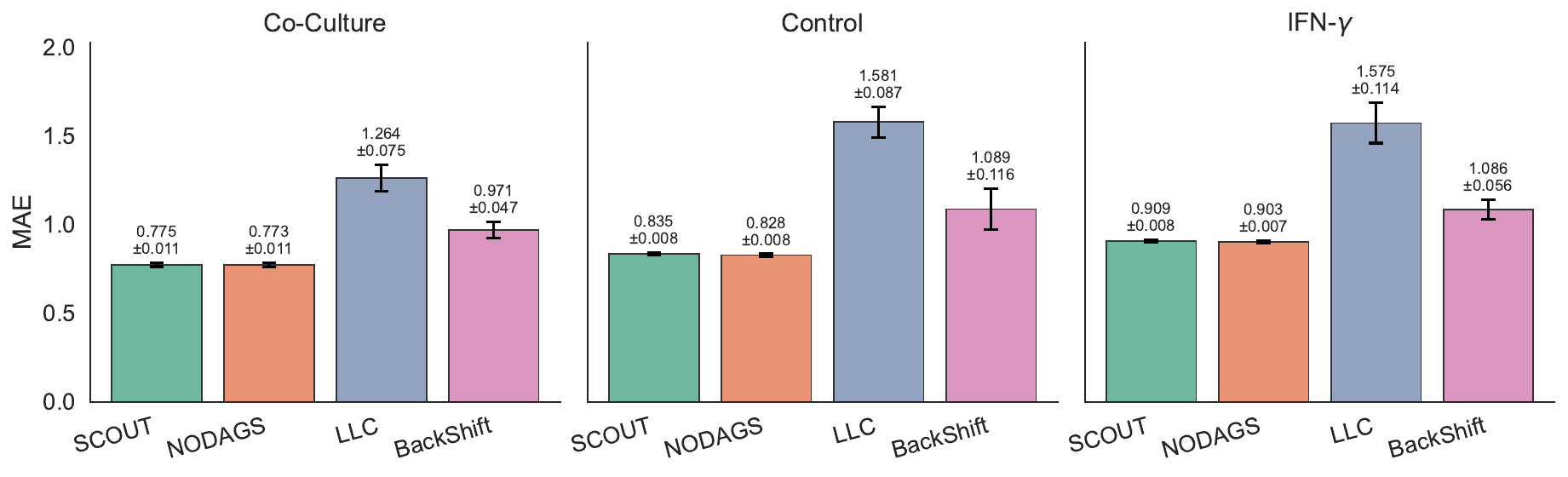}
  \caption{The performance comparison results on the Perturb-CITE-seq (Frangieh et al., 2021) gene perturbation dataset with unknown interventional targets. The bar graph shows mean absolute error (MAE) and its standard deviation across 5 trials.}
  \label{fig:known-perturb-cite-seq-mae}
\end{figure}

\begin{figure}[!h]
  \centering
  \includegraphics[width=0.7\columnwidth]{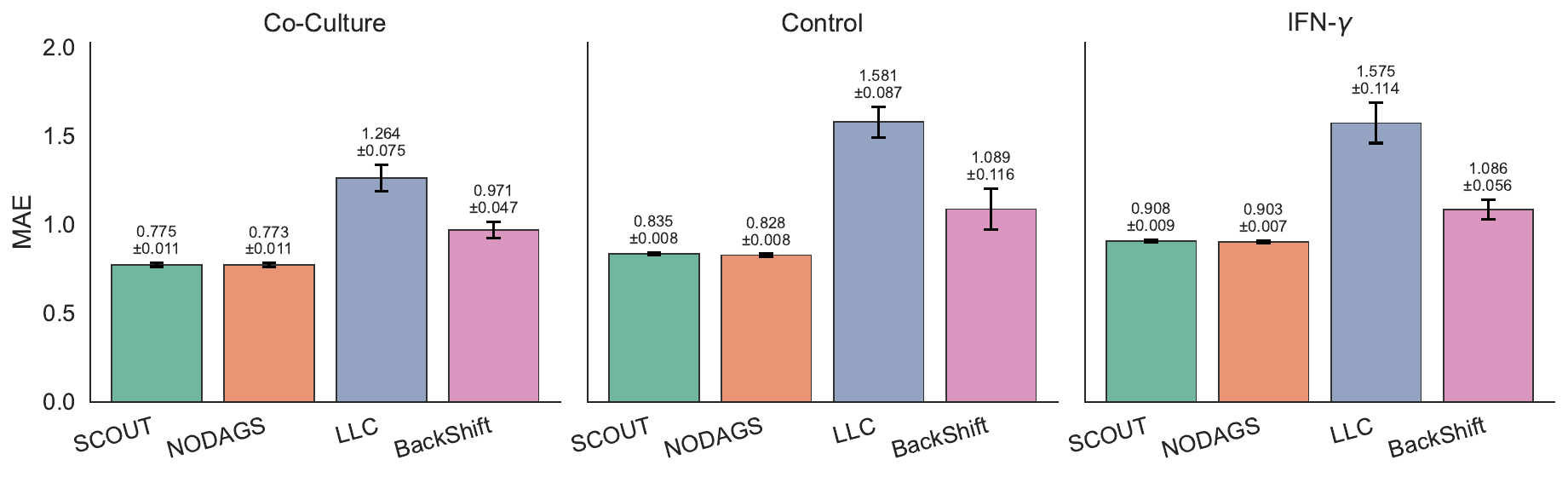}
  \caption{The performance comparison results on the Perturb-CITE-seq (Frangieh et al., 2021) gene perturbation dataset with unknown interventional targets. The bar graph shows interventional negative log-likelihood (I-NLL) and its standard deviation across 5 trials.}
  \label{fig:unknown-perturb-cite-seq-mae}
\end{figure}

\begin{figure}[!h]
  \centering
  \includegraphics[width=0.6\columnwidth]{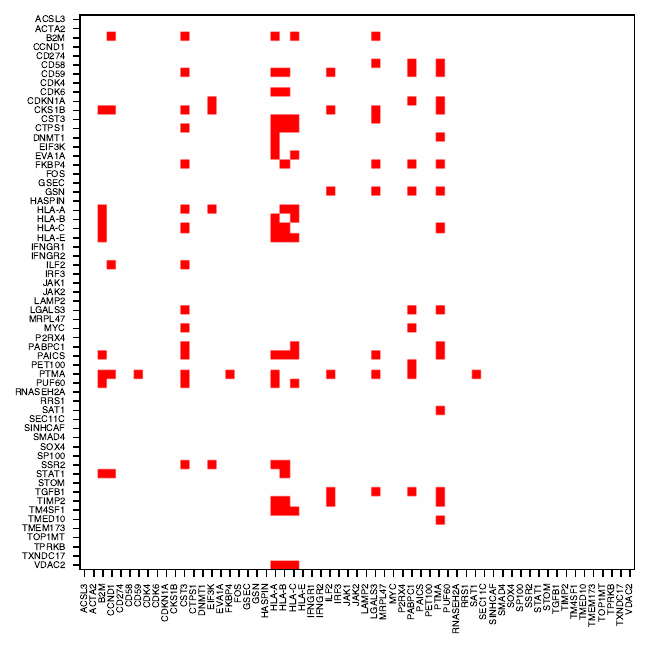}
  \caption{The adjacency matrix learnt by SCOUT for co-culture cell condition of Perturb-CITE-seq dataset \cite{44e58fcbe5ee4f998863a372408c3c2f}.}
  \label{fig:perturb-cite-seq-adj}
\end{figure}

\newpage

\section{Experimental Setup} \label{Appendix:exp_setup}
In this section, we explain how we generate our synthetic data and how we preprocess the gene perturbation dataset. We provide the implementation details for SCOUT, along with the baselines, and discuss our evaluation metrics. The code for SCOUT is available at the repository:
https://github.com/alparturkoglu/scout-master
\subsection{Data Generation for Synthetic Experiments}

For all types of SEM, we first sample a directed graph with edge density of 2 using the Erdős-Rényi (ER) random graph model. For the linear SEM, we sample the edge weights from the uniform distribution Unif$((-0.9,0.2) \cup (0.2,0.9))$ for contractive SEMs used in cyclic graphs, and we rescale the edge weight matrix to ensure its Lipschitz constant is less than 1. For the nonlinear SEM, we use a single-layer MLP with tanh (rectified linear unit) activation, $\boldsymbol{f} = tanh(\boldsymbol{W}^T\boldsymbol{x})$, where $\boldsymbol{W}$ is the weighted adjacency matrix. We ensure the contractivity by rescaling with the operator norm. For the noisy function interventions, we generate the intervened causal mechanism $\boldsymbol{\tilde{f}}$ by negating the signs of the weights of the last layer for $\boldsymbol{f}$. We can see from Proposition \ref{prop:contractivity for noisy function} that this approach preserves the contractivity of the combined intervened causal mechanism $\boldsymbol{f}^{(I_k)}$, thus satisfying Assumption \ref{assumption: Interventional stability}.

\begin{proposition}\label{prop:contractivity for noisy function}
For an interventional experiments $I_k \in \mathcal{I}$ and a contractive causal mechanism $\bm{f}$, Let $\tilde{\bm{f}} = \alpha \bm{f}$, for $|\alpha| \leq 1$. Then, the combined interventional causal mechanism $\bm{f}^{(I_k)} \triangleq ( \mathbf{U}_k\boldsymbol{f}+ (\mathbf{1}_d-\mathbf{U}_k)\boldsymbol{\tilde{f}})$ remains contractive.  
 % The causal function will remain contractive under noisy function interventions.   
\end{proposition}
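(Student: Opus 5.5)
The plan is to observe that $\bm{f}^{(I_k)}$ is just $\bm{f}$ composed on the left with a fixed diagonal matrix, and that this matrix cannot increase distances. Let $L<1$ be the contraction constant of $\bm{f}$. Since $\tilde{\bm{f}} = \alpha \bm{f}$, the combined mechanism is
\[
\bm{f}^{(I_k)} = \mathbf{U}_k\bm{f} + (\mathbf{I}_d - \mathbf{U}_k)\,\alpha\bm{f} = \mathbf{D}_k \bm{f},
\qquad
\mathbf{D}_k := \mathbf{U}_k + \alpha(\mathbf{I}_d-\mathbf{U}_k).
\]
Because $\mathbf{U}_k$ is a diagonal $0/1$ matrix, $\mathbf{D}_k$ is diagonal, with entry $1$ for $i\notin I_k$ and entry $\alpha$ for $i\in I_k$.

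The first step is to bound the operator norm of $\mathbf{D}_k$. For the Euclidean norm, and more generally for any $\ell_p$ norm or any absolute norm, the operator norm of a diagonal matrix is the largest absolute diagonal entry. Here that gives $\|\mathbf{D}_k\| = \max\{1,|\alpha|\} = 1$, since $|\alpha|\le 1$. Concretely, $\|\mathbf{D}_k \bm{v}\|_2^2 = \sum_{i\notin I_k} v_i^2 + \alpha^2\sum_{i\in I_k} v_i^2 \le \|\bm{v}\|_2^2$.

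The second step chains this bound with the contractivity of $\bm{f}$. For all $\bm{x},\bm{y}$,
\[
\|\bm{f}^{(I_k)}(\bm{x})-\bm{f}^{(I_k)}(\bm{y})\| = \|\mathbf{D}_k(\bm{f}(\bm{x})-\bm{f}(\bm{y}))\| \le \|\bm{f}(\bm{x})-\bm{f}(\bm{y})\| \le L\|\bm{x}-\bm{y}\|.
\]
Hence $\bm{f}^{(I_k)}$ is contractive with the same constant $L<1$. By the Banach fixed-point argument of Appendix~\ref{sec:solvability}, Assumption~\ref{assumption: Interventional stability} then holds. The data generation uses negated last-layer weights, which gives $\tilde{\bm{f}} = -\bm{f}$; this is the case $\alpha=-1$ and is therefore covered.

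The only delicate point is the choice of norm. For an arbitrary norm on $\mathbb{R}^d$, a diagonal matrix with entries in $[-1,1]$ need not be nonexpansive. I would therefore state the proof for the Euclidean norm, which is the norm that spectral normalization controls. I would add a remark that the same argument goes through verbatim for any absolute (monotone) norm, such as the $\ell_p$ norms.
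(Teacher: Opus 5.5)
Your proposal is correct and matches the paper's proof: write $\bm{f}^{(I_k)} = \mathbf{A}\bm{f}$ with $\mathbf{A} = \mathbf{U}_k + \alpha(\mathbf{I}-\mathbf{U}_k)$ diagonal, bound its induced norm by $\max\{1,|\alpha|\} = 1$, and chain this with the Lipschitz constant $L<1$ of $\bm{f}$. Your remark that $\|\mathbf{A}\| = \max_i |a_i|$ holds only for absolute norms such as the $\ell_p$ norms, and not for an arbitrary norm, sharpens a step the paper states without that qualification.
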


\begin{proof}
We should show that $\boldsymbol{f}^{(I_k)} \triangleq ( \mathbf{U}_k\boldsymbol{f}+ (\mathbf{1}_d-\mathbf{U}_k)\boldsymbol{\tilde{f}})$ will still be contractive if $\boldsymbol{f}$ is contractive and $\boldsymbol{\tilde{f}} =\alpha \boldsymbol{f}$ where $|\alpha| \leq1$.
\[
\boldsymbol{f}^{(I_k)}(\mathbf{x})
\;=
\;
\big(\mathbf{U}_k+\alpha(\mathbf{I}-\mathbf{U}_k)\big)\,\boldsymbol{f}(\mathbf{x})
\;=\;
\mathbf{A}\,\boldsymbol{f}(\mathbf{x}),
\]
where $\mathbf{A}\coloneqq \mathbf{U}_k+\alpha(\mathbf{I}-\mathbf{U}_k)$ is diagonal with entries
$a_i\in\{1,\alpha\}$.
For any $\mathbf{x},\mathbf{y}\in\mathbb{R}^d$,
\[
\|\boldsymbol{f}^{(I_k)}(\mathbf{x})-\boldsymbol{f}^{(I_k)}(\mathbf{y})\|
= \|\mathbf{A}\big(\boldsymbol{f}(\mathbf{x})-\boldsymbol{f}(\mathbf{y})\big)\|\]
\[\le \|\mathbf{A}\|\|\boldsymbol{f}(\mathbf{x})-\boldsymbol{f}(\mathbf{y})\|,
\]
where $\|\mathbf{A}\|$ denotes the induced operator norm. Since $\mathbf{A}$ is diagonal,
\[
\|\mathbf{A}\|=\max_i |a_i|=\max\{1,|\alpha|\}=1 \quad\text{(because }|\alpha|\le 1\text{)}.
\]
Thus,
\[
\|\boldsymbol{f}^{(I_k)}(\mathbf{x})-\boldsymbol{f}^{(I_k)}(\mathbf{y})\| \le \|\boldsymbol{f}(\mathbf{x})-\boldsymbol{f}(\mathbf{y})\|
\le L\,\|\mathbf{x}-\mathbf{y}\|,
\]
which shows that $\boldsymbol{f}^{(I_k)}$ is contractive with Lipschitz constant at most $L<1$.

Therefore, $\boldsymbol{f}^{(I_k)}$ is contractive.
\end{proof}

\subsection{Gene Perturbation Dataset}
The data were downloaded from the Broad Institute Single Cell Portal (accession SCP1064). Following the preprocessing protocol of \citep{pmlr-v206-sethuraman23a}, we removed cells with fewer than 500 detected genes and discarded genes expressed in fewer than 500 cells. To keep the analysis computationally tractable, we restricted attention to 61 perturbed genes (Table \ref{tab:genes}) selected from the full set of measured genes. We then split the data by experimental condition (co-culture, IFN-$\gamma$, and control), training and evaluating models separately within each condition.
.
\begin{table}[!ht]
    \centering
    \caption{The selected gene set from the Perturb-CITE-seq dataset \citep{44e58fcbe5ee4f998863a372408c3c2f}.}
    \vspace{0.2cm}
    \resizebox{\textwidth}{!}{
    \begin{tabular}{|llllllllll|}
    \hline
        ACSL3 & ACTA2 & B2M & CCND1 & CD274 & CD58 & CD59 & CDK4 & CDK6  & ~ \\
        CDKN1A & CKS1B & CST3 & CTPS1 & DNMT1 & EIF3K & EVA1A & FKBP4 & FOS  & ~ \\ 
        GSEC & GSN & HASPIN & HLA-A & HLA-B & HLA-C & HLA-E & IFNGR1 & IFNGR2  & ~ \\ 
        ILF2 & IRF3 & JAK1 & JAK2 & LAMP2 & LGALS3 & MRPL47 & MYC & P2RX4  & ~ \\ 
        PABPC1 & PAICS & PET100 & PTMA & PUF60 & RNASEH2A & RRS1 & SAT1 & SEC11C  & ~ \\ 
        SINHCAF & SMAD4 & SOX4 & SP100 & SSR2 & STAT1 & STOM & TGFB1 & TIMP2  & ~ \\ 
        TM4SF1 & TMED10 & TMEM173 & TOP1MT & TPRKB & TXNDC17 & VDAC2 & ~ &   & ~ \\ \hline
    \end{tabular}
    }
    \label{tab:genes}
\end{table}

\newpage
\subsection{Implementation Details}
\subsubsection{Architectural Details}
The main algorithm of SCOUT is implemented in Python using the \emph{PyTorch} library. The model takes as input an interventional dataset, where each sample is associated with its corresponding experiment index, along with a set of hyperparameters.

To model the causal mechanisms \(f\) and \(\tilde{f}\), we use a \texttt{gumbelSoftMLP} architecture without hidden layers, followed by a \texttt{tanh} activation. A learned Gumbel-sigmoid adjacency mask is used to select the parent set of each node, while intervention targets are inferred using a separate Gumbel-sigmoid intervention mask.

We do not employ temperature annealing; instead, the temperatures are fixed at \(1.0\) for graph structure learning and \(0.5\) for intervention-target learning. The algorithm is initialized with parameters \(\theta^{(0)}, \boldsymbol{M}_\phi^{(0)}, \boldsymbol{T}_\psi^{(0)}\), and proceeds according to the methodology described in Section~\ref{sec:methodology} to maximize the proposed score function.

The objective is optimized using the ADAM optimizer \cite{kingma2017adammethodstochasticoptimization}. The hyperparameters used during training are reported in Table~\ref{tab:hyperparams_desc}, and the overall training procedure is summarized in Algorithm~\ref{alg:SCOUT training}. All experiments were conducted on NVIDIA RTX6000 GPUs.

\begin{algorithm}[!h]
\caption{SCOUT Training}
\label{alg:SCOUT training}
\begin{algorithmic}[1]
\REQUIRE Interventional dataset $\{\boldsymbol{x}^{(i)}\}_{i=1}^{N}$, experiment index $\{k^{(i)}\}_{i=1}^{N}$,
regularization coefficients $\lambda_\gG$ and $\lambda_\si$, batch size $B$, learning rate $\alpha$.
\ENSURE Learned neural network parameters $\hat{\theta}$,
graph structure parameters $\hat{\boldsymbol{M}}_\phi$,
interventional target parameters $\hat{\boldsymbol{T}}_\psi$.
\STATE Initialize the parameters: $\theta^{(0)} \sim p_\theta(\theta)$,
$\boldsymbol{M}_\phi^{(0)} \sim p_{\boldsymbol{M}_\phi}(\boldsymbol{M}_\phi)$, and $\boldsymbol{T}_\psi^{(0)} \sim p_{\boldsymbol{T}_\psi}(\boldsymbol{T}_\psi)$
% \STATE Iteration counter: $t=0$
\WHILE{\textbf{NOT} \textsc{Converged}}
    \STATE Shuffle {$\{\boldsymbol{x}^{(i)},k^{(i)}\}_{i}$}
   \FOR{$t = 1$ to $N/B$}
    \STATE $\boldsymbol{M}^{(t)} \sim \boldsymbol{M}_\phi$
    \STATE $\boldsymbol{T}^{(t)} \sim \boldsymbol{T}_\psi$
    \STATE Compute $\mathrm{LOSS} = -\frac{1}{B}\sum_{i=1}^B
    \hat{\mathcal{S}}\!\Big(\theta,\ \{\boldsymbol{x}^{(j)}\}_{j=B(t-1)}^{Bt},\ \{k^{(j)}\}_{j=B(t-1)}^{Bt},\ \boldsymbol{M}^{(t)}, \boldsymbol{T}^{(t)})
    + \lambda_\gG \lVert \boldsymbol{M}_\phi\rVert_{1} + \lambda_\si \lVert \boldsymbol{T}_\psi\rVert_{1}$
    \STATE Backpropagate using $\mathrm{ADAM}(\mathrm{LOSS}, \boldsymbol{M}_\phi, \boldsymbol{T}_\phi ,\theta,  \alpha)$
    \STATE Perform $\mathrm{RESCALE}(\boldsymbol{f}_\theta,\boldsymbol{\tilde{f}}_\theta)$ to ensure $\boldsymbol{f}_\theta$ and $\boldsymbol{\tilde{f}}_\theta$ are $0.9$-Lipschitz
\ENDFOR
\ENDWHILE
\STATE \textbf{return} $\hat{\theta},\hat{\boldsymbol{M}}_\phi,\hat{\boldsymbol{T}}_\psi$
\end{algorithmic}
\end{algorithm}

\begin{table}[!h]
\centering
\caption{Hyperparameters used in our experiments.}
\label{tab:hyperparams_desc}
\begin{tabular}{@{}l l c@{}}
\toprule
Hyperparameter & Meaning & Value \\ \midrule
$\lambda_{\gG}$ & Graph sparsity regularizer & 0.001 \\
$\lambda_{\si}$ & Intervention family sparsity regularizer & 0.01\\
$\alpha$ & Learning rate & 0.01\\
$B$ & Batch size & 512\\
\bottomrule
\end{tabular}
\end{table}
\newpage
\subsubsection{Sensitivity Analysis}
To assess SCOUT's sensitivity to random seeds and initialization, we conducted the following experiment. Using a fixed ground-truth graph and intervention targets with shift interventions under Gaussian noise and a non-linear mechanism, we trained the model five times with different random initializations. The graph recovery AUPRC achieves a mean of 0.9854 with a standard deviation of 0.00439, while the intervention recovery AUPRC is 1.0 in all trials. These results indicate that SCOUT is robust to initialization and random seed variability.

To evaluate sensitivity to hyperparameters, we trained SCOUT using a range of hyperparameter settings on the same graph and intervention targets. As shown in Table~\ref{tab:hyperparam_compare}, SCOUT achieves near-perfect performance across a wide range of configurations, demonstrating robustness to hyperparameter choices.

To analyze the variance of the log-determinant estimator, we considered a linear SEM setting where the true determinant can be computed analytically. The estimator achieves a variance of \(3.297 \times 10^{-3}\) and a mean squared error (MSE) of \(2.905 \times 10^{-1}\).

\begin{table}[H]
\centering
\caption{SCOUT performance for various choices of hyperparameters $\alpha$, $\lambda_c$, $\lambda_r$}
\resizebox{!}{!}{%
\begin{tabular}{ccccc}
\toprule
Learning rate $(\alpha)$ & $\lambda_c$ & $\lambda_r$ & AUPRC & Int.\ AUPRC \\
\midrule
$10^{-2}$ & $10^{-3}$ & $10^{-2}$ & 1.00 & 1.00 \\
$10^{-1}$ & $10^{-3}$ & $10^{-2}$ & 0.95 & 1.00 \\
$10^{-3}$ & $10^{-3}$ & $10^{-2}$ & 1.00 & 1.00 \\
$10^{-2}$ & $10^{-3}$ & $10^{-3}$ & 1.00 & 0.92 \\
$10^{-2}$ & $10^{-3}$ & $10^{-1}$ & 1.00 & 1.00 \\
$10^{-2}$ & $10^{-2}$ & $10^{-2}$ & 0.90 & 1.00 \\
$10^{-2}$ & $10^{-4}$ & $10^{-2}$ & 1.00 & 1.00 \\
\bottomrule
\end{tabular}%
}
\label{tab:hyperparam_compare}
\end{table}
\newpage
\subsubsection{Computational Cost Analysis}
Figure \ref{fig:training time} reports the training times of SCOUT and the baseline methods. In contrast to the gradient-based approaches, LLC and BACKSHIFT require no stochastic optimization; as a result, they are substantially faster. NODAGS-Flow has lower runtime than SCOUT, but its formulation does not support unknown-target estimation or neural spline flows for exogenous noise transformation. All runtimes are measured on graphs with 
$d=10$ nodes, using training data with all single-node interventional datasets; SCOUT and NODAGS are trained for 200 epochs.
\begin{figure}[!h]
  \centering
  \includegraphics[width=0.5\columnwidth]{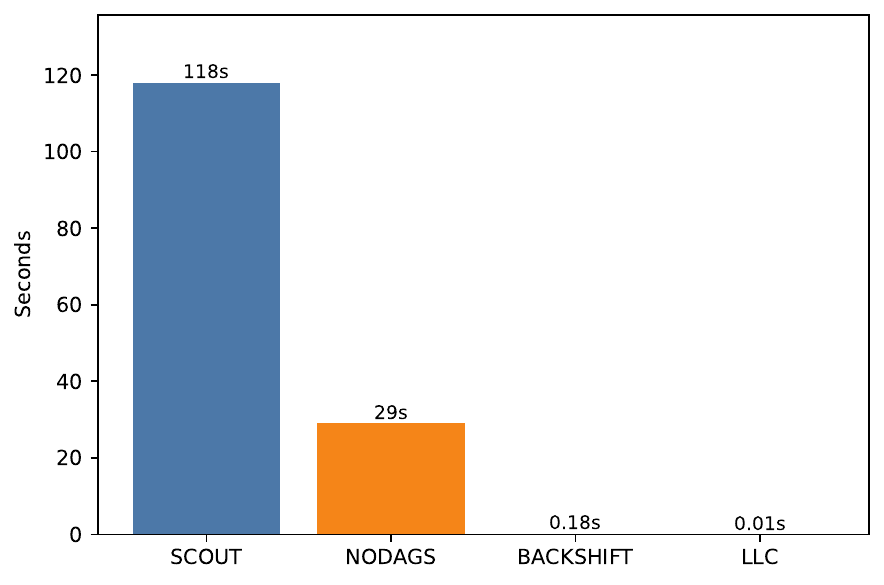}
  \caption{Training time comparison between SCOUT and the baselines.}
  \label{fig:training time}
\end{figure}

Let \(N\) be the number of nodes, \(B\) the minibatch size, \(M\) the number of samples scored at likelihood time, and \(K\) the number of power-series terms in the residual log-det estimator. In the current setup, \(\mathbb{E}[K]=4\).

For one training step, the cost can be written explicitly as
\[
\begin{aligned}
T_{\mathrm{train}}
&=
\underbrace{\mathcal{O}(BN^{2})}_{\text{sample Gumbel adjacency mask}}
+\underbrace{\mathcal{O}(BN^{3})}_{\text{compute }f(x)}
+\underbrace{\mathcal{O}(BN^{3})}_{\text{compute }f_i(x)} \\
&\quad+
\underbrace{\mathcal{O}((K+1)BN^{3})}_{\text{Neumann-series residual log-det}}
+\underbrace{\mathcal{O}(BN)}_{\text{1D spline-flow log-det}}
+\underbrace{\mathcal{O}((K+1)BN^{3})}_{\text{backpropagation}} \\
&\quad+
\underbrace{\mathcal{O}(N^{2})}_{\text{optimizer \& Lipschitz projection}} .
\end{aligned}
\]

For one likelihood evaluation on \(M\) samples, the cost is
\[
\begin{aligned}
T_{\mathrm{lik}}
&=
\underbrace{\mathcal{O}(MN^{2})}_{\text{sample Gumbel adjacency mask}}
+\underbrace{\mathcal{O}(MN^{3})}_{\text{compute }f(x)}
+\underbrace{\mathcal{O}(MN^{3})}_{\text{compute }f_i(x)} \\
&\quad+
\underbrace{\mathcal{O}(KMN^{3})}_{\text{power-series residual log-det}}
+\underbrace{\mathcal{O}(MN)}_{\text{1D spline-flow log-det}} .
\end{aligned}
\]

Hence, both training and likelihood evaluation are dominated by cubic scaling in the number of nodes, with overall leading-order costs \(\mathcal{O}((K+1)BN^{3})\) and \(\mathcal{O}(KMN^{3})\), respectively.

We have measured the per-epoch computation time as well as training memory footprint for SCOUT as a function of nodes and experiments. Tables \ref{tab:time_memory_nodes} and \ref{tab:time_memory_experiments_fixed_nodes} indicate that the model scales linearly with the number of experiments where as the main computational bottleneck is the scaling with the number of nodes which is expected because of the Jacobian calculation.
\begin{table}[H]
\centering
\caption{Per-epoch computation time and training memory as a function of the number of nodes.}
\label{tab:time_memory_nodes}
\begin{tabular}{cccc}
\hline
Nodes & Experiments & Time / epoch (s) & Train-State Mem Est. (KiB) \\
\hline
10 & 10 & 0.59  & 21.7656  \\
30 & 30 & 8.30  & 102.7031 \\
50 & 50 & 75.00 & 233.6406 \\
70 & 70 & 197.0 & 414.5781 \\
\hline
\end{tabular}
\end{table}

% Table 2: Time/Memory vs Number of Experiments (fixed 10 nodes)
\begin{table}[H]
\centering
\caption{Per-epoch computation time and training memory as a function of the number of experiments.}
\label{tab:time_memory_experiments_fixed_nodes}
\begin{tabular}{cccc}
\hline
Nodes & Experiments & Time / epoch (s) & Train-State Mem Est. (KiB) \\
\hline
10 & 10 & 0.59 & 21.7656 \\
10 & 20 & 1.20 & 23.3281 \\
10 & 30 & 1.80 & 24.8906 \\
\hline
\end{tabular}
\end{table}

\subsubsection{Baselines}
For NODAGS-Flow, we used the authors’ public implementation \citep{pmlr-v206-sethuraman23a} and kept all hyperparameters at their default values. We implemented LLC following the description in \citet{JMLR:v13:hyttinen12a}; our implementation is provided in the baselines folder of the supplementary materials. For DCDI and BACKSHIFT, we used the official author-provided codebases available at https://github.com/slachapelle/dcdi and https://github.com/christinaheinze/backShift, respectively. For BACADI, we used the public repository at https://github.com/haeggee/bacadi. UT-IGSP was run using the causaldag Python package, and SERGIO simulations were generated using the official SERGIO repository at https://github.com/PayamDiba/SERGIO.

\subsection{Evaluation Metrics}
We use the Area Under Precision-Recall Curve (AUPRC) as our general evaluation metric. AUPRC computes the area under the precision-recall curve evaluated at various threshold values (the higher the better).
\[
\text{Precision}=\frac{TP}{TP+FP}, \qquad
\text{Recall}=\frac{TP}{TP+FN}
\]
where $TP$, $FP$, and $FN$ denote true positives, false positives, and false negatives, respectively.

\subsubsection{Graph Proxy for Nonlinear SEM via Squared Jacobian}
\label{sec:jacobian-proxy}
For nonlinear SEMs the ground-truth causal graph is not  explicitly available in the form of a linear weight matrix. To obtain a reference adjacency for evaluation, we calculate  a squared Jacobian proxy for the causal mechanism $\boldsymbol{f}$,
We estimate the squared Jacobian entries
\begin{equation}
S_{ij} \triangleq \mathbb{E}_{\mathbf{x}\sim \mathcal{X}}
\left[\left(\frac{\partial f_i(\mathbf{x})}{\partial x_j}\right)^2\right],
\label{eq:sensitivity}
\end{equation}
where $\mathcal{X}$ is a chosen set of probe inputs. In practice, we approximate \eqref{eq:sensitivity} empirically using automatic differentiation and averaging over a finite set of sampled points:
\begin{equation}
\widehat{S}_{ij} =
\frac{1}{N}\sum_{n=1}^{N}
\left(\frac{\partial f_i(\mathbf{x}^{(n)})}{\partial x_j}\right)^2.
\label{eq:sensitivity-empirical}
\end{equation}
 We threshold the sensitivity matrix to obtain a binary adjacency:
\begin{equation}
\widehat{A}_{ij} =
\mathbb{I}\left[\widehat{S}_{ij} > \tau\right],
\label{eq:threshold}
\end{equation}
with threshold $\tau = 0.001$ in our experiments.

%%%%%%%%%%%%%%%%%%%%%%%%%%%%%%%%%%%%%%%%%%%%%%%%%%%%%%%%%%%%%%%%%%%%%%%%%%%%%%%
%%%%%%%%%%%%%%%%%%%%%%%%%%%%%%%%%%%%%%%%%%%%%%%%%%%%%%%%%%%%%%%%%%%%%%%%%%%%%%%

\end{document}